\newtheorem{theorem}{Theorem}
\newtheorem{remark}{Remark}
\newtheorem{lemma}{Lemma}
\newtheorem{assumption}{Assumption}
\newtheorem{definition}{Definition}
\def\eqref#1{equation~\ref{#1}}
\def\1{\bm{1}}
\def\eps{{\epsilon}}
\DeclareMathAlphabet{\mathsfit}{\encodingdefault}{\sfdefault}{m}{sl}
\SetMathAlphabet{\mathsfit}{bold}{\encodingdefault}{\sfdefault}{bx}{n}
\newcommand{\E}{\mathbb{E}}
\DeclareMathOperator*{\argmin}{arg\,min}
\definecolor{amethyst}{rgb}{0.6, 0.4, 0.8}
\newtcbox{\alertinline}[1][red]
  {on line, arc = 0pt, outer arc = 0pt,
    colback = #1!20!white, colframe = #1!50!black,
    boxsep = 0pt, left = 1pt, right = 1pt, top = 2pt, bottom = 2pt,
    boxrule = 0pt, bottomrule = 1pt, toprule = 1pt}
\newtcolorbox{mybox}{colframe = red!50!black}
\Crefname{table}{Table}{Tables}
\crefname{equation}{}{}
\Crefname{assumption}{Assumption}{Assumptions}
\newtheorem{corollary}{Corollary}[section]
\newtheorem{example}{Example}[section]
\newcommand{\A}{\mathcal{A}}
\renewcommand{\O}{\mathcal{O}}
\newenvironment{proof-sketch}{\noindent\textbf{\textit{Proof sketch}:}}{\hfill$\square$}
\newcommand{\widesim}[2][1.5]{
	\mathrel{\overset{#2}{\scalebox{#1}[1]{$\sim$}}}
}
\newcommand{\iidsim}{\widesim[2.33]{\mathrm{i.i.d.}}}	
\renewcommand{\E}{\mathbb{E}}					
\renewcommand{\A}{\mathcal{A}}
\renewcommand{\O}{\mathcal{O}}
\renewcommand{\H}{\mathcal{H}}
\newcommand{\T}{\mathcal{T}}
\newcommand{\HS}{\mathrm{HS}}
\newcommand{\tr}[1]{\operatorname{tr}\left(#1\right)}		
\renewcommand{\L}{\mathcal{L}}
\renewcommand{\argmin}{\operatornamewithlimits{arg\,min}}
\renewcommand{\geq}{\geqslant}
\renewcommand{\leq}{\leqslant}
\renewcommand{\eps}{\varepsilon}
\newcommand{\C}{\mathcal{C}}
\newcommand{\half}{\frac{1}{2}}
\newcommand\normx[1]{\left\Vert#1\right\Vert}
\title{Minimax Optimal Kernel Operator Learning via Multilevel Training}
\author{Jikai Jin}
\affiliation{School of Mathematical Sciences, Peking University, BeiJing, China}
\email{jkjin@pku.edu.cn}
\author{Yiping Lu}
\affiliation{ICME, Stanford University, CA, USA}
\email{yplu@stanford.edu}
\author{Jose Blanchet}
\affiliation{Management Science \& Engineering, Stanford University, CA, USA}
\email{jose.blanchet@stanford.edu}
\author{Lexing Ying}
\affiliation{Department of Mathematics, Stanford University, Stanford, CA, USA}
\email{lexing@stanford.edu}
\keywords{Reproducing Kernel Hilbert Space, Minimax Optimal Rate, Operator Learning, Kernel Mean Embedding} 
\begin{document}

\begin{abstract}
Learning mappings between infinite dimensional function spaces has achieved empirical success in many disciplines of machine learning, including generative modeling, machine learning solving Partial Difference Equations , functional data analysis, causal inference, and multi-agent reinforcement learning. In this paper, we study the statistical limit of learning a Hilbert-Schmidt operator between two infinite-dimensional Sobolev reproducing kernel Hilbert spaces. We establish the information-theoretic lower bound in terms of the Sobolev Hilbert-Schmidt norm and show that a regularization that learns the spectral components below the bias contour and ignores the ones that above the variance contour can achieve the optimal learning rate. At the same time, the spectral components between the bias and variance contours give us the flexibility in designing computationally feasible machine learning algorithms. Based on this observation, we develop a multilevel kernel operator learning algorithm that is optimal when learning linear operators between infinite-dimensional function spaces.
\end{abstract}

\section{Introduction}
\label{sec-intro}
The supervised learning of operators between two infinite-dimensional spaces has attracted 
attention in many machine learning applications, such as  scientific computation \citep{lu2019deeponet,li2020fourier,de2021convergence,li2018organic,li2021principles}, functional data analysis \citep{crambes2013asymptotics,hormann2015note,wang2020functional}, learning mean-field games \citep{guo2019learning,wang2020breaking}, conditional probability regression \citep{song2009hilbert,song2013kernel,muandet2017kernel} and econometrics \citep{singh2019kernel,muandet2020dual,dikkala2020minimax}. Despite the empirical success of operator learning, the statistical limit of learning an infinite-dimensional operator is poorly studied.  In this paper, we study the problem of learning Hilbert Schmidt operators between infinite-dimensional Sobolev reproducing kernel Hilbert spaces $\mathcal{H}_K^\beta$ and $\mathcal{H}_L^\gamma$ with given kernels $k$ and $l$ respectively and $\beta,\gamma\in[0,1)$ \citep{adams2003sobolev,christmann2008support,fischer2020sobolev}. Our goal is to derive the optimal sample complexity to learn the linear operator, \emph{i.e.} how much data is required to achieve a certain performance level. 

We first establish an information-theoretic lower bound for learning a Hilbert-Schmidt operator between Sobolev spaces respect to a general Sobolev norm. Our information-theoretic lower bound indicates that the optimal learning rate is determined by the minimum of two polynomial rates: one is purely decided by the input Sobolev reproducing kernel Hilbert space and its evaluating norm, while the other one is purely determined by the output space along with its evaluating norm. The rate is novel in the sense that all existing results \citep{fischer2020sobolev,li2022optimal,de2021convergence} only establish rates that depend on the parameter of input space. The reason is all previous works \citep{talwai2022sobolev,li2022optimal,de2021convergence} only consider the case of the output space as a subspace of a trace bounded reproducing kernel Hilbert space but not a general Sobolev space.  We refer to  \Cref{remark:assumption} for detail comparisons.

To design a learning algorithm for approximating an infinite-dimensional operator, we need to learn a finite-dimensional restriction instead of the whole operator, as the latter would result in infinite variance. The finite-dimensional selection leads to bias error but decreases the variance. A natural task is then to study the shape of regularization that can lead to the optimal bias-variance trade-off and achieve the optimal learning rate. In this paper, we consider the bias and variance contour at the scale of optimal learning. Once the regularization enables one to learn all the spectral parts above the bias contour and below the variance contour, the learning is optimal. Finally, utilizing the region between the bias contour and variance contour, we developed a multilevel training algorithm \citep{lye2021multi,li2021multilevel} which first learns the mapping on low frequency and then successively fine-tunes the machine learning models to fit the high-frequency output. The intuition of our algorithm aligns with the original motivation of multilevel Monte Carlo \citep{giles2008multilevel,giles2015multilevel}: we use the next level to reduce bias while keeping the variance at the same scale. We demonstrate that such a multilevel algorithm can achieve an optimal non-parametric rate for linear operator learning.

\subsection{Related Work}
\label{subsec:related-work}
\paragraph{Machine Learning Based PDE Solver} Solving partial differential equations (PDEs) plays a prominent role in many scientific and engineering discipline, such as physics, chemistry, operation management, macro-economy, etc. The recent deep learning breakthrough has drawn attention to solving PDEs via machine learning methods \citep{raissi2019physics,han2018solving,sirignano2018dgm,yu2018deep,khoo2019solving,chen2021solving}. The statistical power and computational cost of these problem is well-studied by recent papers \citep{lu2021machine,lu2022sobolev,nickl2020convergence,nickl2020polynomial}. This paper focuses on operator learning \citep{chen1995universal,long2018pde,long2019pde,feliu2020meta,khoo2021solving,lu2019deeponet,li2020fourier,kovachki2021neural,stepaniants2021learning}, \emph{i.e.} learning a map between two infinite dimensional function spaces. For example, one can learn a PDE solver that maps from the boundary condition to the solution or an inverse problem that maps from the boundary measurement to the coefficient field. In terms of the mathematical foundation of operator learning, \citep{liu2022deep} considers the learning rate of non-parametric operator learning. However, non-parametric functional data analysis often suffers from slower-than-polynomial convergence rates \citep{mas2012lower}, due to the small ball probability problem for the probability distributions in infinite dimensional spaces \citep{delaigle2010defining}. The most relevant works are \citep{lin2011fast,reimherr2015functional,de2021convergence}, which consider the rates for learning a linear operator. For the comparison between our work and \citep{de2021convergence}, see \Cref{remark:assumption}.

\paragraph{Learning with kernel.} Supervised least square regression in RKHS and its generalization capability have been thoroughly studied \citep{caponnetto2007optimal,smale2007learning,de2005learning,rosasco2010learning,mendelson2010regularization}. The minimax optimality with respect to the Sobolev norm has been discussed recently in \citep{fischer2020sobolev,liu2020estimation,lu2022sobolev}. Our paper is highly related to recent works \citep{schuster2020kernel,mollenhauer2020nonparametric,talwai2022sobolev,li2022optimal} on identifying the Sobolev norm learning rate for the kernel mean embedding\citep{song2009hilbert,song2013kernel,muandet2017kernel,park2020measure,singh2020kernels}, which can also formulated as learning an operator. The difference of our work see \Cref{remark:assumption}.

\paragraph{Multilevel Monte Carlo} By combining biased estimators with multiple stepsizes, multilevel Monte Carlo (MLMC) \citep{giles2008multilevel,giles2015multilevel} dramatically improves the rate of convergence and achieves in many settings the canonical square root convergence rate associated with unbiased Monte Carlo \citep{rhee2015unbiased,blanchet2015unbiased}. Multilevel Monte Carlo can also be used for random variable with infinite variance \citep{blanchet2016malliavin,chen2020unbiased}. To the best of our knowledge, this is the first paper that provides optimal sample complexity for multilevel Monte Carlo type algorithm for infinite variance problems in the non-parametric regime. Very recently,  \citep{lye2021multi,li2021multilevel} developed a multilevel machine learning Monte Carlo algorithm (ML2MC) / multilevel fine-tuning algorithm for learning solution maps, by first learning the map on coarsest grid and then successively fine-tuning the network on samples generated at
finer grids. The authors also showed that, following the telescoping in MLMC, the multilevel training procedure can reduce the generalization error without spending more time on generating training samples. \citep{schafer2021sparse,boulle2022learning} consider such multi-scale algorithm for learning Green's function. However, the statistical power of such algorithm is still under investigation. Another difference with \citep{boulle2022learning} is that we consider the Green function in $H^{-1}$ norm rather than the $\ell_1$ norm used in \citep{boulle2022learning}. In this paper, we qualify a specific setting where this multilevel procedure can and is necessary to achieve the minimax optimal learning rate.

\subsection{Contribution}
\begin{itemize}
    \item We derive a novel information-theoretic lower bound of learning a linear operator between two infinite-dimensional Sobolev reproducing kernel Hilbert spaces. The optimal learning rate is a minimum of two polynomial rates, one only dependent on the parameters of the input space while the other only on the parameters of the output space. The first rate aligns with the previous works \citep{li2022optimal}, while the second lower bound is novel to the literature.
    \item We study the shape of regularization that can lead to the optimal learning rate. One should learn all the spectral parts under the bias contour at the level of the optimal learning rate but not the spectral parts above the variance contour at the level of learning rate. This enables the estimator to enjoy an optimal balance of bias-variance.
    \item We qualify a specific setting where a multilevel training procedure \citep{lye2021multi,li2021multilevel} is necessary and capable of achieving a minimax optimal learning rate for learning a linear operator. We achieve the optimal learning rate via $O(\ln \ln n)$ ensemble of ridge regression models. This is different from finite-dimensional operator learning where a single level estimator can be optimal.
\end{itemize}

\section{Problem Formulation}
\label{sec-problem}

\subsection{Preliminary}


Let $P_K$ be a distribution over the input space $\H_K$ and define covariance operator $\C_{KK}= \E_{u\sim P_K} u\otimes u$. Consider its spectral decomposition $\C_{KK} = \sum_{i=1}^{+\infty} \mu_i^2 e_i \otimes e_i$, where $\{\mu_i^{\half}e_i\}_{i=1}^{+\infty}$ is an orthogonal eigenbasis and $\{\mu_i\}$ is the corresponding eigenvalues of $\C_{KK}$ (here the $g\otimes h$ is an operator defined as $g\otimes h=gh^\ast:f\rightarrow \left\langle f,h\right\rangle g$). In the typical machine learning applications, the test distribution is the same as the training distribution, so we can assume that $\H_K = \left\{ \sum_{i} a_i \mu_i^{\half}e_i: \{a_i\}_{i=1}^{\infty}\in\ell_2 \right\}$ without loss of generality. Note that this automatically holds in the context of learning the conditional mean embedding (CME) \citep{fischer2020sobolev,talwai2022sobolev,li2022optimal}.


Following \citep{christmann2008support,fischer2020sobolev}, we define the interpolation Sobolev space $\H_K^\beta=\left\{f=\sum_i a_i(\mu_i^{\frac{\beta}{2}}e_i):\{a_i\}_{i=1}^\infty\in l^2\right\}$ for any $\beta>0$, equipped with Sobolev norm defined by the inner product $\left<\sum_i a_i(\mu_i^{\beta/2}e_i),\sum_i b_i(\mu_i^{\beta/2}e_i)\right>_{\mathcal{H}_K^\beta}=\sum_i a_i b_i$. For the output space, we fix a user-specified distribution $Q_L$ and a reproducing Kernel Hilbert Space. We can similarly define the covariance operator $\C_{Q_L}$ and the Sobolev space $\H_L^{\gamma}$. Natural choices of $Q_L$ include some distribution on kernel functions $\left\{\ell(y,\cdot):y\in Y\right\}$ of $\H_L$ induced by some  distribution $Q_L$ on $Y$, so that $\C_{Q_L}$ is a kernel integral operator with respect to $Q_L$ and $\H_L^{\gamma}$ is an interpolation space between $\H_L$ and $\L^2(Q_L)$; see \Cref{ex-pde} for a specific choice of $Q_L$ and its practical implications.

Following \citep{li2022optimal}, in this paper we consider the Hilbert-Schmidt norm between two Sobolev Spaces for all the operators, which is defined as following.

\begin{definition}[$(\beta,\gamma)$-norm]
\label{def:betagammanorm}
    Let $T:\H_K\mapsto\H_L$ be a possibly unbounded linear operator. $I_{1,\beta,P_{K}}:\H_K\mapsto\H_K^{\beta}, \beta\in(0,1)$ is the canonical embedding mapping that takes $u\in\H_K$ to the same element $u$ in the larger space $\H_K^{\beta}$, and $I_{1,\gamma,Q_L}:\H_L\mapsto\H_L^{\gamma},\gamma\in(0,1)$ is similarly defined. Then the $(\beta,\gamma)$-norm of $T$ is defined as
    \begin{equation}
        \notag
        \normx{T}_{\beta,\gamma} = \normx{(I_{1,\gamma,Q_{L}}^\ast)^{\dagger}\circ T \circ I_{1,\beta,P_{K}}^\ast}_{\HS\left(\H_K^{\beta},\H_{L}^{\gamma}\right)}=  \normx{\mathcal{C}_{Q_L}^{-(1-\gamma)/2}\circ T\circ \mathcal{C}_{KK}^{(1-\beta)/2}}_{\HS\left(\H_K,\H_{L}\right)},
    \end{equation}
    where we omit the dependence of $\normx{\cdot}_{\beta,\gamma}$ on $P_{K}$ and $Q_L$ since it will always be clear from context.
\end{definition}

\subsection{Problem Formulation}
We consider the problem of learning an unknown linear operator $\A_0:\H_K\mapsto\H_L$ between two reproducing kernel Hilbert spaces corresponding to kernl $k$ and $l$ respectively. We are given $N$ noisy data pairs $(u_i,v_i),1\leq i\leq N$ related by 
\begin{equation}
    \label{data-model}
    v_i = \A_0 u_i + \eps_i
\end{equation}
where $u_i\iidsim P_{K}$ for some unknown distribution $P_{K}$ and $\eps_i$ is the noise drawn from some distribution with zero mean that may depend on $u_i$. 
We use $P_{KL}$ for the joint distribution of $(u_i,v_i)$. Denote $\C_{KK} = \E_{u\sim P_K}u\otimes u$,  $\C_{KL}=\E_{(u,v)\sim P_{KL}}u\otimes v$ and its adjoint $\C_{LK}=\C_{KL}^\ast$ be uncentered cross-covariance operators associated with $P_{KL}$. Then we can reformulate the ground turth operator as  $\A_0=\C_{LK}\C_{KK}^{\dagger}$, where $\dagger$ is the pseudo-inverse \citep{talwai2022sobolev,li2022optimal}. With the goal of understanding the relative difficulty of learning different types of linear operators, we investigate the sample efficiency of learning $\A_0$ under certain source assumptions imposed on the data model \Cref{data-model}. Source condition \citep{caponnetto2007optimal,mendelson2010regularization,steinwart2009optimal,rosasco2010learning,fischer2020sobolev} assumes that the learning target lies in a parameterized function class and study the learning rate for different problems with different hardness. Specifically, the source condition assume that the learning target is bounded in certain Sobolev norm. In this paper, we consider learning an operator with bounded $(\beta,\gamma)$-norm, which is the Hilbert-Schmidt norm that maps from $\mathcal{H}_K^\beta$ to $\mathcal{H}_L^\gamma$. We consider the generalization error/convergence rate under another $(\beta',\gamma')$-norm as in \citep{fischer2020sobolev,lu2022sobolev,talwai2022sobolev,li2022optimal}.

\begin{remark}
\label{remark:assumption} 
Although recent works have considered similar problems in the context of conditional mean embedding  \citep{talwai2022sobolev,li2022optimal} and functional data analysis \citep{de2021convergence}. In all these papers, the output space is a trace bounded reproducing kernel Hilbert space \cite[Assumption 2.14 (vi)]{de2021convergence} rather than the general parameterized Sobolev space in our paper. 
\end{remark}

We then list all the assumptions imposed on the underlying kernel for our theoretical results. We follow the standard capacity assumptions and embedding properties used in kernel regression \citep{fischer2020sobolev,talwai2022sobolev,li2022optimal}.

\begin{assumption}[Capacity Condition of the Covariance]
\label{asmp-eigendecay}
The eigenvalues $\{\mu_i\}_{i\geq 1}$ of the covariance operator $\mathcal{C}_{KK} = \E_{u\sim P_{K}}u\otimes u$ satisfies $\mu_i\propto i^{-\frac{1}{p}}$ for some $p\in(0,1)$. Similarly, the eigenvalues $\{\rho_i\}_{i\geq 1}$ of the covariance operator $\mathcal{C}_{Q_L} = \E_{v\sim Q_L}v\otimes v$ satisfies $\rho_i\propto i^{-\frac{1}{q}}$ for some $q\in(0,1)$.
\end{assumption}



\begin{assumption}[$\ell_\infty$ Embedding Property of the Input RKHS]
    \label{asmp-bounded-input-data}
    There exists a smallest $\alpha\in(0,1)$ such that $\normx{\left(I_{1,\alpha,P_{K}}^\ast\right)^{\dagger}f}_{\H_K^{\alpha}} \leq A_1$ a.s. under $P_{K}$ for some $A_1<+\infty$.
\end{assumption}

\begin{assumption}[$\ell_\infty$  Embedding Property of the Output RKHS]
    \label{asmp-bounded-output-data}
    There exists $A_2 < +\infty$ such that $\normx{\A_0 u}_{\H_{L}} \leq A_2$ holds for all $u \in \mathrm{supp}(P_K)$, except from a $P_K$-null set.
\end{assumption}

\begin{assumption}[Moment Condition]
\label{asmp-noise}
There exists an operator $V:\H_{L}\mapsto\H_{L}$ with $\tr{V} \leq \sigma^2$ such that for every $u \in \mathrm{supp}(P_K)$, we have  $\E_{v\sim P_{KL}(\cdot\mid u)}\left[ \left((v-\A_0 u) \otimes (v-\A_0 u)\right)^k \right] \preceq \frac{1}{2}(2k)! R^{2k-2}V$ holds for all $k\ge 2$.
\end{assumption}


\begin{assumption}[Source Condition]
\label{asmp-beta-gamma-norm}
 $\A_0$ is bounded under $(\beta,\gamma)$-norm i.e. $\normx{\A_0}_{\beta,\gamma} \leq B$ for some $B\in(0,+\infty)$.
\end{assumption}

\subsection{Examples}
\label{sec-example}
In this section, we will introduce two examples of our theory.  The first one is about learning a differential operator, for example inferring an advection-diffusion model  \citep{portone2022bayesian} from observations or predicting the future \citep{long2018pde,lu2019deeponet,li2020fourier,feliu2020meta,huang2021meta}. The second example is about learning conditional mean embedding \citep{song2009hilbert,song2013kernel,muandet2017kernel}, which represents a conditional distribution as an RKHS element. Thus conditional distribution regression can be reduced to a kernel operator learning. Our theory can also be used for linear inverse problem such as radial electrical impedance tomography (EIT) \citep{mueller2012linear} and the severely ill-posed inverse boundary problem for the Helmholtz
equation with unknown wave-number parameter \citep{agapiou2014bayesian}. For detailed discussion, we refer to \citep[Section 1.3]{de2021convergence}

\begin{example}[Learning differential operators]
\label{ex-pde}
Suppose that the ground-truth operator $\A_0 = \Delta^{t}$ where $\Delta$ is the Laplacian and $t \in\mathbb{Z}$. Let $\H_K = \H^{m+2t}([0,1])$ be the Sobolev space with smoothness $m+2t$ on $[0,1]$ and $\H_L = \H^{m}([0,1])$, then $\A_0$ is a bounded operator from $\H_K$ to $\H_L$ which corresponds to the $\beta=\gamma=1$ case. However, we will see below that we can obtain a better characterization of the learning error using our theory.

Consider for example that the input has mean zero and the Mat{\'e}rn-type covariance operator $C_{KK} = \sigma^2\left(-\Delta+\tau^2 I\right)^{-s}$. Its eigenvalues satisfy $\mu_n \propto n^{-2s}$. On the other hand, we choose $Q_L$ to be a distribution supported on $\left\{\ell(y,\cdot):y\in[0,1]\right\}$ induced by a uniform distribution on $[0,1]$, where $\ell$ is the kernel function of $\H_L$. Then $\mathcal{C}_{Q_L}$ is essentially the kernel integral operator on $\H_L$ w.r.t. the uniform distribution, and its eigenvalues are $\rho_n \propto n^{-2m}$. The assumption $||\A_0||_{\beta,\gamma} < +\infty$ is satisfied if and only if $(1-\gamma)m < (1-\beta)s-\frac{1}{2} \Rightarrow \gamma > 1 - \frac{2(1-\beta)s-1}{2m}$.

\end{example}

\begin{example}[Conditional mean embedding]
\label{ex-cme}
Suppose that we would like to learn the conditional distribution $P(y\mid x)$ from a data set $\{(x_i,y_i):1\leq i\leq N\}\subset X\times Y$ where $x_i\iidsim P_X$. Let $\H_K$ and $\H_L$ be two RKHSs on $X$ and $Y$ respectively, with measurable kernel $k(\cdot,\cdot)$ and $\ell(\cdot,\cdot)$. Then we can define a conditional mean embedding (CME) operator $C_{Y\mid X}$ that satisfies
$$
C_{Y\mid X} k(x,\cdot) = \E_{Y\mid x}\ell(Y,\cdot) =: \mu_{Y\mid x}, \text{ and } \E_{Y\mid x} \left[ g(Y) \right] = \left\langle g,\mu_{Y\mid x}\right\rangle \forall x\in X.
$$
We choose $\A_0 = C_{Y\mid X}$. In this case, $\mathcal{C}_{KK} = \E_{P_X} k(X,\cdot)\otimes k(X,\cdot)$. \Cref{asmp-bounded-input-data} states that $\sup_{x\in X}k^{\alpha}(x,x) = A_1$, while \Cref{asmp-bounded-output-data} is equivalent to $\sup_{x\in X}\normx{\mu_{Y\mid x}} \leq A_2$ (for simplicity we only focus on the case $\zeta = 1$). According to \Cref{asmp-beta-gamma-norm}, we assume that $\normx{C_{Y\mid X}}_{\beta,\gamma}\leq B$.

The mis-specified setting where $\beta<1$ has been studied in previous work \citep{fischer2020sobolev,talwai2022sobolev,li2022optimal}. However, they only consider the case $\gamma = 1$. Our results also cover the case $\gamma<1$, which allows us to obtain theoretical guarantee for computing conditional expectation of the larger function class $\H_L^{\gamma}$. 

\end{example}


\section{Information Theoretic Lower Bound}
\label{sec-lower-bound}
In this section, we provide an information-theoretic lower bound for the convergence rate of the operator learning problem formulated in \Cref{sec-problem}.

\begin{theorem}
\label{thm:lowerbound}
    Suppose that $\H_K$ and $\H_L$ are two Hilbert spaces, $P_K$ and $Q_L$ are probability distributions on $\H_K$ and $\H_L$ respectively such that \Cref{asmp-eigendecay,asmp-bounded-input-data} hold. Then for any estimator $\mathcal{L}:\left(\H_K\times \H_L\right)^{\otimes N}\mapsto \HS\left(\H_K^{\beta},\H_L^{\gamma}\right)$, there exists a linear operator $\A_0$ and a joint data distribution $P_{KL}$ with marginal distribution $P_K$ on $\H_K$ satisfying  \Cref{asmp-bounded-output-data,asmp-noise,asmp-beta-gamma-norm}, such that with probability $\geq 0.99$ over $(u_i,v_i)\iidsim P_{KL}$ we have
    \begin{equation}
        \notag
        \normx{\mathcal{L}\left(\{(u_i,v_i)\}_{i=1}^N\right)-\A_0}_{\beta',\gamma'}^2 \gtrsim N^{-\min\left\{\frac{\max\{\alpha,\beta\}-\beta'}{\max\{\alpha,\beta\}+p},\frac{\gamma'-\gamma}{1-\gamma}\right\}}.
    \end{equation}
\end{theorem}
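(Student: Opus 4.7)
The strategy is the standard two-construction approach for minimax lower bounds in nonparametric estimation: build one family of hard instances that forces the input-space rate and another that forces the output-space rate, then combine. Since each construction gives an independent lower bound on the worst-case error, the combined lower bound is the pointwise maximum of the two errors, which corresponds to the minimum of the two exponents as stated. Throughout, I would work under the simplifying assumption of Gaussian noise $\eps_i\sim\gN(0,\sigma^2 V)$ (which satisfies \Cref{asmp-noise}), since this only makes matching constants easier.

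For the input-space term $N^{-(\max\{\alpha,\beta\}-\beta')/(\max\{\alpha,\beta\}+p)}$, I would use a Fano-Varshamov-Gilbert construction. Fix a single output direction $f\in\H_L$ whose $(\beta,\gamma)$- and $(\beta',\gamma')$-norms are $O(1)$, let $S=\{M+1,\ldots,2M\}$ index a mid-scale block of eigenvectors of $\C_{KK}$, and define the family $\A_\omega = \delta\sum_{i\in S}\omega_i\, e_i\otimes f$ for $\omega\in\{-1,+1\}^M$. Varshamov-Gilbert yields a packing $\Omega_0$ of size $2^{\Omega(M)}$ with pairwise Hamming distance $\geq M/8$. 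Under Gaussian noise the KL divergence between data distributions is $\text{KL}\big(P_{\A_\omega}^{N},P_{\A_{\omega'}}^{N}\big) \lesssim N\delta^2 \sum_{i\in S}(\omega_i-\omega'_i)^2\mu_i \lesssim N\delta^2 M\mu_M$, so Fano's inequality is applicable as soon as $N\delta^2 M\mu_M \lesssim M$, i.e. $\delta^2\lesssim 1/(N\mu_M)$. \Cref{asmp-beta-gamma-norm} gives $\delta^2\sum_{i\in S}\mu_i^{1-\beta}\lesssim B^2$, i.e. $\delta^2\lesssim M^{(1-\beta)/p-1}$ under the eigendecay in \Cref{asmp-eigendecay}. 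Balancing these constraints yields $M\sim N^{p/(\beta+p)}$, and the pairwise $(\beta',\gamma')$-norm gap is $\delta^2\sum_{i\in S}\mu_i^{1-\beta'}\sim N^{-(\beta-\beta')/(\beta+p)}$. To get the $\max\{\alpha,\beta\}$ factor rather than just $\beta$, one has to verify that the family also respects \Cref{asmp-bounded-input-data,asmp-bounded-output-data}; the $\ell_\infty$ embedding controls $|\langle u,e_i\rangle|$ via $\|u\|_{\H_K^\alpha}\leq A_1$, and when $\alpha>\beta$ this boundedness requirement is tighter than the source condition, forcing a smaller $\delta$ and effectively substituting $\alpha$ for $\beta$ in the preceding balance.

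For the output-space term $N^{-(\gamma'-\gamma)/(1-\gamma)}$, I would run a Le Cam two-point argument that varies the \emph{output} direction rather than the input. Let $\eta_j$ be the $j$-th eigenvector of $\C_{Q_L}$ with eigenvalue $\rho_j$, fix a top input eigenvector $e_1$, and compare $\A_0=0$ to $\A_1=\delta\, e_1\otimes \eta_j$. The source condition \Cref{asmp-beta-gamma-norm} reads $\delta\mu_1^{(1-\beta)/2}\rho_j^{-(1-\gamma)/2}\leq B$, while Gaussian KL gives $\text{KL}(P^N_{\A_0},P^N_{\A_1})=\tfrac{N\delta^2\mu_1}{2\sigma^2}$, so Le Cam is applicable once $\delta^2\lesssim 1/N$. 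Saturating the source condition at this $\delta$ forces $\rho_j\sim N^{-1/(1-\gamma)}$ (achievable by choosing $j$ via \Cref{asmp-eigendecay}), and the target gap is $\|\A_1\|_{\beta',\gamma'}^2=\delta^2\mu_1^{1-\beta'}\rho_j^{-(1-\gamma')}\sim \rho_j^{\gamma'-\gamma}\sim N^{-(\gamma'-\gamma)/(1-\gamma)}$.

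The main technical obstacle is not the abstract testing reduction but the simultaneous verification of all four assumptions along the hard family. Specifically, in the Fano construction the random sign pattern $\omega$ combined with the almost-sure $\H_K^\alpha$-bound on $u$ interacts nontrivially to produce the $\max\{\alpha,\beta\}$ factor, and one must carefully choose the block $S$, the scale $\delta$, and the output direction $f$ so that \Cref{asmp-bounded-output-data} holds uniformly over $\omega\in\Omega_0$ without absorbing a factor that would spoil the rate. In the Le Cam step, the choice of $j$ must likewise respect \Cref{asmp-bounded-output-data} for $\A_1$, which is essentially automatic since only a single rank-one perturbation is involved. Once the admissibility of both families is established, the standard Fano/Le Cam inequalities yield the two separate polynomial lower bounds, and taking their maximum delivers the stated minimum-of-two-exponents rate with constant probability.
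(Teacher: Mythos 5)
Your proposal is essentially the paper's own argument: the paper constructs a single hard family $\A_{\omega}=\sqrt{32\eps/(m_1K)}\sum_{i,j}\omega_{ij}\,\mu_{i+m_1}^{\beta'/2}\rho_{j+m_2}^{1-\gamma'/2}\,f_{j+m_2}\otimes e_{i+m_1}$, applies Varshamov--Gilbert plus Fano with exactly your two admissibility constraints (the $(\beta,\gamma)$ source bound together with an $(\alpha,1)$ bound forced by \Cref{asmp-bounded-input-data,asmp-bounded-output-data}, which is precisely where $\max\{\alpha,\beta\}$ enters), and then specializes to $m_2=1$ (your input-frequency Fano block, balancing to $m_1\sim N^{p/(\max\{\alpha,\beta\}+p)}$) or $m_1=1$ (your output-frequency perturbation at $\rho_{m_2}\sim N^{-1/(1-\gamma)}$). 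The one adjustment you need is in the output-space step: a pure two-point Le Cam test yields failure probability at most a constant $\le 1/2$, not the stated $0.99$, so replace it---as the paper does---by a Fano/Varshamov--Gilbert family over a sufficiently large constant number $K$ of output directions at frequency $m_2$, with Gaussian noise supported on those directions (trace at most $\sigma^2$); this leaves your rate calculation unchanged.
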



\begin{remark} Our lower bound is composed of a minimum of two parts. The first rate $N^{-\frac{\max\{\alpha,\beta\}-\beta'}{\max\{\alpha,\beta\}+p}}$ is the minimax optimal Sobolev learning rate for kernel regression \citep{fischer2020sobolev,talwai2022sobolev,li2022optimal,lu2022sobolev} and is fully determined  by the parameter of the input Sobolev reproducing kernel Hilbert space. Our second rate $N^{-\frac{\gamma'-\gamma}{1-\gamma}}$ is novel to the literature. This bound shows that how the infinite dimensional problem is different from finite dimensional regression problem and is fully determined  by the parameter of the output Sobolev reproducing kernel Hilbert space. Our lower bound shows that the hardness of learning a linear operator is determined by the harder part between the input and output spaces. We will explain why the lower bound has such structure in \Cref{remark4.1} and \Cref{contours_fig}.
\end{remark}

\section{On the  Shape of Regularization}
\label{sec-estimator}
In this section, we aim to understand the shape of regularization so that the constructed estimator $\hat{\A}$ based on $N$ i.i.d. data $\{(u_i,v_i)\}_{i=1}^n\sim P_{KL}^{\otimes n}$ for $1\leq i\leq N$ enjoys an optimal learning rate. 

Compared with existing approaches where a regularized least-squares estimator can achieve statistical optimality \citep{fischer2020sobolev,talwai2022sobolev,li2022optimal,de2021convergence} under $(\beta,1)$-norm, we study the learning rate under the $(\beta,\gamma)$-norm  ($\beta'\in(0,\beta),\gamma'\in(\gamma,1)$) which is defined in \Cref{def:betagammanorm} as $\normx{\hat{\A}-\A_0}_{\beta',\gamma'}=\normx{\mathcal{C}_{Q_L}^{-\frac{1-\gamma'}{2}}\left(\hat{\A}-\A_0\right)\mathcal{C}_{KK}^{\frac{1-\beta'}{2}}}_{\HS(\H_K,\H_L)}$. The norm of the additional $\mathcal{C}_{Q_L}^{-\frac{1-\gamma'}{2}}$ term is unbounded which make our setting harder than the convergence in $(\beta,1)$-norm in existing works. Since $\mathcal{C}_{Q_L}^{-\frac{1-\gamma'}{2}}$ is bounded when restricted to the finite-dimensional space $\mathrm{span}\left(\rho_i^{\frac{1}{2}}f_i:1\leq i\leq n\right)$, we should also include another bias-variance trade-off via regularizing in the output shape. As a result, we are interested in answering the following question
\begin{center}
    \emph{What is the optimal way to combine the regularization in the input space and regularization in the output space? \emph{i.e.} What is the optimal shape of regularization?} 
\end{center}
To answer this question, we investigate the problem in the spectral space, \emph{i.e.} considering the spectral representation of operator $\A_0=\sum_{i,j=1}^{+\infty}a_{ij}\mu_i^{\frac{\beta}{2}}e_i\otimes \rho_j^{1-\frac{\gamma}{2}}f_j$.  The problem of estimating $\A$ then reduces to learning the coefficients ``matrix" $(a_{ij})_{i,j=1}^\infty$. The source condition \Cref{asmp-beta-gamma-norm} enforces $\sum_{i,j=1}^\infty a_{ij}^2\le B$. We show in \Cref{sec-bias} that regularizing the basis $e_i\otimes f_j$ will introduce a bias of order $\normx{a_{ij}\mu_i^{\frac{\beta}{2}}e_i\otimes \rho_j^{1-\frac{\gamma}{2}}f_j}_{\beta',\gamma'}^2 = a_{ij}^2\mu_i^{\beta-\beta'}\rho_j^{\gamma'-\gamma} \propto i^{-\frac{\beta-\beta'}{p}}j^{-\frac{\gamma'-\gamma}{q}}$ under the $(\beta',\gamma')$-norm. On the other hand, when $\alpha\leq\beta+p$, we show in \Cref{sec-var}  that the variance of learning $(i,j)$ from noisy data scales as $\frac{1}{N}\mu_i^{-\beta'}\rho_j^{-(1-\gamma')} \propto \frac{1}{N}i^{\frac{\beta'}{p}}j^{\frac{1-\gamma'}{q}}$. Since the variance would accumulate for a fixed $j$, learning $(i,j)$ for $i\leq i_{\max}$ results in a variance of $\propto \frac{1}{N}i_{\max}^{\frac{\beta'+p}{p}}j^{\frac{1-\gamma'}{q}}$. (Similar analysis can be carried out for the ${\alpha>\beta+p}$ case as well, but the variance now scales as $\frac{1}{N}i_{\max}^{\frac{\beta'+\alpha-\beta}{p}}j^{\frac{1-\gamma'}{q}}$; see \Cref{appsec:upper-proof} for detailed derivations.) In summary, we need to make bias-variance trade off in the $(i,j)-$plane, \emph{i.e.} decide whether we should learn or regularize over the basis $e_i\otimes f_j$.

\subsection{Intuitive Explanation}

{In this section, we provide an intuitive explanation of our lower bound (Theorem \ref{thm:lowerbound}). As the previous paragraph explains, learning an operator is equivalent to learning an "infinite" size matrix with larger variance and smaller bias on the right upper corner. The core proof of this paper is considering proper bias-variance trade-off. Suppose one wants to construct an estimator with $N^{-\theta}$ learning rate. They need to learn every spectral component below the bias counter at the level of $N^{-\theta}$. Otherwise, the bias itself will become larger than $N^{-\theta}$. At the same time, they still need not learn any spectral component above the variance counter at the level of $N^{-\theta}$. Otherwise, the variance itself will become larger than $N^{-\theta}$. Thus, to enable an feasible estimator achieves learning rate at $N^{-\theta}$, the variance counter at the level of $N^{-\theta}$ should always be above the bias counter at the level of $N^{-\theta}$ (Figure \ref{contours_fig}). Depending on the hyperparameters, there are two different ways to achieve this goal, as shown in Figure \ref{contours_fig}. Each situation is mapped to the rate depending only on the input space, and the rate depends only on the output space. In Section \ref{sec-multilevel}, we demonstrated how a multilevel algorithm could be used to satisfy this requirement.}

\subsection{Regularization via variance contour}
\label{subsec-main-idea}

The underlying idea of regularization is that some components are intrinsically hard to learn due to large variance; these components are then neglected by adding regularization and are counted as bias. The remaining components are easy to learn due to controllable variance. This intuition works well when the estimation error results from the noise of the data and is well-studied in a line of works \citep{fischer2020sobolev,de2021convergence,talwai2022sobolev,li2022optimal}. This idea still works in our setting, but we need to re-evaluate the bias and variance of each component. Since we work with the Hilbert-Schmidt norm, this can be done in a coordinate-wise manner, meaning that we can look at each $a_{ij}$ separately and decide whether to neglect it (contribute to bias) or to learn it from data (contribute to variance).

\begin{figure*}
 
    \centering
    \begin{subfigure}[a]{0.47\textwidth}
        \centering
        \includegraphics[width=\textwidth]{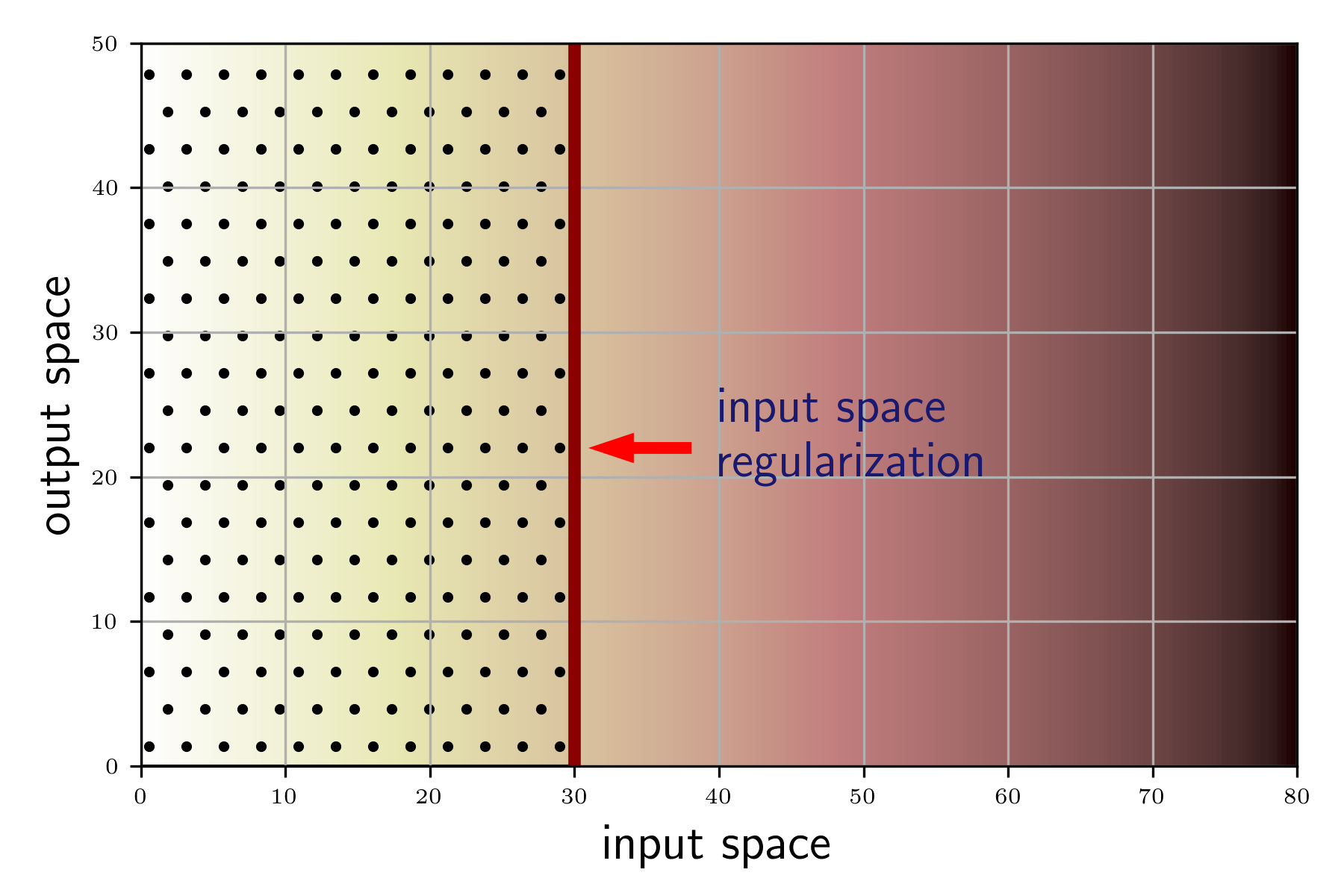}
    \end{subfigure}
    \qquad
    \begin{subfigure}[a]{0.47\textwidth}  
        \centering 
        \includegraphics[width=\textwidth]{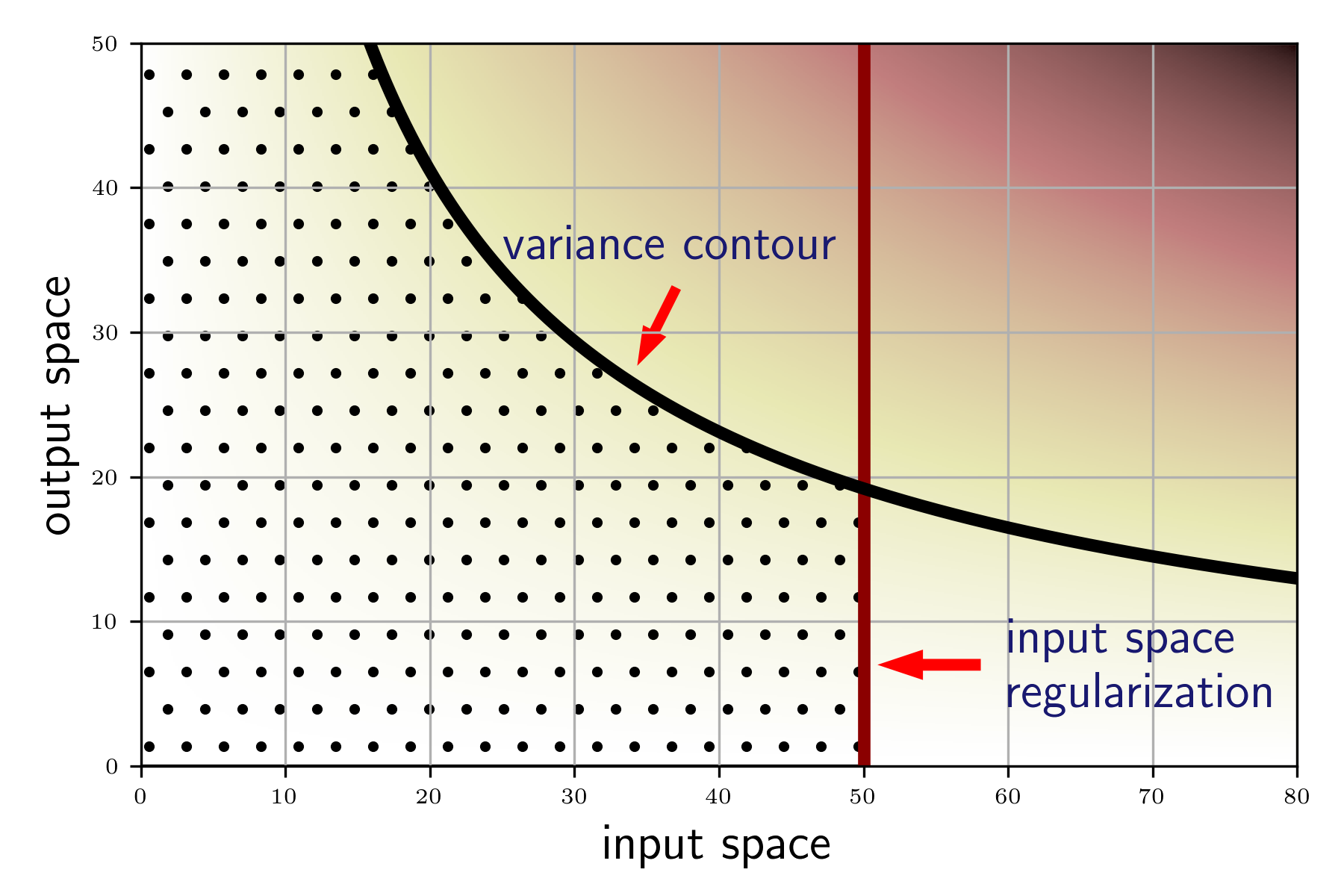}
    \end{subfigure}
    \caption[]%
    {An illustration of our proposed regularization scheme. Left: the regularized least-squares estimator studied in previous works \citep{fischer2020sobolev,de2021convergence,talwai2022sobolev} which only regularizes on the input space. Right: our double regularization scheme via variance contour can achieve the optimal convergence rate in our setting.}
    \label{reg_fig}
\end{figure*}

Since the variance term measures the hardness of learning, we naturally introduce the notion of \textit{variance contour}, which is a curve on the $\mathbb{R}_{+}^2$ plane on which all points induce the same order of variance (here we work with real coordinates for convenience, although we only care about integer points). Formally, we fix an arbitrary constant $C > 0$ and define
\begin{equation}
    \label{variance-contour}
    \ell_{C,\mathtt{var}} = \left\{ (x,y)\in\mathbb{R}_+^2: x^{\frac{\beta'+\max\{\alpha-\beta,p\}}{p}}y^{\frac{1-\gamma'}{q}} = C\right\}.
\end{equation}

A reasonable regularization scheme is then to learn all coordinates $(i,j)\in\mathbb{Z}_+^2$ \textit{below} the curve $\ell_{C,\mathtt{var}}$ and `regularize out' the remaining coordinates that are difficult to learn due to large variances. This can gives us the estimator with smallest estimator at give variance level. 
This observation motivates us to construct our estimator as
\begin{equation}
    \label{our-estimator}
    \hat{\A} = \sum_{j=1}^{y_N} \left( \rho_j^{\half}f_j\otimes \rho_j^{\half}f_j\right) \hat{\C}_{LK}\left(\hat{\C}_{KK}+\lambda_j I\right)^{-1},
\end{equation}
where {$\hat C_{LK} = \frac{1}{N}\sum_{i=1}^N v_i  \otimes u_i$,} $\lambda_j (1\leq j\leq y_N= C^{\frac{q}{1-\gamma'}})$ are the regularization coefficients imposed on different dimensions of the output space. According to \Cref{variance-contour} and noting that $\mu_i \propto i^{-\frac{1}{p}}$, we define
\begin{equation}
    \label{variance-contour-coeff}
    \lambda_j = \max\left\{ \left(j^{-\frac{1-\gamma'}{q}}N^{\max\left\{ 1-\frac{\beta-\beta'}{\max\{\alpha,\beta+p\}},\frac{1-\gamma'}{1-\gamma} \right\}}\right)^{-\frac{1}{\beta'+p}}, c_0\left(\frac{N}{\log N}\right)^{-\frac{1}{\alpha}}  \right\},
\end{equation}
with $C = N^{\max\left\{ 1-\frac{\beta-\beta'}{\max\{\alpha,\beta+p\}},\frac{1-\gamma'}{1-\gamma} \right\}}$ in \Cref{variance-contour}. The additional $N^{-\frac{1}{\alpha}}$ term in \Cref{variance-contour-coeff} is needed for controlling the error of approximating $\C_{KK}$ via $\hat{\C}_{KK}$ (cf. \Cref{conc-covariance}) which is standard in the Sobolev learning literature \citet{fischer2020sobolev,talwai2022sobolev,lu2022sobolev}. The following theorem describes the convergence rate of our estimator defined by \Cref{our-estimator,variance-contour-coeff}. 
\begin{theorem}
\label{thm-upper}
    Consider the estimator $\hat{\A}$ defined by \Cref{our-estimator,variance-contour-coeff}. Suppose that \Cref{asmp-eigendecay,asmp-bounded-input-data,asmp-bounded-output-data,asmp-noise,asmp-beta-gamma-norm} hold, then there exists a universal constant $C$ such that with probability $\geq 1-e^{-\tau}$, we have
    \begin{equation}
        \notag
        \normx{\hat{\A}-\A_0}_{\beta',\gamma'}^2 \leq C \tau^2 \left(\frac{N}{\log N}\right)^{-\min\left\{\frac{\beta-\beta'}{\max\{\alpha,\beta+p\}},\frac{\gamma'-\gamma}{1-\gamma}\right\}} \log^2 N.
    \end{equation}
\end{theorem}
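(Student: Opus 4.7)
The plan is to reduce the $(\beta',\gamma')$-squared error to a coordinate-wise sum over the output eigenbasis $\{f_j\}$ of $\C_{Q_L}$. Writing $P_j=f_j\otimes f_j$ for the rank-one projection and using $\C_{Q_L}^{-(1-\gamma')}=\sum_j\rho_j^{-(1-\gamma')}P_j$,
\begin{equation}
\|\hat\A-\A_0\|_{\beta',\gamma'}^2 = \sum_{j\geq 1}\rho_j^{-(1-\gamma')}\,\|P_j(\hat\A-\A_0)\,\C_{KK}^{(1-\beta')/2}\|_{\HS}^2.\notag
\end{equation}
The key observation is that for $j\leq y_N$ the component $P_j\hat\A=P_j\hat\C_{LK}(\hat\C_{KK}+\lambda_j I)^{-1}$ is exactly a classical kernel ridge regression (KRR) estimator for the scalar response $\langle v,f_j\rangle$ regressed on $u$ with penalty $\lambda_j$, while $P_j\hat\A=0$ for $j>y_N$. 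The analysis thus splits into three pieces: (i) a per-$j$ KRR regularization bias for $j\leq y_N$, (ii) a per-$j$ KRR variance for $j\leq y_N$, and (iii) a truncation bias from the tail $j>y_N$. Write $\theta:=\min\{(\beta-\beta')/\max\{\alpha,\beta+p\},\,(\gamma'-\gamma)/(1-\gamma)\}$ for the target exponent.

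\textbf{Bias pieces.} For $j\leq y_N$ I would use the standard split
\begin{equation}
P_j(\hat\A-\A_0)=-\lambda_j P_j\A_0(\hat\C_{KK}+\lambda_j I)^{-1}+P_j(\hat\C_{LK}-\A_0\hat\C_{KK})(\hat\C_{KK}+\lambda_j I)^{-1},\notag
\end{equation}
and bound the first term via spectral calculus: expanding $P_j\A_0$ in the input eigenbasis and using $\beta-\beta'\in(0,1)$ together with the pointwise estimate $\lambda_j^2\mu_i^{\beta-\beta'}/(\mu_i+\lambda_j)^2\leq\lambda_j^{\beta-\beta'}$ yields a per-$j$ bias contribution of order $\rho_j^{\gamma'-\gamma}\lambda_j^{\beta-\beta'}\|P_j\A_0\|_{\beta,\gamma}^2$ to the $(\beta',\gamma')$-norm squared. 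The source assumption \Cref{asmp-beta-gamma-norm} controls $\sum_j\|P_j\A_0\|_{\beta,\gamma}^2\leq B^2$, and the prescription of $\lambda_j$ in \Cref{variance-contour-coeff} is calibrated so that $\rho_j^{\gamma'-\gamma}\lambda_j^{\beta-\beta'}\lesssim N^{-\theta}$ uniformly for $j\leq y_N$, bounding the total regularization bias by $B^2 N^{-\theta}$. The truncation tail from $j>y_N$ is at most $B^2\rho_{y_N}^{\gamma'-\gamma}\lesssim y_N^{-(\gamma'-\gamma)/q}$, and the choice of $y_N$ implicit in \Cref{variance-contour-coeff} pins this to the same $N^{-\theta}$.

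\textbf{Variance piece.} The floor $\lambda_j\geq c_0(N/\log N)^{-1/\alpha}$ is imposed precisely so that a standard operator Bernstein argument (as in \citep{fischer2020sobolev,talwai2022sobolev,li2022optimal}) guarantees $\|(\C_{KK}+\lambda_j I)^{1/2}(\hat\C_{KK}+\lambda_j I)^{-1/2}\|\leq 2$ simultaneously for every $j\leq y_N$ on an event of probability $\geq 1-e^{-\tau}$, allowing $\hat\C_{KK}$ to be swapped for $\C_{KK}$ up to constants. Combined with the moment condition \Cref{asmp-noise} and a Bernstein bound for the centered cross-covariance, the per-$j$ variance scales as $\frac{\tau^2\log N}{N}\rho_j^{-(1-\gamma')}\lambda_j^{-(\beta'+\max\{\alpha-\beta,p\})}$, where the exponent on $\lambda_j$ comes from the effective-dimension integral $\sum_i\mu_i^{2-\beta'}/(\mu_i+\lambda_j)^2$ in the generic regime $\alpha\leq\beta+p$ and from the $\ell_\infty$ embedding \Cref{asmp-bounded-input-data} in the saturation regime $\alpha>\beta+p$. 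This is exactly the scaling encoded by the variance contour \Cref{variance-contour}, so with the prescribed $\lambda_j$ every summand lies on the contour and summing over $j\leq y_N$ gives total variance $\lesssim\tau^2 N^{-\theta}\log^2 N$.

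\textbf{Main obstacle.} The principal technical difficulty is making the Bernstein-type bounds hold \emph{uniformly} across the $y_N\to\infty$ different values of $\lambda_j$ (handled by a union bound together with the deliberate $c_0(N/\log N)^{-1/\alpha}$ floor), and handling the saturation case $\alpha>\beta+p$ together with the generic case in a single unified estimate---since the effective dimension switches functional form between the two regimes, the source-condition and embedding-condition contributions to the per-$j$ variance must be tracked separately and then recombined without loss of the target exponent. Once those uniform bounds are in place, assembling the final rate from the per-$j$ contributions is essentially spectral-calculus bookkeeping.
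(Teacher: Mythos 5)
Your overall route is essentially the paper's: project onto the output eigenbasis, treat each retained coordinate $j\le y_N$ as a ridge regression with its own $\lambda_j$, and split the error into regularization bias, truncation tail, and variance. Your bias accounting (per-$j$ bias of order $\rho_j^{\gamma'-\gamma}\lambda_j^{\beta-\beta'}$ weighted by the source condition, plus the tail $B^2\rho_{y_N}^{\gamma'-\gamma}\lesssim N^{-\theta}$ with $\theta=\min\{\tfrac{\beta-\beta'}{\max\{\alpha,\beta+p\}},\tfrac{\gamma'-\gamma}{1-\gamma}\}$) matches the paper, and using the floor $c_0(N/\log N)^{-1/\alpha}$ to control $(\hat\C_{KK}+\lambda_j I)^{-1}$ uniformly over $j$ is also what the paper does.

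The gap is in how you close the variance step. You state that the per-$j$ variance is of order $\frac{\tau^2\log N}{N}\rho_j^{-(1-\gamma')}\lambda_j^{-(\beta'+\max\{\alpha-\beta,p\})}$, observe that with the prescribed $\lambda_j$ ``every summand lies on the contour,'' and then sum over $j\le y_N$. But lying on the contour means each summand is itself of order $N^{-\theta}$ (up to logs), and $y_N=N^{\frac{q}{1-\gamma'}\max\{1-\frac{\beta-\beta'}{\max\{\alpha,\beta+p\}},\frac{1-\gamma'}{1-\gamma}\}}$ is a positive power of $N$; the sum is therefore of order $y_N N^{-\theta}$, which destroys the rate. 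The per-$j$ bound you quote implicitly gives every output direction the full noise strength $\sigma^2$. What actually saves the argument is that the noise in direction $f_j$ only has variance $\sigma_j^2=\langle\rho_j^{1/2}f_j,V\rho_j^{1/2}f_j\rangle$ and \Cref{asmp-noise} makes $V$ trace class, so $\sum_{j}\sigma_j^2\le\tr{V}\le\sigma^2$; the total variance is then $\lesssim \frac{\sigma^2}{N}\max_{1\le j\le y_N} j^{\frac{1-\gamma'}{q}}\lambda_j^{-(\beta'+p)}$ --- a maximum over $j$, not a sum. The paper implements this by concatenating the per-$j$ contributions into a single $\H_K^{y_N}$-valued random variable and applying one vector Bernstein inequality, with the second moment controlled through $\tr{V}\le\sigma^2$; the fluctuation coming from the regularized-bias part (the analogue of the paper's $U_j^2$) needs the corresponding summability $\sum_j\normx{\bigl(\C_{Q_L}^{-(1-\gamma)/2}\A_0\C_{KK}^{(1-\beta)/2}\bigr)^*\rho_j^{1/2}f_j}^2\le B^2$ from \Cref{asmp-beta-gamma-norm} for the same reason. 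Your per-$j$ union bound fixes the failure probability but not this accounting: without exploiting the trace/source summability across output directions, the final summation over $j$ does not yield the claimed $\tau^2 N^{-\theta}\log^2N$.
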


\begin{remark}
    Compared with \Cref{thm:lowerbound}, our upper bound is optimal up to logarithmic factors when $\alpha\leq\beta$. The optimal learning rate in the $\alpha > \beta$ regime is an outstanding problem for decades, even without the additional problem-dependent parameters $\gamma,\gamma'$ (see \emph{e.g.} the discussions following \cite[Theorem 2]{fischer2020sobolev}). In this paper, we do not address this problem either.
\end{remark}

\begin{figure*}
    \centering
    \begin{subfigure}[a]{0.47\textwidth}
        \centering
        \includegraphics[width=\textwidth]{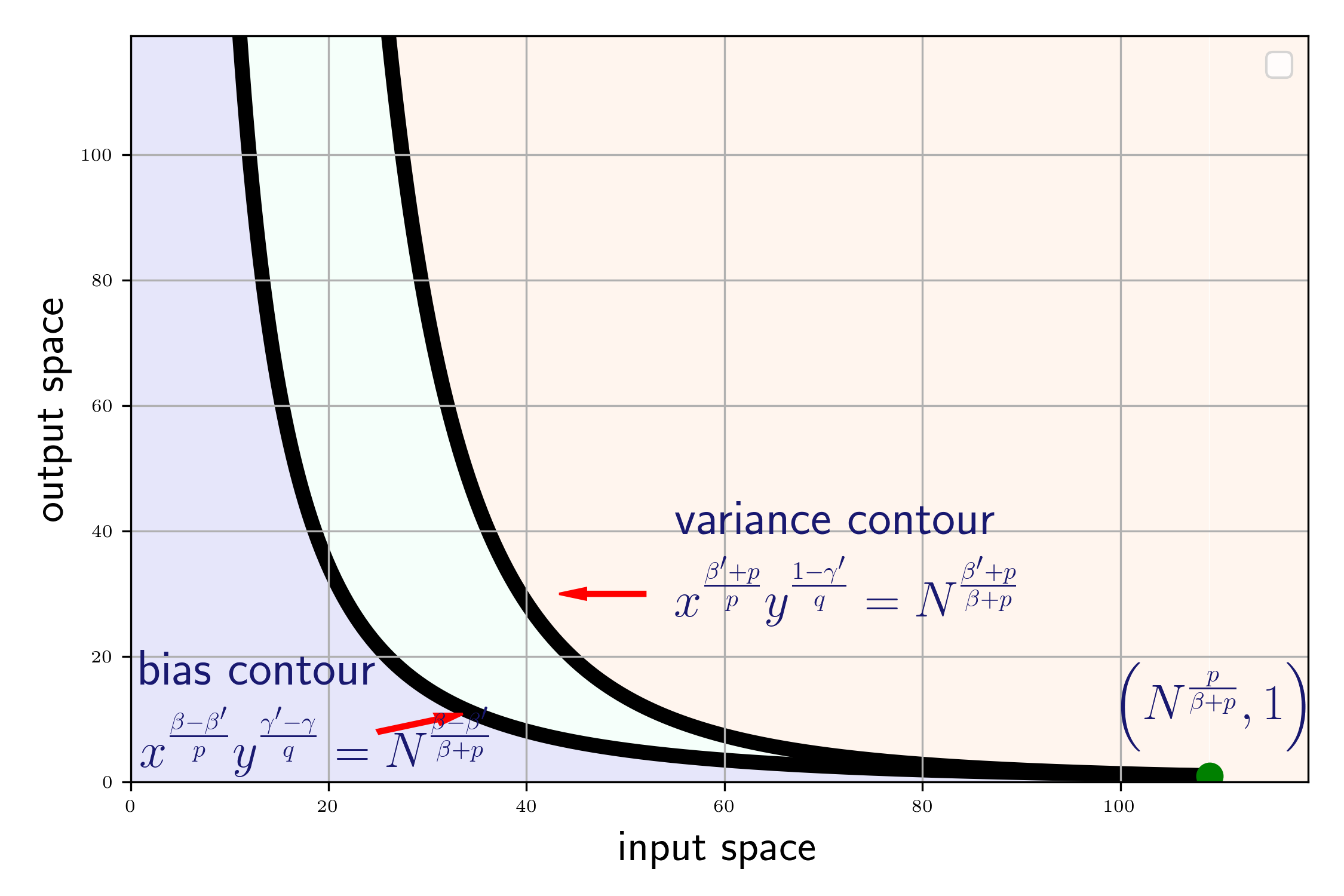}
    \end{subfigure}
    \qquad
    \begin{subfigure}[a]{0.47\textwidth}  
        \centering 
        \includegraphics[width=\textwidth]{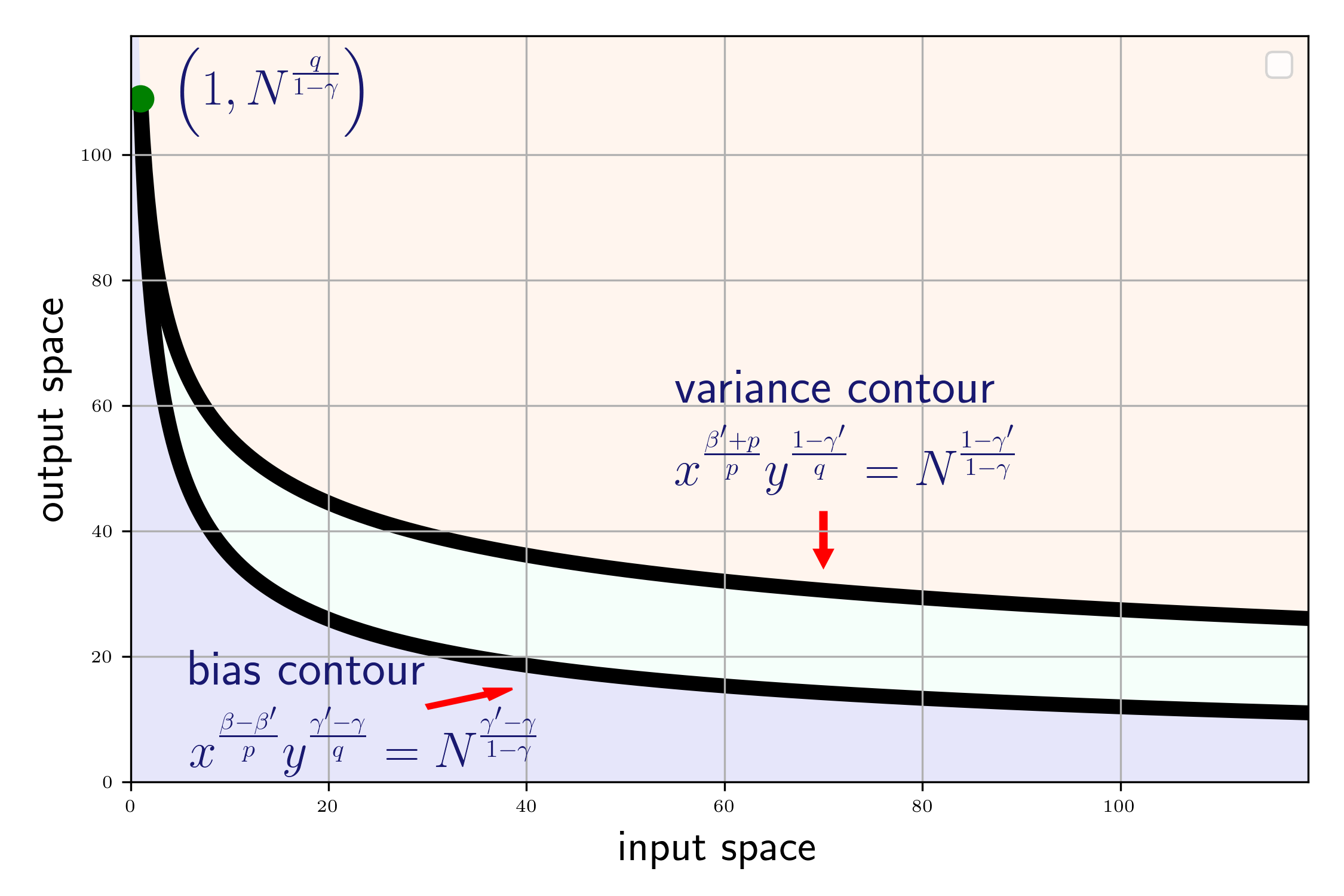}
    \end{subfigure}
    \caption[]%
    {The plot of the bias contour and the variance contour. For simplicity, we only plot the case $\alpha\leq\beta+p$ here. The variance contour is always above the bias contour. Left: When $\frac{\beta'+p}{\beta+p}\geq\frac{1-\gamma'}{1-\gamma}$, the two yields $\O\left(N^{-\frac{\beta-\beta'}{\max\{\alpha,\beta+p\}}}\right)$ convergence rate. It is the same learning rate as the two kernel regression curves meet when $y=1$. Right: When $\frac{\beta'+p}{\beta+p}\geq\frac{1-\gamma'}{1-\gamma}$, the two contours yield the same regularization on the output space leading to a convergence rate of $\O\left(N^{-\frac{\gamma'-\gamma}{1-\gamma}}\right)$}
    \label{contours_fig}
\end{figure*}

\subsection{Regularization via Bias Contour}
\label{subsection:biascon}
We have showed that if we learn all the spectral components under certain variance contour and regularize all other component can achieve optimal rate. In this section, we introduce another 
scheme to design the optimal estimator via learning all the spectral component under a certain bias contour.
Specifically, we consider deciding the regularization strength according to the spectral elements induce a certain level 
of bias i.e. the \textit{bias contour} $\ell_{C',\mathtt{bias}} = \left\{ (x,y)\in\mathbb{R}_+^2: x^{\frac{\beta-\beta'}{p}}y^{\frac{\gamma'-\gamma}{q}}= C'\right\}$.
does not coincide with $\ell_{C,\mathtt{var}}$ for any $C'$ up to constant scaling. Thus, there exists a point $(x^*,y^*)$ on the variance contour with maximal contribution to bias. Naturally, we can also construct our estimator using a bias contour that passes through $(x^*,y^*)$. In this case, we may define $\lambda_j = \max\left\{ \left(j^{-\frac{\gamma'-\gamma}{q}}N^{\min\left\{ \frac{\beta-\beta'}{\max\{\alpha,\beta+p\}},\frac{\gamma'-\gamma}{1-\gamma} \right\}}\right)^{-\frac{1}{\beta-\beta'}}, c_0\left(\frac{N}{\log N}\right)^{-\frac{1}{\alpha}}  \right\}$ for similar reasons as \Cref{subsec-main-idea}, which also yields optimal rate as stated in \Cref{thm-upper-bias} below.

\begin{remark}[On the optimal shape of regularization]
\label{remark4.1} The discussion in \Cref{subsec-main-idea,subsection:biascon} reveals another understanding of our information theoretic lower bound. Firstly, we should learn all the spectral components under the bias contour otherwise the bias will exceed the lower bound. Secondly, we should not learn any spectral component over the variance contour since otherwise the variance will exceed the lower bound. Thus \alertinline{the bias contour should always be under the variance contour}, otherwise no estimator can be designed. The bias and variance contours at the level of optimal learning rate are plotted in \Cref{contours_fig}. They only meet at $(x^*,y^*)$ with $x^*=1$ or $y^*=1$, which has the largest contribution to the bias (resp. variance) among all points on the variance (resp. bias) contour, thus dominating the estimation error. When the two curves meet at $y^*=1$, it reduces to the original kernel regression case. When the two curves meet at $x^*=1$, it leads to our new rate that depends on the output space.

\end{remark}

\begin{theorem}
    \label{thm-upper-bias}
    Consider the estimator $\hat{\A}$ defined by \Cref{our-estimator} with $\lambda_j$ defined above. Suppose that \Cref{asmp-eigendecay,asmp-bounded-input-data,asmp-bounded-output-data,asmp-noise,asmp-beta-gamma-norm} hold, then there exists a universal constant $C$, such that 
    \begin{equation}
        \notag
        \normx{\hat{\A}-\A_0}_{\beta',\gamma'}^2 \leq C \tau^2 \left(\frac{N}{\log N}\right)^{-\min\left\{\frac{\beta-\beta'}{\max\{\alpha,\beta+p\}},\frac{\gamma'-\gamma}{1-\gamma}\right\}} \log^2 N
    \end{equation}
    holds with probability $\geq 1-e^{-\tau}$.
\end{theorem}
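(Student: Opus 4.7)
The plan is to mirror the bias--variance argument underlying \Cref{thm-upper}, changing only the regularization schedule $\{\lambda_j\}$. I would introduce the noiseless surrogate
$\A_\lambda=\sum_{j=1}^{y_N}(\rho_j^{\half}f_j\otimes \rho_j^{\half}f_j)\,\C_{LK}(\C_{KK}+\lambda_j I)^{-1}$
for a truncation level $y_N$ polynomial in $N$, and split via the triangle inequality
$\normx{\hat\A-\A_0}_{\beta',\gamma'}\le \normx{\A_\lambda-\A_0}_{\beta',\gamma'}+\normx{\hat\A-\A_\lambda}_{\beta',\gamma'}$.
Because $\normx{\cdot}_{\beta',\gamma'}$ is a Hilbert--Schmidt norm, each piece decouples across the output index $j$, so the whole proof boils down to balancing per-$j$ contributions.

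For the bias, I would expand $\A_0=\sum_{i,j}a_{ij}\mu_i^{\beta/2}e_i\otimes \rho_j^{1-\gamma/2}f_j$ and follow the same source-condition calculation as in the proof of \Cref{thm-upper} to obtain
$\normx{\A_\lambda-\A_0}_{\beta',\gamma'}^2\lesssim \sum_{j=1}^{y_N}\rho_j^{\gamma'-\gamma}\lambda_j^{\beta-\beta'}\sum_i a_{ij}^2+(\text{tail from }j>y_N)$.
Setting $\theta^*=\min\{(\beta-\beta')/\max\{\alpha,\beta+p\},(\gamma'-\gamma)/(1-\gamma)\}$, the bias-contour choice $\lambda_j\asymp (j^{-(\gamma'-\gamma)/q}N^{\theta^*})^{-1/(\beta-\beta')}$ is engineered so that $\rho_j^{\gamma'-\gamma}\lambda_j^{\beta-\beta'}\asymp N^{-\theta^*}$ uniformly in $j$. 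Summing and invoking $\sum_{i,j}a_{ij}^2\le B^2$ from \Cref{asmp-beta-gamma-norm} yields a bias of order $B^2 N^{-\theta^*}$; a polynomial choice of $y_N$ makes the tail negligible.

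For the variance, I would decompose $\hat\A-\A_\lambda$ coordinate-wise in $j$, introducing the standard operators $G_j=(\C_{KK}+\lambda_j I)^{\half}(\hat\C_{KK}+\lambda_j I)^{-1}(\C_{KK}+\lambda_j I)^{\half}$ and $U_j=(\C_{KK}+\lambda_j I)^{-\half}[\hat\C_{LK}-(\hat\C_{KK}+\lambda_j I)(\C_{KK}+\lambda_j I)^{-1}\C_{LK}]$, and apply Bernstein's inequality in Hilbert spaces under \Cref{asmp-noise} together with the usual operator concentration bound for $\hat\C_{KK}$ (where the floor $c_0(N/\log N)^{-1/\alpha}$ inside $\lambda_j$ controls $\|G_j\|$ exactly as in \citep{fischer2020sobolev}). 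This delivers, with probability at least $1-e^{-\tau}$, a variance bound of the shape
$\normx{\hat\A-\A_\lambda}_{\beta',\gamma'}^2\lesssim \tau^2\log^2 N\cdot\frac{1}{N}\sum_{j=1}^{y_N}\rho_j^{-(1-\gamma')}\lambda_j^{-(\beta'+\max\{\alpha-\beta,p\})}$.

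The main obstacle is checking that substituting the bias-contour $\lambda_j$ into this variance sum still gives $\widetilde O(N^{-\theta^*})$: unlike the variance-contour choice of \Cref{thm-upper}, the summands now carry nontrivial $j$-dependence and the sum must be estimated carefully. The geometric content of \Cref{remark4.1} makes the target plausible---at the level $N^{-\theta^*}$ the bias contour sits below the variance contour and the two touch only at a single endpoint with $x^\ast=1$ or $y^\ast=1$---so every coordinate we choose to learn already carries variance at most $N^{-\theta^*}$. To turn this into a rigorous bound I would split on which of the two rates achieves the minimum in $\theta^*$. When $\theta^*=(\beta-\beta')/\max\{\alpha,\beta+p\}$ the contours meet at $y^\ast=1$; a direct calculation shows the $j$-exponent of the summand is negative enough that the $j$-sum is $O(1)$, collapsing the variance to the single-level kernel regression rate. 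When $\theta^*=(\gamma'-\gamma)/(1-\gamma)$ the contours meet at $x^\ast=1$; here the floor $c_0(N/\log N)^{-1/\alpha}$ becomes binding on small $j$, so I would split the $j$-sum at the crossover index and handle each piece separately. Combining the two cases yields the claimed $(\tau^2\log^2 N)(N/\log N)^{-\theta^*}$ bound.
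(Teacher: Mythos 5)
Your overall strategy (reuse the bias--variance decomposition around $\A_\lambda$ and simply substitute the bias-contour $\lambda_j$) is the same as the paper's, and your bias argument is fine: the choice of $\lambda_j$ makes $\rho_j^{\gamma'-\gamma}\lambda_j^{\beta-\beta'}\asymp N^{-\theta^*}$ per coordinate, the source condition $\sum_{i,j}a_{ij}^2\le B^2$ absorbs the sum over $i$, and the truncation tail and the floor $c_0(N/\log N)^{-1/\alpha}$ are handled as in the paper.

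The gap is in the variance step. You state the post-Bernstein bound in the form $\normx{\hat\A-\A_\lambda}_{\beta',\gamma'}^2\lesssim \frac{\tau^2\log^2 N}{N}\sum_{j=1}^{y_N}\rho_j^{-(1-\gamma')}\lambda_j^{-(\beta'+\max\{\alpha-\beta,p\})}$, i.e.\ a plain sum over the output index. That is not what the argument delivers, and it is too weak to close the proof. The correct bound (see the treatment of $\mathcal{V}_1,\mathcal{V}_2$ in \Cref{appsec:upper-proof}) applies Bernstein once to the $\H_K^{y_N}$-valued vector $X_k=(X_{j,k})_j$; in the moment computation the sum over $j$ is absorbed by the trace-class noise condition $\sum_j\sigma_j^2=\tr{V}\le\sigma^2$ (\Cref{asmp-noise}) and, for the $\mathcal{V}_2$ piece, by $\sum_j D_{j,2}=\normx{\A_0}_{\beta,\gamma}^2\le B^2$, so that only $\frac1N\max_{1\le j\le y_N} j^{\frac{1-\gamma'}{q}}\lambda_j^{-(\beta'+\max\{\alpha-\beta,p\})}$ survives --- a maximum, not a sum. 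The distinction is decisive: with the bias-contour $\lambda_j$ the summand is a monomial $j^{a}N^{\theta^*\frac{\beta'+p}{\beta-\beta'}-1}$ whose exponent $a=\frac{1-\gamma'}{q}-\frac{(\gamma'-\gamma)(\beta'+p)}{q(\beta-\beta')}$ is only guaranteed to be $\le 0$ in the regime $\theta^*=\frac{\beta-\beta'}{\beta+p}$ (and equals $0$ when the two rates coincide), so your claim that ``the $j$-sum is $O(1)$'' fails whenever $a\in(-1,0]$: the sum is then of order $y_N^{1+a}$, a polynomial-in-$N$ factor larger than the max, and your bound misses the optimal rate. (The same issue would already break the variance-contour proof of \Cref{thm-upper}, where the summand is constant in $j$ by design.) The fix is exactly the paper's: keep the max, observe that a monomial in $j$ attains its maximum over $[1,y_N]$ at an endpoint, and check both endpoints, using $y_N^{\frac{\gamma'-\gamma}{q}}\gtrsim N^{\theta^*}$ and $\frac{\beta'+p}{\beta-\beta'}\theta^*\le 1-\theta^*$, which yields variance $\lesssim N^{-\theta^*}$ in both cases (and analogously with $\beta'+\alpha-\beta$ in place of $\beta'+p$ when $\alpha>\beta+p$).
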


\section{MultiLevel Kernel Operator Learning}
\label{sec-multilevel}

In this section, we study a multilevel machine learning algorithm \citep{lye2021multi,li2021multilevel,boulle2022learning} but at each level we consider a cost-accuracy trade-off \citep{de2022cost} to control the variance at a proper scale. We show that the multilevel level algorithm can cover all the spectral component below the bias contour and achieve the optimal learning rate.  Our idea is similar to the multilevel Monte Carlo \citep{giles2008multilevel,giles2015multilevel}, which reduces bias from multilevel algorithm.  Our multilevel estimator differs from the DeepONet \citep{lu2019deeponet} and the PCA-Net \citep{bhattacharya2020model} since we add different regularizations for each level. Our theory indicates that the multilevel approach outperforms previous ones and achieves the optimal learning rate.



\begin{figure*}
    \centering
    \begin{subfigure}[a]{0.47\textwidth}
        \centering
        \includegraphics[width=\textwidth]{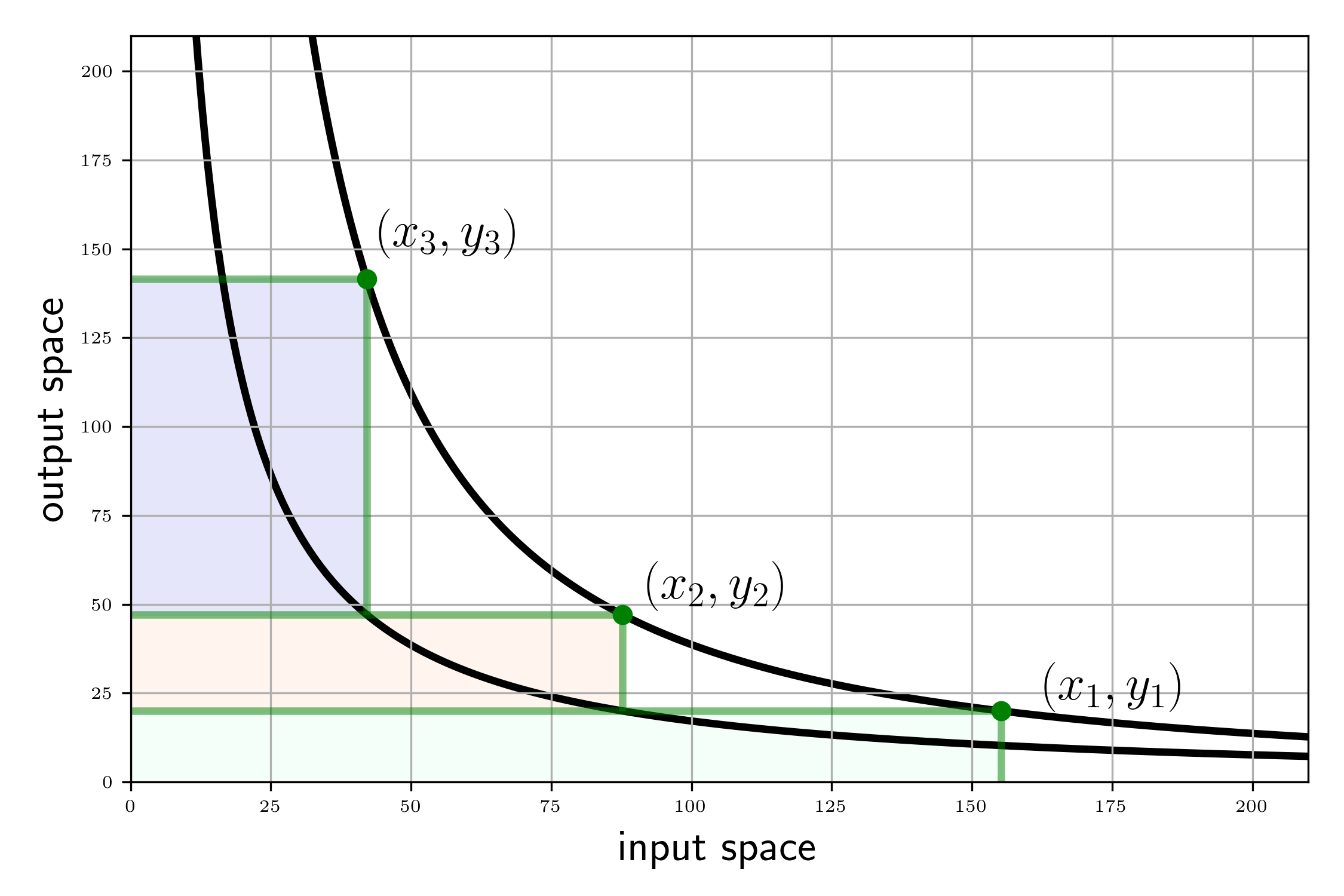}
    \end{subfigure}
    \qquad
    \begin{subfigure}[a]{0.47\textwidth}  
        \centering 
        \includegraphics[width=\textwidth]{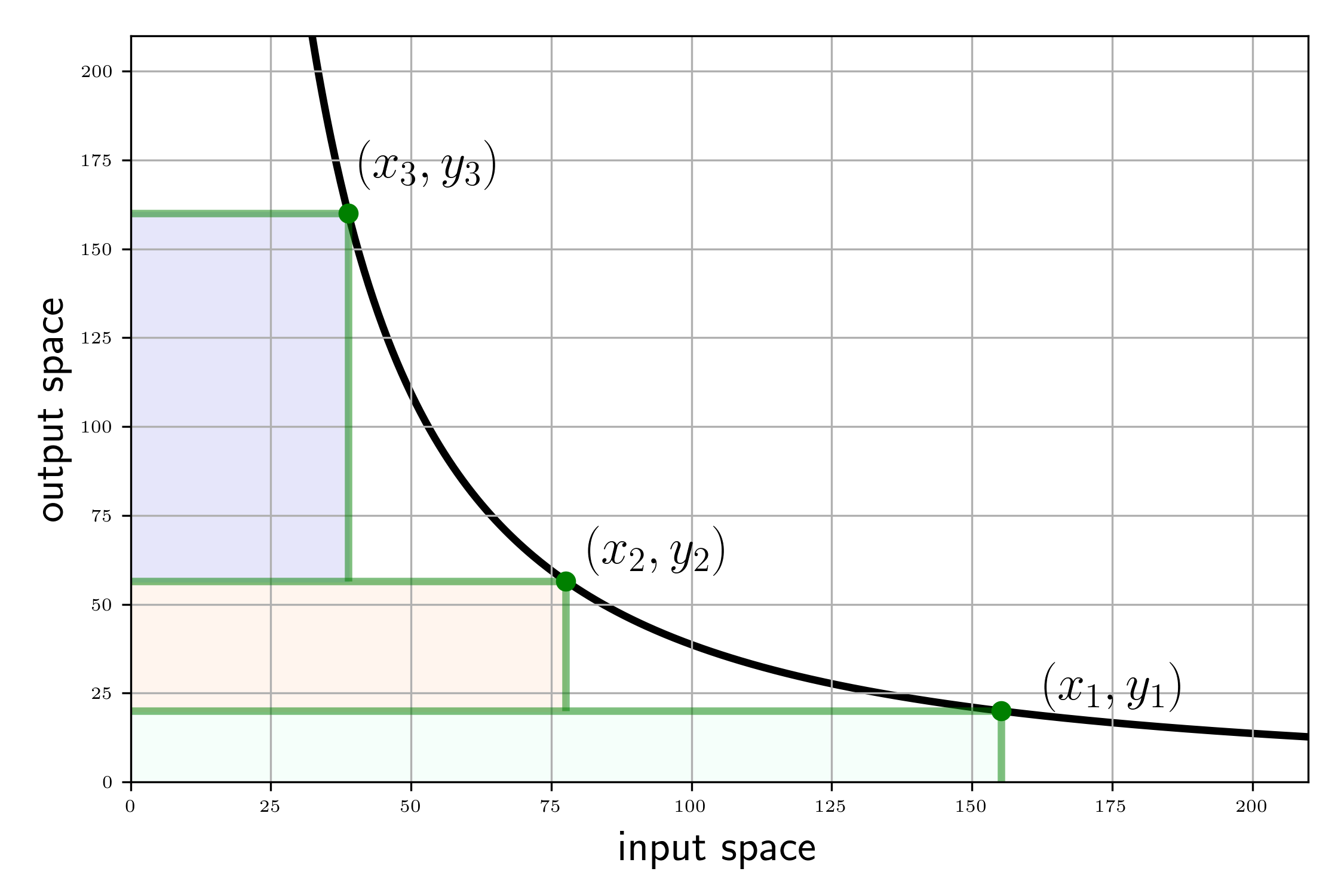}
    \end{subfigure}
    \vspace{0.05in}
    \caption[]%
    {Construction of the sequence $\{(x_i,y_i)\}$. Left: the case $\frac{\beta-\beta^{\prime}}{\max \{\alpha, \beta+p\}} \neq \frac{\gamma^{\prime}-\gamma}{1-\gamma}$. Right: the case $\frac{\beta-\beta^{\prime}}{\max \{\alpha, \beta+p\}}=\frac{\gamma^{\prime}-\gamma}{1-\gamma}$, where the bias and variance contours overlap and we set $x_{n+1}=\half x_n$. Each rectangular represents a certain level of regularization.}
    \label{multilevel_fig}
\end{figure*}

The basic idea is to design a minimum number of machine learning estimators that cover all the spectral elements under the bias contour but do not exceed the variance contour at the same time.  To achieve this, we choose sequences $\{x_i\}$ and $\{y_i\}$ for $1\leq i\leq L_N$ where $y_i$ denotes the $i$-th level and $x_i$ controls the corresponding regularization via the regularization coefficient $\lambda_i^{(K)}=x_i^{-\frac{1}{p}}$. The sequences are chosen in a staircase manner as plotted in \Cref{multilevel_fig} (for formal definitions see \Cref{appsec:multilevel}). The eigenbasis $\left\{\rho_j^{\half}f_j\right\}$ of the output space is divided into defferent levels by $\{y_i\}$. The main idea behind our multilevel method is that different levels of the output need to be learned with different regularization. Formally, we define our multilevel estimator as
\begin{equation}
    \label{multilevel-estimator}
    \hat{\A}_{\mathtt{ml}} =  \sum_{i=0}^{L_N}\left(\sum_{y_{i-1}\leq j < y_{i}}\rho_j^{\half}f_j\otimes\rho_j^{\half}f_j\right)\hat{\C}_{LK}\left(\hat{\C}_{KK}+\lambda_i^{(K)}I\right)^{-1}.
\end{equation}


The following theorem shows that the estimator \Cref{multilevel-estimator} can achieve the optimal convergence rate with $L_N=\O(\ln\ln N)$ when $\frac{\beta-\beta^{\prime}}{\max \{\alpha, \beta+p\}} \neq \frac{\gamma^{\prime}-\gamma}{1-\gamma}$. We also show that $O(\ln N)$ estimator is needed for the case when $\frac{\beta-\beta'}{\max\{\alpha,\beta+p\}}=\frac{\gamma'-\gamma}{1-\gamma}$ (\cref{multilevel_fig} Right) in \Cref{appsec:multilevel}.
\begin{theorem}
    \label{thm-upper-multilavel}
    Suppose that \Cref{asmp-eigendecay,asmp-bounded-input-data,asmp-bounded-output-data,asmp-noise,asmp-beta-gamma-norm} hold, then there exists a sequence $\{y_i\}_{1\leq i\leq L_N}$ with $L_N = \O(\ln N)$ when $\frac{\beta-\beta'}{\max\{\alpha,\beta+p\}}=\frac{\gamma'-\gamma}{1-\gamma}$ and $\O(\ln\ln N)$ otherwise, such that the estimator $\hat{\A}_{\mathtt{ml}}$ satisfies $\normx{\hat{\A}_{\mathtt{ml}}-\A_0}_{\beta',\gamma'}^2 \leq C \tau^2 \left(\frac{N}{\log N}\right)^{-\min\left\{\frac{\beta-\beta'}{\max\{\alpha,\beta+p\}},\frac{\gamma'-\gamma}{1-\gamma}\right\}} \log^2 N$ with probability $\geq 1-e^{-\tau}$, where $C$ is a universal constant.
\end{theorem}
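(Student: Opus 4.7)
The plan is to perform a level-wise bias-variance decomposition of $\hat{\A}_{\mathtt{ml}}-\A_0$, to control each level's contribution to $\normx{\,\cdot\,}_{\beta',\gamma'}^2$ by the target rate $N^{-\tau^*}$ with $\tau^*=\min\{(\beta-\beta')/\max\{\alpha,\beta+p\},\,(\gamma'-\gamma)/(1-\gamma)\}$ using essentially the same tools that drove \Cref{thm-upper}, and then to count levels by a geometric argument on the $(x,y)$-plane. The structural observation that makes everything decouple is that the output projectors $P_i=\sum_{y_{i-1}\leq j<y_i}\rho_j^{\half}f_j\otimes\rho_j^{\half}f_j$ are pairwise orthogonal in $\H_L$, and the weighting $\mathcal{C}_{Q_L}^{-(1-\gamma')/2}$ in the $(\beta',\gamma')$-norm commutes with each $P_i$. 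Hence one has the Pythagorean identity $\normx{\hat{\A}_{\mathtt{ml}}-\A_0}_{\beta',\gamma'}^2 = \sum_{i=0}^{L_N}\normx{P_i(\hat{\A}_{\mathtt{ml}}-\A_0)}_{\beta',\gamma'}^2 + \normx{P_\perp\A_0}_{\beta',\gamma'}^2$, where $P_\perp$ projects onto the tail $j\geq y_{L_N}$ and the last term is made negligible by the truncation threshold chosen below.

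For a single level $i$, I would insert the population quantity $\C_{LK}(\C_{KK}+\lambda_i^{(K)}I)^{-1}$ and split the error into a bias and a variance part exactly as in the proof of \Cref{thm-upper}. Using \Cref{asmp-beta-gamma-norm} and the tail bound $\mu_\ell^{\beta-\beta'}\leq (\lambda_i^{(K)})^{\beta-\beta'}$ for $\ell>x_i$, the bias piece satisfies $\normx{P_i\text{-bias}}_{\beta',\gamma'}^2 \lesssim (\lambda_i^{(K)})^{\beta-\beta'}\rho_{y_{i-1}}^{\gamma'-\gamma}B^2 \asymp x_i^{-(\beta-\beta')/p}y_{i-1}^{-(\gamma'-\gamma)/q}$. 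The variance piece is handled by Bernstein's inequality in Hilbert spaces together with the standard concentration of $\hat{\C}_{KK}$ around $\C_{KK}$, which is exactly what forces the floor $\lambda_i^{(K)}\geq c_0(N/\log N)^{-1/\alpha}$, and yields, with probability $\geq 1-e^{-\tau}$, the estimate $\normx{P_i\text{-var}}_{\beta',\gamma'}^2 \lesssim \frac{\tau^2\log^2 N}{N}(\lambda_i^{(K)})^{-(\beta'+\max\{\alpha-\beta,p\})}\sum_{y_{i-1}\leq j<y_i}\rho_j^{-(1-\gamma')}$.

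The heart of the argument is the contour-aware placement of $\{(x_i,y_i)\}$. In log-log coordinates both contours $\{x^{(\beta-\beta')/p}y^{(\gamma'-\gamma)/q}=N^{\tau^*}\}$ and $\{x^{(\beta'+\max\{\alpha-\beta,p\})/p}y^{(1-\gamma')/q}=N^{1-\tau^*}\}$ are straight lines; I place $(x_i,y_{i-1})$ on the first (so the bias per level is $\lesssim N^{-\tau^*}$) and $(x_i,y_i)$ on (or just below) the second (so the variance per level, including the extra $y$-range factor from the sum in $j$, is $\lesssim N^{-\tau^*}\log^2 N$), truncating at $x=1$, $y=1$ and at the $\lambda^{(K)}$-floor. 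Eliminating $\log x_i$ from the two contour equations yields a linear recursion $\log y_i = A + \kappa\log y_{i-1}$ in which $\kappa$ is the ratio of contour slopes and equals $1$ precisely when $\frac{\beta-\beta'}{\max\{\alpha,\beta+p\}}=\frac{\gamma'-\gamma}{1-\gamma}$. When $\kappa\neq 1$ the sequence $\log y_i$ grows geometrically, so the required truncation level $y_{L_N}\asymp N^{1/(1-\gamma)}$ is reached in only $L_N=\O(\ln\ln N)$ steps; when $\kappa=1$ the two contours coincide and setting $x_{n+1}=\half x_n$ instead makes $y_i$ grow merely geometrically, costing $L_N=\O(\ln N)$ levels.

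Summing the level-wise bounds and applying a union bound over the $L_N$ concentration events (which replaces $\tau$ by $\tau+\log L_N$ and is absorbed by the $\log^2 N$ factor already present) produces $\normx{\hat{\A}_{\mathtt{ml}}-\A_0}_{\beta',\gamma'}^2 \lesssim L_N\cdot\tau^2(N/\log N)^{-\tau^*}\log^2 N$, and folding $L_N\leq\log N$ into the universal constant gives the stated bound. The main obstacle is the geometric step: verifying that the staircase of rectangles genuinely achieves the common bias/variance order $N^{-\tau^*}\log^2 N$ at every level while simultaneously respecting the $\lambda^{(K)}$-floor and the lattice constraint $y_i\in\mathbb{Z}_+$, and in particular bookkeeping the extra factor $\sum_{y_{i-1}\leq j<y_i}\rho_j^{-(1-\gamma')}$ that the multilevel sum introduces beyond the single-coordinate variance contour. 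Once that geometry is pinned down, the per-level concentration is a direct adaptation of the Bernstein argument in \Cref{thm-upper}, and the Pythagorean assembly across levels is routine.
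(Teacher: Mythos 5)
Your overall architecture (level-wise bias--variance split, placing $(x_i,y_{i-1})$ on the bias contour and $(x_i,y_i)$ on the variance contour, the linear recursion in log-log coordinates giving $L_N=\O(\ln\ln N)$ for distinct slopes and $\O(\ln N)$ when the contours coincide, and the final union bound over levels) matches the paper's construction in \Cref{appsec:multilevel} almost exactly. However, there is a genuine gap in your per-level variance estimate. You claim
\begin{equation}
    \notag
    \normx{P_i\text{-var}}_{\beta',\gamma'}^2 \lesssim \frac{\tau^2\log^2 N}{N}\left(\lambda_i^{(K)}\right)^{-(\beta'+\max\{\alpha-\beta,p\})}\sum_{y_{i-1}\leq j<y_i}\rho_j^{-(1-\gamma')},
\end{equation}
and then assert that putting $(x_i,y_i)$ on the variance contour keeps this below $N^{-\tau^*}\log^2 N$ ``including the extra $y$-range factor.'' That is false as stated: since $\rho_j^{-(1-\gamma')}\propto j^{(1-\gamma')/q}$, the within-level sum is of order $y_i^{1+(1-\gamma')/q}$, so your bound exceeds the contour value $\frac{1}{N}x_i^{\frac{\beta'+\max\{\alpha-\beta,p\}}{p}}y_i^{\frac{1-\gamma'}{q}}=N^{-\tau^*}$ by a factor of order $y_i$, which is polynomial in $N$ (at the last level $y_{L_N}\asymp N^{q\eta_2/(1-\gamma')}$). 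With that variance accounting, no placement of the staircase can reach the minimax rate, so the ``main obstacle'' you flag is not mere bookkeeping --- it is where the argument breaks.

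The missing ingredient is the trace-class moment condition (\Cref{asmp-noise}): the noise energy in the output direction $\rho_j^{1/2}f_j$ is $\sigma_j^2=\left\langle \rho_j^{1/2}f_j, V\rho_j^{1/2}f_j\right\rangle$ with $\sum_j \sigma_j^2\leq\sigma^2$, so in the Bernstein moment computation the sum over $j$ inside a level appears weighted by $\sigma_j^2$ and is bounded by $\sigma^2\max_{y_{i-1}\leq j<y_i} j^{\frac{1-\gamma'}{q}}\left(\lambda_i^{(K)}\right)^{-(\beta'+\max\{\alpha-\beta,p\})}$, i.e.\ a maximum rather than an unweighted sum. This is exactly how the paper proceeds (the within-level computation is inherited verbatim from the proof of \Cref{thm-upper}, where the accumulation over the input index $i$ produces the $\lambda^{-p}$, respectively $\lambda^{-\max\{\alpha-\beta,p\}}$, factor, while the output index is absorbed through $\tr{V}\leq\sigma^2$), and it is precisely what makes the variance contour $x^{\frac{\beta'+\max\{\alpha-\beta,p\}}{p}}y^{\frac{1-\gamma'}{q}}=N^{1-\tau^*}$ the right curve to place $(x_i,y_i)$ on. Once you replace your per-level variance bound with this noise-weighted version (and keep the $\mathcal{V}_2$-type term for the variance of the regularized bias, as in the paper), the rest of your plan --- the Pythagorean assembly across orthogonal output blocks, the recursion for $\{x_n,y_n\}$, the halving construction $x_{n+1}=\tfrac{1}{2}x_n$ in the degenerate case, and the union bound absorbed into the logarithmic factors --- goes through and coincides with the paper's proof.
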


\begin{remark}
{ Our multilevel algorithm first apply the regression algorithm on low-frequency projections of the output samples with small regularization and then successively fine-tune the regression model on high-frequency projections of the output samples with stronger regularization, which matches the empirical use \citep{li2021multilevel,lye2021multi}.}
\end{remark}


\section{Conclusion and Discussion}
We considered the sample complexity of learning an operator between two infinite-dimensional Sobolev kernel Hilbert spaces. We provided an information theoretical lower bound for this problem along with a multi-level machine learning algorithm. Our lower bound is determined by the harder rate of two polynomial rates: one is fully determined by the hardness of input space while the other is fully controlled by the hardness of the output space. The second rate is new to the literature. We explained the our bound from the viewpoint of variance and bias counters in \Cref{remark4.1} and \Cref{contours_fig}. The optimal estimator should learn all the spectral element under the bias contour but learn no information above the variance contour. To meet this requirement, we combined the idea of multilevel Monte Carlo with kernel operator learning, using successive levels to fit higher frequency information while keeping the variance at the same scale in order to reduce the bias. Our paper is the first paper on the non-parametric statistical optimality for multilevel algorithms. We leave estimation from discretely observe functional covariates with noise as future work  \citep{zhoufunctional,zhou2022theory}.

\begin{acknowledgement}
 Jikai Jin is partially supported by the elite undergraduate training program of School of
Mathematical Sciences in Peking University. Yiping Lu is supported by the Stanford Interdisciplinary Graduate
Fellowship (SIGF). Jose Blanchet is supported in part by the Air Force Office of Scientific Research under award number FA9550-20-1-0397.  Lexing Ying is supported is supported by National Science Foundation under award DMS-2208163.
\end{acknowledgement}


\printbibliography

\appendix

\section{Proof of the lower bound}
\label{appsec:lower-proof}
In this section, we follow the lower bound proof in \citep{fischer2020sobolev} to give a lower bound of the convergence rate in our operator learning setting.

\subsection{Preliminaries on Tools for Lower Bounds}
\label{appendix:fano}
In this section, we repeat the standard tools we use to establish the lower bound. The main tool we use is the Fano's inequality and the Varshamov-Gilber Lemma.

\begin{lemma}[Fano's methods] Assume that $V$ is a uniform random variable over set $\mathcal{V}$, then for any Markov chain $V\rightarrow X\rightarrow \hat V$, we always have
$$
\mathcal{P}(\hat V\not = V)\ge 1-\frac{I(V;X)+\log 2}{\log(|\mathcal{V}|)}
$$

\end{lemma}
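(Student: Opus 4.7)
The plan is to give the textbook Fano argument, which combines three ingredients: (i) the data-processing inequality along the Markov chain $V \to X \to \hat V$, (ii) the standard Fano-type bound on conditional entropy in terms of the error probability, and (iii) the fact that a uniform prior maximizes the entropy $H(V)=\log|\mathcal V|$.

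First I would introduce $P_e = \mathbb{P}(\hat V \neq V)$ and the error indicator $E = \mathbf{1}\{\hat V \neq V\}$. Expanding $H(E, V \mid \hat V)$ in two ways via the chain rule gives
\begin{equation*}
H(V \mid \hat V) = H(E \mid \hat V) + H(V \mid E, \hat V) - H(E \mid V, \hat V).
\end{equation*}
The last term vanishes since $E$ is a deterministic function of $(V,\hat V)$; the first term is at most $H(E) \leq \log 2$ because $E$ is binary; and the middle term is bounded by conditioning on the value of $E$: when $E=0$ we have $V=\hat V$ so the conditional entropy is zero, while when $E=1$ the variable $V$ takes values in a set of size at most $|\mathcal V|-1$, giving $H(V\mid E=1,\hat V) \leq \log(|\mathcal V|-1) \leq \log |\mathcal V|$. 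Averaging yields the standard Fano bound
\begin{equation*}
H(V \mid \hat V) \leq \log 2 + P_e \log |\mathcal V|.
\end{equation*}

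Next I would use $H(V\mid \hat V) = H(V) - I(V;\hat V)$ and the fact that $V$ is uniform on $\mathcal V$, so $H(V)=\log|\mathcal V|$. The Markov chain $V\to X\to \hat V$ and the data-processing inequality give $I(V;\hat V) \leq I(V;X)$, hence
\begin{equation*}
\log|\mathcal V| - I(V;X) \leq H(V\mid \hat V) \leq \log 2 + P_e \log|\mathcal V|.
\end{equation*}
Rearranging immediately produces the stated inequality $P_e \geq 1 - \frac{I(V;X) + \log 2}{\log|\mathcal V|}$.

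There is no real obstacle here; the only place to be careful is the bound $H(V\mid E=1,\hat V)\leq \log(|\mathcal V|-1)$, which I weaken to $\log|\mathcal V|$ to match the form stated in the lemma (this is where the $\log 2$ in the numerator, rather than a tighter $h_2(P_e)$ term, comes from). Everything else is a direct application of the chain rule, non-negativity of entropy, and the data-processing inequality, so the proof should be a short half-page.
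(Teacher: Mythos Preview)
Your proof is correct and is the standard textbook derivation of Fano's inequality. The paper, however, does not prove this lemma at all: it is stated in the appendix as a preliminary tool (alongside the Varshamov--Gilbert lemma) and simply invoked without justification, so there is no ``paper's own proof'' to compare against. Your argument is exactly what one would expect to see in a reference such as Cover--Thomas, and nothing more is needed here.
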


In our proof we will use a version from \citep{fischer2020sobolev}.

\begin{lemma}
\label{lower-bound-lemma}
~\cite[Theorem 20]{fischer2020sobolev} Let $M \geq 2,(\Omega, \mathcal{A})$ be a measurable space, $P_{0}, P_{1}, \ldots, P_{M}$ be probability measures on $(\Omega, \mathcal{A})$ with $P_{j} \ll P_{0}$ for all $j=1, \ldots, M$, and $0<\alpha_{*}<\infty$ with
\begin{equation}
    \notag
    \frac{1}{M} \sum_{j=1}^{M} KL\left(P_{j}|| P_{0}\right) \leq \alpha_{*}.
\end{equation}
Then, for all measurable functions $\Psi: \Omega \rightarrow\{0,1, \ldots, M\}$, the following bound is satisfied
\begin{equation}
    \notag
    \max _{j=0,1, \ldots, M} P_{j}(\omega \in \Omega: \Psi(\omega) \neq j) \geq \frac{\sqrt{M}}{1+\sqrt{M}}\left(1-\frac{3 \alpha_{*}}{\log (M)}-\frac{1}{2 \log (M)}\right) .
\end{equation}
\end{lemma}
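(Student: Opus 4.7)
The plan is to prove this generalized Fano-type inequality by combining the mutual-information form of Fano with a Birgé-style weighted prior that accounts for the asymmetric role of $P_0$. Write $p_j := P_j(\Psi \neq j)$ and $p_* := \max_{0 \leq j \leq M} p_j$. First I would treat $\Psi$ as a hypothesis test among $M+1$ alternatives and apply the standard mutual-information Fano bound with a uniform prior on $\{1,\dots,M\}$. This yields $\frac{1}{M}\sum_{j=1}^M p_j \geq 1 - (I(J;\omega) + \log 2)/\log M$, where $J$ is the random index. The mutual information is then bounded by a change-of-reference trick: because KL divergence is convex in its second argument, $I(J;\omega) = \frac{1}{M}\sum_j KL(P_j \| \bar P) \leq \frac{1}{M}\sum_j KL(P_j \| P_0) \leq \alpha_*$, where $\bar P$ is the mixture.

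Next, to extract the exact prefactor $\sqrt{M}/(1+\sqrt{M})$, I would replace the uniform prior by a Birgé-style weighted prior that places mass $1/(1+\sqrt{M})$ on $j = 0$ and mass $1/(\sqrt{M}(1+\sqrt{M}))$ on each $j \in \{1,\dots,M\}$. This weighting is natural here because the KL bounds are only available relative to $P_0$; the normalizing constant $\sqrt{M}/(1+\sqrt{M})$ arises from comparing the prior weight placed on $\{1,\dots,M\}$ to the total weight, and a case split on whether $p_0$ or the averaged $\frac{1}{M}\sum_{j=1}^M p_j$ dominates then converts the weighted-average Fano bound into the desired bound on $p_*$.

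The main obstacle will be tracking the exact constants in the refined error term. The coefficient $3$ multiplying $\alpha_*$ (rather than $1$) and the improved $1/(2\log M)$ (rather than the naive $\log 2/\log M$) indicate that a sharper comparison between $KL(P_j \| \bar P)$ under the weighted mixture and $KL(P_j \| P_0)$ is needed, most likely via a triangle-type inequality that also accounts for $KL(\bar P \| P_0)$, together with a careful bookkeeping of the lower-order entropy contribution from the binary indicator $\mathbf{1}\{\Psi(\omega) = j\}$ in the Fano calculation. As a consistency check, one recovers the classical symmetric Fano inequality $\frac{1}{M}\sum_{j=1}^M p_j \geq 1 - (\alpha_* + \log 2)/\log M$ in the uniform-prior limit, which suggests the weighted-prior calculation produces the correct stated form with the extra refinements.
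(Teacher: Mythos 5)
There is a genuine gap, and it sits exactly where the content of the lemma lies. Note first that the paper does not prove this statement at all: it is quoted verbatim from \cite[Theorem 20]{fischer2020sobolev}, which in turn is Tsybakov's Proposition 2.3 (a Birg\'e-type likelihood-ratio bound) specialized with $\tau = 1/\sqrt{M}$. In that argument one first proves $p_{e,M} \geq \sup_{0<\tau<1} \frac{\tau M}{1+\tau M}\bigl(1 + \frac{\alpha_* + \sqrt{\alpha_*/2}}{\log\tau}\bigr)$ by a truncation of the likelihood ratios $dP_j/dP_0$ at level $\tau$ together with a Markov-type deviation bound for $\log(dP_j/dP_0)$ (this, not a prior-weight comparison, is where the factor $\frac{\sqrt{M}}{1+\sqrt{M}}$ comes from), and then the choice $\tau=1/\sqrt{M}$ plus the elementary inequality $\sqrt{2\alpha_*} \leq \alpha_* + \tfrac12$ produces precisely the constants $3\alpha_*/\log M$ and $1/(2\log M)$. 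Your sketch never reaches this mechanism.

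Concretely: your first paragraph (uniform-prior Fano plus the convexity/change-of-reference bound $I(J;\omega)\leq \frac1M\sum_j KL(P_j\Vert P_0)\leq\alpha_*$) is correct but only yields $\max_j P_j(\Psi\neq j) \geq 1 - (\alpha_*+\log 2)/\log M$, which is not the claimed inequality and is vacuous for small $M$ (e.g.\ $M=2$), which is exactly the regime the $\frac{\sqrt{M}}{1+\sqrt{M}}$ prefactor is designed to handle. The second and third paragraphs, which are supposed to supply the prefactor and the exact constants $3$ and $\tfrac12$, are not actually carried out: you defer them to ``careful bookkeeping'' and to ``a triangle-type inequality'' between $KL(P_j\Vert \bar P)$, $KL(P_j\Vert P_0)$ and $KL(\bar P\Vert P_0)$ --- but KL divergence satisfies no triangle inequality, and it is not shown (and is not clear) that a Birg\'e-weighted-prior Fano computation with weights $1/(1+\sqrt{M})$ and $1/(\sqrt{M}(1+\sqrt{M}))$ reproduces the stated bound rather than a bound with different (worse) constants. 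Since the lemma \emph{is} the specific quantitative statement --- any weaker constant would not be citable as \cite[Theorem 20]{fischer2020sobolev} --- the decisive step is missing. To close the gap you should either reproduce Tsybakov's Proposition 2.3 argument (likelihood-ratio truncation at $\tau=1/\sqrt{M}$, then AM--GM on $\sqrt{2\alpha_*}$), or simply cite it as the paper does.
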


\begin{lemma}[Varshamov-Gillbert Lemma,\citep{tsybakov2008introduction} Theorem 2.9] Let $D\ge 8$. There exists a subset $\mathcal{V}=\{\tau^{(0)},\cdots,\tau^{(2^{D/8})}\}$ of $D-$dimensional hypercube $\mathcal{H}^D=\{0,1\}^D$ such that $\tau^{(0)}=(0,0,\cdots,0)$ and the $\ell_1$ distance between every two elements is larger than $\frac{D}{8}$
$$
\sum_{l=1}^D \normx{\tau^{(j)}-\tau^{(k)}}_{\ell_1}\ge \frac{D}{8}\text{, for all }0\le j,k \le 2^{D/8}
$$
\end{lemma}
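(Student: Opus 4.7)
The plan is to establish this standard packing bound via the probabilistic method: sample the candidate points uniformly at random from $\{0,1\}^D$, control each pairwise Hamming distance by a Chernoff/Hoeffding deviation inequality, and close the argument with a union bound showing that the random configuration attains the required separation with strictly positive probability.

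Concretely, set $M=2^{D/8}$, fix $\tau^{(0)}=\vzero$, and draw $\tau^{(1)},\ldots,\tau^{(M)}$ i.i.d.\ uniformly from $\{0,1\}^D$. For every pair $0\le j<k\le M$, the coordinate-wise differences $|\tau^{(j)}_l-\tau^{(k)}_l|$, $l=1,\ldots,D$, are i.i.d.\ Bernoulli$(1/2)$ random variables (this remains true when $j=0$ because $\tau^{(k)}_l$ itself is uniform on $\{0,1\}$). Hence $\normx{\tau^{(j)}-\tau^{(k)}}_{\ell_1}$ is distributed as $\mathrm{Binomial}(D,1/2)$ with mean $D/2$, and Hoeffding's inequality yields
\begin{equation*}
\Pr\!\left[\normx{\tau^{(j)}-\tau^{(k)}}_{\ell_1}\le D/8\right]\le \exp\!\left(-2D\,(3/8)^2\right)=\exp(-9D/32).
\end{equation*}

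A union bound over the at most $\binom{M+1}{2}\le M^2 = 2^{D/4}$ distinct pairs then gives
\begin{equation*}
\Pr\!\left[\exists\,j\ne k:\ \normx{\tau^{(j)}-\tau^{(k)}}_{\ell_1}\le D/8\right]\le 2^{D/4}\,e^{-9D/32}=\exp\!\left(D\bigl(\tfrac{\log 2}{4}-\tfrac{9}{32}\bigr)\right),
\end{equation*}
which is strictly less than $1$ since $(\log 2)/4\approx 0.173<9/32=0.28125$; for the base case $D=8$ the bound collapses to $3\,e^{-9/4}\approx 0.32<1$, and it decays geometrically for larger $D$. Consequently, some deterministic realization of $(\tau^{(1)},\ldots,\tau^{(M)})$ satisfies $\normx{\tau^{(j)}-\tau^{(k)}}_{\ell_1}>D/8$ for all $0\le j<k\le M$, which is exactly the desired separation.

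I do not anticipate a substantive obstacle: the entire argument reduces to standard binomial concentration combined with a union bound. The only point to verify is that the Hoeffding exponent $9/32$ genuinely beats the union-bound exponent $(\log 2)/4$ arising from up to $2^{D/4}$ pairs, which is a direct numerical check. Should a cleaner margin be desired, replacing Hoeffding by the tighter Chernoff/KL bound $\exp(-D\,\mathrm{KL}(1/8\,\|\,1/2))$ gives additional slack. A fully deterministic alternative is available via a greedy packing: bound the Hamming ball of radius $D/8$ by $2^{D\,H(1/8)}$ with $H(1/8)\approx 0.544<7/8$, and pick points one at a time outside all previously formed balls. Either route delivers the claim, and no step goes beyond textbook concentration/volume estimates.
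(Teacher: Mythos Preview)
Your argument is correct: the Hoeffding bound $\exp(-9D/32)$ together with the union bound over at most $(M+1)M/2\le 2^{D/4}$ pairs gives failure probability $\exp\!\big(D(\tfrac{\log 2}{4}-\tfrac{9}{32})\big)<1$, so a good configuration exists; the points are automatically distinct since every pairwise Hamming distance exceeds $D/8\ge 1$.

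Note, however, that the paper does not prove this lemma at all: it is simply quoted from Tsybakov as a standard tool in the preliminaries section, so there is no ``paper's own proof'' to compare against. For context, Tsybakov's argument is the deterministic greedy packing you sketch at the end (bounding the Hamming ball of radius $D/8$ by $2^{D H(1/8)}$ and picking points one at a time), rather than the probabilistic method you use as your main route. Both approaches are textbook; the greedy/volume argument is slightly more elementary in that it avoids any concentration inequality, while your probabilistic proof is shorter to write and makes the numerical slack transparent.
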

\subsection{Proof of the Lower Bound}
To prove our lower bound, we construct a sequence of linear operators as follows:
\begin{equation}
    \notag
    \A_{\omega} = \sqrt{\frac{32\eps}{m_1 K}}\sum_{i=1}^{m_1}\sum_{j=1}^K\omega_{ij}\mu_{i+m_1}^{\beta'/2}\rho_{j+m_2}^{1-\gamma'/2} f_{j+m_2}\otimes e_{i+m_1},\quad \omega_{ij} \in \{0,1\}
\end{equation}
where $m_1$ and $m_2$ are hyper-parameters (scale as $\mathrm{poly}(N)$ and will be selected later) and $K$ is a constant that will be specified afterwards. It's easy to check that
\begin{equation}
    \notag
     \normx{\A_{\omega}-\A_{\omega'}}_{\beta',\gamma'}^2 = \frac{32\eps}{m_1 K}\sum_{i=1}^{m_1}\sum_{j=1}^K\left(\omega_{ij}-\omega_{ij}'\right)^2
\end{equation}
By Gilbert-Varshamov Lemma it is possible to select $M_{\eps} \geq 2^{m_1 K/8}$ binary strings
\begin{equation}
    \notag
    \omega^{(1)},\omega^{(2)},\cdots,\omega^{(M_{\eps})} \in \{0,1\}^{m_1 K}
\end{equation}
such that $\normx{\omega^{(i)}-\omega^{(j)}}_2^2 \geq 4\eps$. Let $\Omega$ be the collection of this strings. 

We now select the hyper-parameters to satisfies the assumptions made in \Cref{sec-problem}. First we have
\begin{equation}
    \notag
    \normx{\A_{\omega}}_{\beta,\gamma}^2 \leq \frac{32\eps}{m_1 K}\sum_{i=1}^{m_1}\sum_{j=1}^{K}\mu_{i+m_1}^{-(\beta-\beta')}\rho_{j+m_2}^{-(\gamma'-\gamma)} \lesssim \eps \left(2m_1\right)^{\frac{\beta-\beta'}{p}}\left(2m_2\right)^{\frac{\gamma'-\gamma}{q}}
\end{equation}
where the last step follows from \Cref{asmp-eigendecay}. Similarly, we have $\normx{\A_{\omega}}_{\alpha,1}^2 \lesssim \eps \left(2m_1\right)^{\frac{\alpha-\beta'}{p}}\left(2m_2\right)^{\frac{\gamma'-1}{q}}.$ To the assumptions made in \Cref{sec-problem}, we should make
\begin{equation}
    \label{rectangle_ineq}
    (2m_1)^{\frac{\max\{\alpha,\beta\}-\beta'}{p}}(2m_2)^{\frac{\gamma'-\gamma}{q}} \lesssim \eps^{-1}
\end{equation}
be satisfied. To be specific, with the previous selection of hyper-parameters,  we can  have $\normx{\A_{\omega}}_{\beta,\gamma} = \O(1)$ and
\begin{equation}
    \notag
    \sup_{g\in \mathrm{range}(\A_{\omega})}\normx{g}_{\H_L} \leq \sup_{f}\normx{ \A_{\omega}}_{\alpha,1}\cdot\normx{\left(I_{1,\alpha,P_K}^*\right)^{\dagger}f}_{\H_K^{\alpha}} < +\infty
\end{equation}
where the last step follows from our assumption on the input distribution \Cref{asmp-bounded-input-data}. This verifies that \Cref{asmp-beta-gamma-norm,asmp-bounded-output-data} hold for $\A_{\omega},\forall \omega\in\Omega$.

We now construct the hypothesis (probability distributions) as follows: for $\forall \omega \in \{0,1\}^{m_1}$, define
\begin{equation}
    \notag
    P_{\omega}(\mathrm{d} u, \mathrm{d} v) = \mathrm{d} \mathcal{N}\left( \A_{\omega} u, \Sigma\right)(v)\cdot \mathrm{d} P_K(u)
\end{equation}
where the covariance operator $\Sigma = \frac{\sigma^2}{K} \sum_{j=1}^K\rho_{j+m_2} u_{j+m_2}\otimes u_{j+m_2}$ for some constant $\sigma>0$. It's then easy to see that $\tr{\Sigma}=\sigma^2$, which satisfies \Cref{asmp-noise} .Note that the range of $C_{\omega}$ is  $\mathrm{span}(u_{m_2})$ and $\Sigma$ is non-degenerate on this subspace. As a result, we can view $P_{\omega},\omega\in \Omega$ as distributions on $\H_K \times \mathrm{span}\left(u_{j+m_2}:1\leq j\leq K\right)$, and we have for $\forall \omega,\omega' \in \Omega$ that
\begin{equation}
    \notag
    \begin{aligned}
    KL\left(P_{\omega}||P_{\omega'}\right) 
    &= \E_{u\sim P_K}\left[ KL\left( P_{\omega}(\mathrm{d} v\mid u)||  P_{\omega'}(\mathrm{d} v\mid u)\right)\right] \\
    &= \E_{f\sim P_K}\left[ KL\left( \mathcal{N}(\A_{\omega}u,\Sigma) || \mathcal{N}(\A_{\omega'}u,\Sigma)\right) \right] \\
    &= \E_{u\sim P_K} \left\langle (\A_{\omega}-\A_{\omega'})u, \Sigma^{\dagger} (\A_{\omega}-\A_{\omega'})u\right\rangle \\
    &\leq \sigma^{-2} K \E_{u\sim P_K} \left\langle (\A_{\omega}-\A_{\omega'})u, (\A_{\omega}-\A_{\omega'})u\right\rangle \\
    &= \frac{32\eps}{m_1\sigma^2} \E_{u\sim P_K}  \normx{\sum_{i=1}^{m_1}\sum_{j=1}^K(\omega_{ij}-\omega_{ij}')\mu_{i+m_1}^{\beta'/2}\rho_{j+m_2}^{1-\gamma'/2}\left\langle u,e_{i+m_1}\right\rangle f_{j+m_2}}_{\H_L}^2 \\
    &= \frac{32\eps}{m_1\sigma^2}\E_{u\sim P_K} \sum_{j=1}^K\rho_{j+m_2}^{1-\gamma'}\left( \sum_{i=1}^{m_1}(\omega_{ij}-\omega_{ij}')\mu_{i+m_1}^{\beta'/2}\left\langle u,e_{i+m_1}\right\rangle\right)^2 \\
    &= \frac{32\eps}{m_1\sigma^2}\sum_{i=1}^{m_1}\sum_{j=1}^K(\omega_{ij}-\omega_{ij}')^2 \mu_{i+m_1}^{\beta'}\rho_{j+m_2}^{1-\gamma'} \lesssim \eps\sigma^{-2}m_1^{-\frac{\beta'}{p}}m_2^{-\frac{1-\gamma'}{q}}
    \end{aligned}
\end{equation}
where the last step follows from $\E_{P_K}u\otimes u = \mathcal{C}_{KK} = \sum_{i=1}^{\infty}\mu_i^2 e_i\otimes e_i$ and recall that $K$ is a constant. Hence we deduce that
\begin{equation}
    \notag
    \frac{1}{M_{\eps}}\sum_{\omega'\in\Omega} KL(P_{\omega'}^N||P_{\omega}^N) \lesssim \sigma^{-2}N\eps m_1^{-\frac{\beta'}{p}}m_2^{-\frac{1-\gamma'}{q}} =: \alpha^*
\end{equation}
Applying \Cref{lower-bound-lemma}, we find that when
\begin{equation}
    \notag
    \alpha^* \lesssim \log M_{\eps} \Leftrightarrow \eps \lesssim N^{-1}m_1^{\frac{\beta'}{p}}m_2^{\frac{1-\gamma'}{q}},
\end{equation}
there exists a hypothesis $P_{\omega_0}$ such that for any estimator $\hat{\A}_{\omega_0}$,
\begin{equation}
    \notag
    \left\{ ||\hat{\A}_{\omega_0}-\A_{\omega_0}||_{\beta',\gamma'}^2 \geq \eps\right\} \supset \left\{ \omega_0 \neq \argmin_{\omega\in\Omega} || \hat{\A}_{\omega}-\A_{\omega_0}||_{\beta',\gamma'}\right\}
\end{equation}
holds with high probability.

Finally, we need to choose optimal $m_1$ and $m_2$ under the constraint \Cref{rectangle_ineq}. It turns out that either $m_1=1$ or $m_2=1$, and the resulting lower bound is
\begin{equation}
    \notag
    \normx{\hat{\A}-\A_0}_{\beta',\gamma'} \gtrsim N^{-\min\left\{ \frac{\max\{\alpha,\beta\}-\beta'}{2\left(\max\{\alpha,\beta\}+p\right)},\frac{\gamma'-\gamma}{2\left(1-\gamma\right)} \right\}}.
\end{equation}

\section{Proof of the upper bound}
\label{appsec:upper-proof}
In this section, we upper-bound the learning error of estimator \Cref{our-estimator} which defined as
\begin{equation}
    \hat{\A} = \sum_{j=1}^{y_N} \left( \rho_j^{\half}f_j\otimes \rho_j^{\half}f_j\right) \hat{\C}_{LK}\left(\hat{\C}_{KK}+\lambda_j I\right)^{-1},
\end{equation}
where $\lambda_j,1\leq j\leq y_N=N^{\frac{q}{1-\gamma'}\max\{1-\frac{\beta-\beta'}{\max\{\alpha,\beta+p\}},\frac{1-\gamma'}{1-\gamma}\}}$ are regularization coefficients that we impose on different dimensions of the output space. In this section, we consider the following two ways to select regularization coefficients in \Cref{sec-estimator}:
\begin{itemize}
    \item We regularize all spectral component below certain variance contour, \emph{i.e.} we set regularization strength {\small $\lambda_j = \max\left\{ \left(j^{-\frac{1-\gamma'}{q}}N^{\max\left\{ 1-\frac{\beta-\beta'}{\max\{\alpha,\beta+p\}},\frac{1-\gamma'}{1-\gamma} \right\}}\right)^{-\frac{1}{\beta'+p}}, c_0\left(\frac{N}{\log N}\right)^{-\frac{1}{\alpha}}  \right\}$} \Cref{variance-contour-coeff}.
    \item We regularize all spectral component below certain bias contour, \emph{i.e.} we set regularization strength {\small $\lambda_j = \max\left\{ \left(j^{-\frac{\gamma'-\gamma}{q}}N^{\min\left\{ \frac{\beta-\beta'}{\max\{\alpha,\beta+p\}},\frac{\gamma'-\gamma}{1-\gamma} \right\}}\right)^{-\frac{1}{\beta-\beta'}}, c_0\left(\frac{N}{\log N}\right)^{-\frac{1}{\alpha}}  \right\}$} \Cref{bias-contour-coeff-app}.
\end{itemize}

To obtain the upper bound for our estimator, we decompose the learning error $\mathcal{E}(\hat{\A}) = \normx{\hat{\A}-\A_0}_{\beta',\gamma'}$ in to bias and variance via
\begin{equation}
    \notag
    \mathcal{E}(\A) \leq \underbrace{|| \hat{\A}-\A_{\lambda}||_{\beta',\gamma'}}_{\text{variance term}} + \underbrace{|| \A_{\lambda} - \A_0||_{\beta',\gamma'}}_{\text{bias term}},
\end{equation}
where
\begin{equation}
    \label{regularized-op}
    \A_{\lambda} = \sum_{j=1}^{y_N} \left( \rho_j^{\frac{1}{2}}f_j \otimes \rho_j^{\frac{1}{2}} f_j\right) \mathcal{C}_{KL} \left(\mathcal{C}_{KK}+\lambda_j I\right)^{-1}.
\end{equation}

\subsection{Regularization via Variance Counter}

In the following, we separately bound the bias term and the variance term. We first assume $\alpha\leq\beta+p$ in \Cref{sec-bias} and \Cref{sec-var}, then the case $\alpha>\beta+p$ is treated in \Cref{sec-hard}. Finally in \Cref{bias-contour-rate}, we establish the same convergence rate for regularization via bias contour.

\subsubsection{Bias}
\label{sec-bias}

\begin{lemma}

$||\A_0-\A_{\lambda}||_{\beta',\gamma'}^2\lesssim N^{-\min\left\{\frac{\beta-\beta'}{\beta+p},\frac{\gamma'-\gamma}{1-\gamma}\right\}}.$

\end{lemma}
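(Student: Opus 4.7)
The plan is to split the bias $\A_\lambda-\A_0$ along the $\H_L$-eigenbasis and bound each slice separately. Let $P_j^{\H_L}:=\rho_j^{1/2}f_j\otimes\rho_j^{1/2}f_j$ denote the rank-one projection onto $\mathrm{span}(\rho_j^{1/2}f_j)$. Using $\C_{LK}=\A_0\C_{KK}$ (which follows from $\A_0=\C_{LK}\C_{KK}^\dagger$ and the data model $v=\A_0 u+\eps$ with $\E[\eps\mid u]=0$), the definition \eqref{regularized-op} rewrites as $\A_\lambda=\sum_{j\le y_N}P_j^{\H_L}\A_0\,\C_{KK}(\C_{KK}+\lambda_j I)^{-1}$, giving the clean splitting
\begin{equation}
\A_\lambda-\A_0=-\sum_{j\le y_N}\lambda_j P_j^{\H_L}\A_0(\C_{KK}+\lambda_j I)^{-1}-\sum_{j>y_N}P_j^{\H_L}\A_0.
\notag
\end{equation}
The projections $P_j^{\H_L}$ have pairwise orthogonal ranges and commute with $\C_{Q_L}^{-(1-\gamma')/2}$ (which is diagonal in the $\{\rho_j^{1/2}f_j\}$-basis), so the $(\beta',\gamma')$-HS-norm squared on the right-hand side splits into a sum over $j$ of one-dimensional contributions in $\H_L$.

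For each slice I use the spectral representation $\A_0=\sum_{i,j}a_{ij}\,\rho_j^{1-\gamma/2}f_j\otimes\mu_i^{\beta/2}e_i$, in which \Cref{asmp-beta-gamma-norm} reads $\sum_{i,j}a_{ij}^2\le B^2$. A direct coordinate computation in the orthonormal bases $\{\mu_i^{1/2}e_i\}$ of $\H_K$ and $\{\rho_j^{1/2}f_j\}$ of $\H_L$ yields, for $j\le y_N$,
\begin{equation}
\normx{P_j^{\H_L}(\A_\lambda-\A_0)}_{\beta',\gamma'}^2=\rho_j^{\gamma'-\gamma}\sum_{i\ge 1}a_{ij}^2\,\frac{\mu_i^{\beta-\beta'}\lambda_j^2}{(\mu_i+\lambda_j)^2},
\notag
\end{equation}
and the same expression without the $\lambda_j^2/(\mu_i+\lambda_j)^2$ factor for $j>y_N$. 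Applying the elementary one-variable inequality $\mu^{\beta-\beta'}\lambda^2/(\mu+\lambda)^2\le C\lambda^{\beta-\beta'}$ on $\mu\in(0,\mu_1]$ (valid once $\lambda\le\mu_1$, which holds for large $N$), together with $\mu_i^{\beta-\beta'}\le\mu_1^{\beta-\beta'}=O(1)$ for the tail and $\sum_{i,j}a_{ij}^2\le B^2$, produces
\begin{equation}
\normx{\A_\lambda-\A_0}_{\beta',\gamma'}^2\lesssim B^2\Big(\max_{1\le j\le y_N}\rho_j^{\gamma'-\gamma}\lambda_j^{\beta-\beta'}+\rho_{y_N}^{\gamma'-\gamma}\Big).
\notag
\end{equation}

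It remains to substitute the choice of $\lambda_j$ from \Cref{variance-contour-coeff} and the choice of $y_N$, and verify each of the two terms above is bounded by $N^{-\min\{(\beta-\beta')/(\beta+p),(\gamma'-\gamma)/(1-\gamma)\}}$. Writing $\theta=\max\{1-(\beta-\beta')/(\beta+p),(1-\gamma')/(1-\gamma)\}$, the tail is $\rho_{y_N}^{\gamma'-\gamma}\asymp N^{-\theta(\gamma'-\gamma)/(1-\gamma')}\le N^{-(\gamma'-\gamma)/(1-\gamma)}$ using $\theta\ge(1-\gamma')/(1-\gamma)$. For the ridge term, the function $b_j:=\rho_j^{\gamma'-\gamma}\lambda_j^{\beta-\beta'}$ is a single power of $j$ on each branch of the max defining $\lambda_j$, so its maximum over $[1,y_N]$ is attained at an endpoint; at $j=1$ the algebraic identity $\theta\cdot(\beta-\beta')/(\beta'+p)=(\beta-\beta')/(\beta+p)$ (which is exact on the branch where $\theta=1-(\beta-\beta')/(\beta+p)$) recovers the first term of the min, at $j=y_N$ the value simplifies to $N^{-\theta(\gamma'-\gamma)/(1-\gamma')}$ matching the tail, and on the ``floor'' branch $\lambda_j=c_0(N/\log N)^{-1/\alpha}$ the assumption $\alpha\le\beta+p$ gives $(N/\log N)^{-(\beta-\beta')/\alpha}\le N^{-(\beta-\beta')/(\beta+p)}$ up to logarithmic factors, which are absorbed into $\lesssim$. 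I expect this final case-by-case bookkeeping, tracking which branch of $\lambda_j$ is active and which endpoint of $[1,y_N]$ maximizes $b_j$, to be the only delicate part of the argument; the spectral-slice decomposition and the one-variable inequality driving it are otherwise routine.
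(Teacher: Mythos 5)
Your proposal is correct and follows essentially the same route as the paper: the spectral representation $\A_0=\sum_{i,j}a_{ij}\,\rho_j^{1-\gamma/2}f_j\otimes\mu_i^{\beta/2}e_i$, the coordinate identity producing the ridge factor $\lambda_j^2/(\mu_i+\lambda_j)^2$, the elementary bound $\mu^{\beta-\beta'}\lambda^2/(\mu+\lambda)^2\lesssim\lambda^{\beta-\beta'}$ combined with $\sum_{i,j}a_{ij}^2\le B^2$, and a branch-by-branch check of $\rho_j^{\gamma'-\gamma}\lambda_j^{\beta-\beta'}\lesssim N^{-\min\{\frac{\beta-\beta'}{\beta+p},\frac{\gamma'-\gamma}{1-\gamma}\}}$ using the explicit $\lambda_j$ and $y_N$ (your endpoint/monotonicity bookkeeping is equivalent to the paper's two sub-cases). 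The only substantive difference is that you explicitly bound the truncation tail $j>y_N$ via $\rho_{y_N}^{\gamma'-\gamma}\lesssim N^{-\frac{\gamma'-\gamma}{1-\gamma}}$, a term the paper's displayed computation silently drops, so your write-up is if anything slightly more complete.
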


\begin{proof-sketch} Since $\normx{\A_0}_{\beta,\gamma}\leq B$, we can write $\A_0 := \sum_{i=1}^{+\infty}\sum_{j=1}^{+\infty}a_{ij}\mu_i^{\frac{\beta}{2}}\rho_j^{1-\frac{\gamma}{2}} f_j \otimes e_i$ where the coefficient matrix $A_0 = (a_{ij})_{1\leq i,j\leq +\infty}$ satisfies $\normx{A_0}_{F}^2 \leq B^2$. The definition \Cref{regularized-op} implies that for $1\leq j\leq y_N$ and $i\geq 1$ we have
\begin{equation}
    \notag
    \begin{aligned}
        \left\langle \rho_j^{\half}f_j, \A_{\lambda}\mu_i^{\half}e_i\right\rangle &=
        \left\langle \rho_j^{\half}f_j, \mathcal{C}_{KL}\left(\mathcal{C}_{KK}+\lambda_j I\right)^{-1}\mu_i^{\half}e_i\right\rangle \\
        &= \left\langle \rho_j^{\half}f_j, \A_0 \C_{KK}\left(\mathcal{C}_{KK}+\lambda_j I\right)^{-1}\mu_i^{\half}e_i\right\rangle 
        = \frac{\mu_i^{\frac{1+\beta}{2}}}{\mu_i+\lambda_j}\rho_j^{\frac{1-\gamma}{2}} a_{ij}.
    \end{aligned}
\end{equation}
The bias term can be bounded as follows:
\begin{equation}
    \label{bias-ineq}
    \begin{aligned}
        ||\A_0-\A_{\lambda}||_{\beta',\gamma'}^2
        &= \sum_{i,j=1}^{+\infty} \left\langle \rho_j^{\frac{1}{2}}f_j, \mathcal{C}_{Q_{K}}^{-\frac{1-\gamma'}{2}}\left(\A_0-\A_{\lambda}\right)\mathcal{C}_{KK}^{\frac{1-\beta'}{2}} \mu_i^{\frac{1}{2}}e_i\right\rangle^2\\
        &= \sum_{j=1}^{y_N}\sum_{i=1}^{+\infty} \mu_i^{\beta-\beta'}\rho_j^{\gamma'-\gamma} \frac{\lambda_j^2}{(\mu_i+\lambda_j)^2}a_{ij}^2 \\
        &\leq  \sum_{j=1}^{y_N}\rho_j^{\gamma'-\gamma}\max_{i\geq 1}\left( \mu_i^{\beta-\beta'}\frac{\lambda_j^2}{(\mu_i+\lambda_j)^2}\right)\cdot\sum_{i=1}^{+\infty} a_{ij}^2 \\
        &\lesssim \sum_{j=1}^{y_N} j^{-\frac{\gamma'-\gamma}{q}} \lambda_j^{-(\beta-\beta')}\sum_{i=1}^{+\infty} a_{ij}^2 \lesssim B^2 \max_{1\leq j\leq y_N} j^{-\frac{\gamma'-\gamma}{q}} \lambda_j^{-(\beta-\beta')}.
    \end{aligned}
\end{equation}
We now prove that
\begin{equation}
    \label{bias-ineq-main}
    j^{\frac{\gamma'-\gamma}{q}} \lambda_j^{\beta-\beta'} \gtrsim N^{\min\left\{\frac{\beta-\beta'}{\beta+p},\frac{\gamma'-\gamma}{1-\gamma}\right\}},\quad \forall 1\leq j\leq y_N.
\end{equation}

\textbf{Case 1.} If $\lambda_j = c_0\left(\frac{N}{\log N}\right)^{\frac{1}{\alpha}}$, then
\begin{equation}
    \notag
    j^{\frac{\gamma'-\gamma}{q}} \lambda_j^{\beta-\beta'} \geq \lambda_j^{\beta-\beta'} \gtrsim N^{\frac{\beta-\beta'}{\alpha}} \geq N^{\frac{\beta-\beta'}{\beta+p}}
\end{equation}
where we use $\alpha\leq \beta+p$ in the final step.

\textbf{Case 2.} If $\lambda_j = \left( N^{\max\left\{ \frac{\beta'+p}{\beta+p},\frac{1-\gamma'}{1-\gamma} \right\}} j^{-\frac{1-\gamma'}{q}}\right)^{\frac{1}{\beta'+p}}$, we need to consider two sub-cases:
\begin{itemize}
    \item If $\frac{\beta'+p}{\beta+p}>\frac{1-\gamma'}{1-\gamma}$, then we have $\lambda_j = \left( N^{\frac{\beta'+p}{\beta+p}} j^{-\frac{1-\gamma'}{q}}\right)^{\frac{1}{\beta'+p}}$ and thus
    \begin{equation}
        \notag
        j^{\frac{\gamma'-\gamma}{q}} \lambda_j^{\beta-\beta'} = j^{\frac{\gamma'-\gamma}{q}} \left( N^{\frac{\beta'+p}{\beta+p}} j^{-\frac{1-\gamma'}{q}}\right)^{\frac{\beta-\beta'}{\beta'+p}} =
        N^{\frac{\beta-\beta'}{\beta+p}} j^{\frac{1-\gamma'}{q}\left(\frac{\gamma'-\gamma}{1-\gamma'}-\frac{\beta-\beta'}{\beta'+p}\right)} \geq N^{\frac{\beta-\beta'}{\beta+p}}.
    \end{equation}
    \item If $\frac{\beta'+p}{\beta+p}<\frac{1-\gamma'}{1-\gamma}$, then similarly we have $\lambda_j = \left( N^{\frac{1-\gamma'}{1-\gamma}} j^{-\frac{1-\gamma'}{q}}\right)^{\frac{1}{\beta'+p}}$ and
    \begin{equation}
        \notag
        j^{\frac{\gamma'-\gamma}{q}} \lambda_j^{\beta-\beta'} = j^{\frac{\gamma'-\gamma}{q}} \left( N^{\frac{1-\gamma'}{1-\gamma}} j^{-\frac{1-\gamma'}{q}}\right)^{\frac{\beta-\beta'}{\beta'+p}} \geq  y_N^{\frac{\gamma'-\gamma}{q}} \left( N^{\frac{1-\gamma'}{1-\gamma}} y_N^{-\frac{1-\gamma'}{q}}\right)^{\frac{\beta-\beta'}{\beta'+p}} = N^{\frac{\gamma'-\gamma}{1-\gamma}}.
    \end{equation}
\end{itemize}
Hence, in all cases \Cref{bias-ineq-main} holds and we have that
\begin{equation}
    \label{bias-upper-bound}
    \normx{\A_0-\A_{\lambda}}_{\beta',\gamma'}^2\lesssim N^{-\min\left\{\frac{\beta-\beta'}{\beta+p},\frac{\gamma'-\gamma}{1-\gamma}\right\}}.
\end{equation}
\end{proof-sketch}
\subsubsection{Variance}
\label{sec-var}
The variance term can be rewritten in the following way:
\begin{subequations}
    \label{variance-rewrite}
    \begin{align}
    \mathcal{V} &= \normx{\hat{\A}-\A_{\lambda}}_{\beta',\gamma'}^2 =  \normx{\mathcal{C}_{Q_{L}}^{-\frac{1-\gamma'}{2}}\left( \hat{\A}-\A_{\lambda}\right) \mathcal{C}_{KK}^{\frac{1-\beta'}{2}}}_{\HS}^2 \nonumber\\
    &= \sum_{i,j=1}^{+\infty} \left\langle \rho_j^{\frac{1}{2}}f_j, \mathcal{C}_{Q_{L}}^{-\frac{1-\gamma'}{2}}\left( \hat{\A}-\A_{\lambda}\right) \mathcal{C}_{KK}^{\frac{1-\beta'}{2}} \mu_i^{\frac{1}{2}}e_i\right\rangle^2 \label{variance-rewrite-1}\\
    &= \sum_{j=1}^{n_{N}} \rho_j^{-(1-\gamma')}\sum_{i=1}^{+\infty} \left\langle \rho_j^{\frac{1}{2}}f_j,  \left[\hat{\C}_{LK}\left( \hat{\mathcal{C}}_{KK}+\lambda_j I\right)^{-1}-\C_{LK}\left( \C_{KK}+\lambda_j I\right)^{-1}\right]\mu_i^{1-\frac{\beta'}{2}}e_i\right\rangle^2 \label{variance-rewrite-2} \\
    &= \sum_{j=1}^{n_{N}} \rho_j^{-(1-\gamma')}\sum_{i=1}^{+\infty} \left\langle \underbrace{\left(\C_{KK}+\lambda_j I\right)^{-\frac{1}{2}}\left[ \hat{\C}_{KL} - \left(\hat{\mathcal{C}}_{KK}+\lambda_j I\right) \left( \mathcal{C}_{KK}+\lambda_j I\right)^{-1} \C_{KL} \right]}_{=: U_j} \rho_j^{\frac{1}{2}}f_j, \right.\nonumber\\ 
    &\qquad \left.\underbrace{\left( \mathcal{C}_{KK}+\lambda_j I\right)^{\frac{1}{2}}\left( \hat{\mathcal{C}}_{KK}+\lambda_j I\right)^{-1}\left(\mathcal{C}_{KK}+\lambda_j I\right)^{\frac{1}{2}}}_{=: G_j}\frac{\mu_i^{1-\frac{\beta'}{2}}}{\sqrt{\mu_i+\lambda_j}}e_i\right\rangle^2\label{variance-rewrite-4}\\
    &= \sum_{j=1}^{n_{N}} \rho_j^{-(1-\gamma')} \left\langle U_j\rho_j^{\frac{1}{2}}f_j, G_j\left(\sum_{i=1}^{+\infty} \frac{\mu_i^{2-\beta'}}{\mu_i+\lambda_j} e_i\otimes e_i\right)G_j U_j\rho_j^{\frac{1}{2}}f_j\right\rangle\nonumber\\
    &\lesssim \sum_{j=1}^{n_{N}} j^{\frac{1-\gamma'}{q}} \normx{G_j}^2 \lambda_j^{-\beta'}\normx{U_j\rho_j^{\frac{1}{2}}f_j}^2\label{variance-rewrite-5}
    \end{align}
\end{subequations}
In \Cref{variance-rewrite}, \Cref{variance-rewrite-1} uses the definition of the Hilbert-Schmidt norm; \Cref{variance-rewrite-2} follows from the definition of $\hat{\A}$ (cf.\Cref{our-estimator}) and the fact that for any $j \geq y_N$, we have $\left\langle \rho_j^{\frac{1}{2}}f_j, \left(\hat{\A}-\A_{\lambda}\right) \mu_i^{\frac{1}{2}}e_i\right\rangle = 0$; \Cref{variance-rewrite-4} is obtained from re-arranging and \Cref{variance-rewrite-5} follows from $\normx{\sum_{i=1}^{+\infty}\frac{\mu_i^{2-\beta'}}{\mu_i+\lambda_j} e_i\otimes e_i} = \max_{i\geq 1} \frac{\mu_i^{1-\beta'}}{\mu_i+\lambda_j}\lesssim\lambda_j^{-\beta'}$ and $\rho_j\lesssim j^{-\frac{1}{q}}$.

Note that
\begin{equation}
    \notag
    \begin{aligned}
       U_j &= \left(\C_{KK}+\lambda_j I\right)^{-\half} \left[\hat{\C}_{KL}-\C_{KL} - \left(\hat{\C}_{KK}-\C_{KK}\right)\left( \mathcal{C}_{KK}+\lambda_j I\right)^{-1} \C_{KL}\right] \\
       &=  \frac{1}{N}\sum_{k=1}^N (\mathcal{C}_{KK}+\lambda_j I)^{-\frac{1}{2}} \left[ u_k\otimes v_k - \mathbb{E}_{P_{KL}} u_k\otimes \mathcal{A}_0u_k - (u_k \otimes u_k-\mathbb{E}_{P_{KL}}u_k \otimes u_k) \left(\C_{KK}+\lambda_j I\right)^{-1}\C_{KK} \A_0^* \right] \\
       &= \underbrace{\frac{1}{N}\sum_{k=1}^N  (\mathcal{C}_{KK}+\lambda_j I)^{-\frac{1}{2}}  \left(u_k\otimes (v_k-\mathcal{A}_0u_k)\right)}_{:=U_j^1}\\
       &+ \underbrace{\frac{1}{N}\sum_{k=1}^N  (\mathcal{C}_{KK}+\lambda_j I)^{-\frac{1}{2}}\left[u_k\otimes \mathcal{A}_0u_k-\mathbb{E}_{P_{KL}} u_k\otimes \mathcal{A}_0u_k - (u_k \otimes u_k-\mathbb{E}_{P_{KL}}u_k \otimes u_k) \left(\C_{KK}+\lambda_j I\right)^{-1}\C_{KK} \A_0^* \right]}_{:=U_j^2 = \lambda_j\frac{1}{N}\sum_{k=1}^N  \left(\mathcal{C}_{KK}+\lambda_j I\right)^{-\frac{1}{2}}\left(u_k \otimes \A_0 \left(\C_{KK}+\lambda_j I\right)^{-1} u_k-\mathbb{E}_{P_{KL}}u_k \otimes \A_0 \left(\C_{KK}+\lambda_j I\right)^{-1} u_k\right)}.
    \end{aligned}
\end{equation}
The $U_j^1$ term is the variance of observational noise and $U_j^2$ term is the variance of regularized bias. Thus the $U_j^1$ term is the dominating term. Plugging the above decomposition into \Cref{variance-rewrite}, we deduce that $\mathcal{V} \leq 2\left( \mathcal{V}_1+\mathcal{V}_2\right)$ where
\begin{equation}
    \label{var-series}
    \begin{aligned}
       \mathcal{V}_1 &\lesssim \max_{1\leq j\leq n_N}\normx{G_j}^2 \sum_{j=1}^{n_{N}}j^{\frac{1-\gamma'}{q}} \lambda_j^{-\beta'}\underbrace{ \normx{\frac{1}{N}\sum_{k=1}^N \left[ \left\langle v_k - \A_0 u_k ,\rho_j^{\frac{1}{2}}f_j\right\rangle \left(\mathcal{C}_{KK}+\lambda_j I\right)^{-\frac{1}{2}}u_k \right]}^2}_{:=\mathcal{V}_{1,j}^2} \\
       \mathcal{V}_2 &\lesssim \max_{1\leq j\leq n_N}\normx{G_j}^2 \sum_{j=1}^{n_{N}}j^{\frac{1-\gamma'}{q}} \lambda_j^{2-\beta'}\underbrace{ \normx{\left(\hat{\E}-\E\right)  \left[ \left\langle \A_0 \left( \C_{KK}+\lambda_j I \right)^{-1} u_k ,\rho_j^{\frac{1}{2}}f_j\right\rangle \left(\mathcal{C}_{KK}+\lambda_j I\right)^{-\frac{1}{2}}u_k \right]}^2}_{:=\mathcal{V}_{2,j}^2}
    \end{aligned}
\end{equation}
where $\hat{\mathbb{E}}[X] = \frac{1}{N}\sum_{k=1}^N X_k$ denotes the empirical mean.
Define the event
\begin{equation}
    \notag
    E_{1,j} = \left\{ G_j =  \normx{\left[\mathcal{P}_{i_j}\left( \mathcal{C}_{KK}\right)\right]^{\frac{1}{2}}\left[\mathcal{P}_{i_j}\left( \hat{\mathcal{C}}_{KK}\right)\right]^{\dagger}\left[\mathcal{P}_{i_j}\left( \mathcal{C}_{KK}\right)\right]^{\frac{1}{2}}}  \leq 2\sqrt{a_1}. \right\}.
\end{equation}
Recall that $m_{N} \leq c_0 \left(\frac{N}{\log N}\right)^{\frac{p}{\alpha}}$, by \Cref{conc-covariance}, we know that $E_{1,j}$ holds with probability $\geq 1-2e^{-a_1}$. As a result $E_1 = \cap_{j=1}^{n_N} E_{1,j}$ holds with probability $\geq 1-2n_N e^{-a_1}$. We assume event $E_{1}$ holds in all the following proof.

\textbf{Bounding $\mathcal{V}_1$.} Let
\begin{equation}
    \notag
    X_{j,k} = j^{\frac{1-\gamma'}{2q}}\lambda_j^{-\frac{\beta'}{2}}\left\langle v_{k}-\mathcal{A}_{0} u_{k}, \rho_{j}^{\frac{1}{2}} f_{j}\right\rangle \left(\C_{K K}+\lambda_j I\right)^{-\frac{1}{2}} u_k \in \H_K
\end{equation}
and
$
    X_k = \left( X_{j,k}:1\leq j\leq n_N\right) \in \H_K^{y_N}.
$ Then we have $\mathcal{V}_1 \lesssim \normx{\frac{1}{N}\sum_{k=1}^N X_k}^2$ where the norm here defined for $\H_K^{\otimes y_N}$ is induced by $\left\langle a,b \right\rangle = \sum_{i=1}^{n_N}\left\langle a_i, b_i\right\rangle_{\H_K}$. Note that $X_k, k=1,2,\cdots,N$ are i.i.d. random variables with mean zero, and
\begin{equation}
    \notag
    \begin{aligned}
       &\quad \E\normx{X_1}^{2t}
       = \E_{P_{KL}}\left[ \left(\sum_{j=1}^{n_N} \normx{X_{j,k}}^2\right)^t  \right] \\
       &= \E_{P_{KL}}\left[ \left(\sum_{j=1}^{n_N} j^{\frac{1-\gamma'}{q}}\lambda_j^{-\beta'} \left\langle v_{1}-\mathcal{A}_{0} u_{1}, \rho_{j}^{\frac{1}{2}} f_{j}\right\rangle^2 \normx{\left(\C_{KK}+\lambda_j I\right)^{-\frac{1}{2}}u}^2\right)^t  \right] \\
       &\leq \max_{1\leq j\leq y_N}\sup_{u\in\mathrm{supp}(P_K)}\left( \underbrace{j^{\frac{1-\gamma'}{q}}i_j^{\frac{\beta'}{p}} \normx{\left(\C_{KK}+\lambda_j I\right)^{-\frac{1}{2}}u}^2}_{=:G_1}\right)^{t-1}\cdot\\
       &\quad \underbrace{\E_{(u,v)\sim P_{KL}}\left[  \normx{v-\mathcal{A}_{0} u}^{2t-2}\left(\sum_{j=1}^{n_N} j^{\frac{1-\gamma'}{q}}\lambda_j^{-\beta'} \left\langle v-\mathcal{A}_{0} u, \rho_{j}^{\frac{1}{2}} f_{j}\right\rangle^2 \normx{\left(\C_{KK}+\lambda_j I\right)^{-\frac{1}{2}} u}^2\right) \right]}_{=:G_2}
    \end{aligned}
\end{equation}

By \Cref{uniform-upper} we have 
\begin{equation}
    \notag
    G_1 \lesssim j^{\frac{1-\gamma'}{q}}\lambda_j^{-(\beta'+\alpha)}.
\end{equation}
For $G_2$, note that for fixed $u$, \Cref{asmp-noise} implies that
\begin{equation}
    \notag
    \begin{aligned}
       &\quad \E_{v\mid u}\left[ \normx{v-\mathcal{A}_{0} u}^{2t-2}\left(\sum_{j=1}^{n_N} j^{\frac{1-\gamma'}{q}}i_j^{\frac{\beta'}{p}} \left\langle v-\mathcal{A}_{0} u, \rho_{j}^{\frac{1}{2}} f_{j}\right\rangle^2 \normx{\left(\C_{KK}+\lambda_j I\right)^{-\frac{1}{2}}u}^2\right) \right] \\
       &\leq \frac{1}{2}(2t)! R^{2t-2} \sum_{j=1}^{n_N} \sigma_j^2 j^{\frac{1-\gamma'}{q}}\lambda_j^{-\beta'}\normx{\left(\C_{KK}+\lambda_j I\right)^{-\frac{1}{2}}u}^2.
    \end{aligned}
\end{equation}
where $\sigma_j^2 = \left\langle \rho_j^{\frac{1}{2}}f_j, V\rho^{\frac{1}{2}}f_j\right\rangle$. As a result, we have
\begin{equation}
    \notag
    G_2 \leq \E_{P_K}\left[ \frac{1}{2}(2t)! R^{2t-2} \sum_{j=1}^{n_N} \sigma_j^2 j^{\frac{1-\gamma'}{q}}i_j^{\frac{\beta'}{p}} \normx{\left(\C_{KK}+\lambda_j I\right)^{-\frac{1}{2}}u}^2 \right] \leq \frac{1}{2}(2t)! R^{2t-2} \sigma^2 \max_{1\leq j\leq n_N} j^{\frac{1-\gamma'}{q}}\lambda_j^{-(p+\beta')},
\end{equation}
where in the second step we use $\sum_{j=1}^{+\infty}\sigma_j^2 = \tr{V}=\sigma^2$ and
\begin{equation}
    \notag
    \begin{aligned}
       \E_{P_K}\left[ \normx{\left(\C_{KK}+\lambda_j I\right)^{-\frac{1}{2}}u}^2\right]  &\leq \tr{\E_{P_K}\left[ \left(\C_{KK}+\lambda_j I\right)^{-\frac{1}{2}}u \otimes \left(\C_{KK}+\lambda_j I\right)^{-\frac{1}{2}}u \right]} \\
       &= \tr{\sum_{i=1}^{+\infty}\frac{\mu_i^2}{\mu_i+\lambda_j} e_i\otimes e_i} \\
       &= \sum_{i=1}^{+\infty} \frac{\mu_i}{\mu_i+\lambda_j}\\
       &\lesssim \lambda_j^{-p}.
    \end{aligned}
\end{equation}

We have shown that for some constant $c_1 > 0$, 
\begin{equation}
    \notag
    \E\normx{X_1}^{2t} \leq \frac{1}{2}(2t)!\sigma^2\max_{1\leq j\leq n_N} j^{\frac{1-\gamma'}{q}}\lambda_j^{-(p+\beta')}\cdot \left( c_1 R^2 \max_{1\leq j\leq n_N} j^{\frac{1-\gamma'}{q}}\lambda_j^{-(\beta'+p)} \right)^{t-1}.
\end{equation}
By Bernstein's inequality, the event
\begin{equation}
    \label{bernstein-var1}
    E_{2} := \left\{  \normx{\frac{1}{N} \sum_{k=1}^N X_k}^2 \leq 6 a_2 \left( \frac{\sigma^2\max_{j\in[y_N]} j^{\frac{1-\gamma'}{q}}\lambda_j^{-(\beta'+p)}}{N} + \frac{c_1 R^2 \max_{1\leq j\leq n_N} j^{\frac{1-\gamma'}{q}}\lambda_j^{-(\beta'+\alpha)}}{N^2} \right)  \right\}
\end{equation}

holds with probability $\geq 1-2e^{-a_2}$. By our definition of $\lambda_j$, we have
\begin{equation}
    \notag
    \max_{1\leq j\leq n_N} j^{\frac{1-\gamma'}{q}}\lambda_j^{-(\beta'+p)} \lesssim N^{\max\left\{ \frac{\beta'+p}{\beta+p},\frac{1-\gamma'}{1-\gamma} \right\}}
\end{equation}
and $\lambda_j \gtrsim N^{-\frac{1}{\alpha}}$ (which implies that the $\frac{1}{N^2}$ term is dominated by the $\frac{1}{N}$ term). Hence, under $E_1\cap E_2$ we have
\begin{equation}
    \notag
    \mathcal{V}_1 \lesssim a_1 a_2 \sigma^2 N^{-\min\left\{ \frac{\beta-\beta'}{\beta+p},\frac{\gamma'-\gamma}{1-\gamma} \right\}}
\end{equation}
with probability $\geq 1-2 n_N e^{-a_2}$.

\textbf{Bounding $\mathcal{V}_2$.} For any $j \in\mathbb{Z}_{+}$ we have
\begin{subequations}
    \label{var-4}
    \begin{align}
       &\quad \E_{u\sim P_K}\left[ \left\langle \A_0\left(\C_{KK}+\lambda_j I\right)^{-1}u, \rho_j^{\frac{1}{2}}f_j\right\rangle^2  \right]\nonumber \\
       &= \E_{u\sim P_K} \left\langle \rho_j^{\frac{1}{2}}f_j, \E_{P_K}\left[ \A_0\left(\C_{KK}+\lambda_j I\right)^{-1}u\otimes \A_0\left(\C_{KK}+\lambda_j I\right)^{-1}u\right] \rho_j^{\frac{1}{2}}f_j\right\rangle \nonumber \\
       &= \left\langle \rho_j^{\frac{1}{2}}f_j,  \A_0\left(\C_{KK}+\lambda_j I\right)^{-1} \mathcal{C}_{KK} \left(\C_{KK}+\lambda_j I\right)^{-1}\A_0^* \rho_j^{\frac{1}{2}}f_j\right\rangle \label{var-4-1} \\
       &= \rho_j^{1-\gamma} \left\langle \left(\mathcal{C}_{Q_{L}}^{-\frac{1-\gamma}{2}}\A_0 \mathcal{C}_{KK}^{\frac{1-\beta}{2}}\right)^*\rho_j^{\frac{1}{2}}f_j, 
       \left(\C_{KK}+\lambda_j I\right)^{-1} \mathcal{C}_{KK}^{\beta} \left(\C_{KK}+\lambda_j I\right)^{-1}\left(\mathcal{C}_{Q_{K}}^{-\frac{1-\gamma}{2}}\A_0 \mathcal{C}_{KK}^{\frac{1-\beta}{2}}\right)^*\rho_j^{\frac{1}{2}}f_j\right\rangle \label{var-4-2} \\
       &\lesssim j^{-\frac{1-\gamma}{q}} \lambda_j^{-(2-\beta)} \underbrace{\normx{\left(\mathcal{C}_{Q_{L}}^{-\frac{1-\gamma}{2}}\A_0 \mathcal{C}_{KK}^{\frac{1-\beta}{2}}\right)^*\rho_j^{\frac{1}{2}}f_j}^2}_{=: D_{j,2}} \label{var-4-3}
    \end{align}
\end{subequations}
where \Cref{var-4-1} follows from $\E_{P_K}u\otimes u = \mathcal{C}_{KK}$, \Cref{var-4-2} uses the fact that $\mathcal{C}_{KK}$ and $\C_{KK}+\lambda_j I$ commute, and lastly \Cref{var-4-3} follows from $\normx{\left(\C_{KK}+\lambda_j I\right)^{-1} \mathcal{C}_{KK}^{\beta} \left(\C_{KK}+\lambda_j I\right)^{-1}}_{\H_K} \propto \lambda_j^{-(2-\beta)}$.

Let
\begin{equation}
    \notag
    Y_{j,k} = \left\langle\A_{0}\left(\C_{KK}+\lambda_j I\right)^{-1} u_{k}, \rho_{j}^{\frac{1}{2}} f_{j}\right\rangle \left(\C_{KK}+\lambda_j I\right)^{-\frac{1}{2}} u_k \in \H_K
\end{equation}
and
\begin{equation}
    \notag
    Y_k = \left( Y_{j,k}:1\leq j\leq y_N\right) \in \H_K^{n_N}.
\end{equation}
Then we have
\begin{equation}
    \notag
    \mathcal{V}_2 \lesssim \normx{\frac{1}{N}\sum_{k=1}^N Y_k}_{\H_K^{y_N}}^2.
\end{equation}
Note that $Y_k, k=1,2,\cdots, N$ are i.i.d. random variables, and
\begin{equation}
    \label{var-moment-2-1}
    \begin{aligned}
       &\quad \E\normx{Y_1}^{2t} = \E\left[ \left(\sum_{j=1}^{n_N}\normx{Y_{j,k}}^2\right)^t\right] \\
       &= \E_{P_{K}}\left[\left(\sum_{j=1}^{n_{N}} j^{\frac{1-\gamma^{\prime}}{q}} \lambda_j^{2-\beta'}\left\langle \A_{0}\left(\C_{KK}+\lambda_j I\right)^{-1} u_{1}, \rho_{j}^{\frac{1}{2}} f_{j}\right\rangle^{2}\normx{\mathcal{C}_{K K}^{-\frac{1}{2}} \mathcal{I}_{i_{j}}\left(u_{1}\right)}^{2}\right)^{t}\right] \\
       &\leq \sup_{u\in\mathrm{supp}(P_K)} \left(\sum_{j=1}^{n_{N}} j^{\frac{1-\gamma^{\prime}}{q}} \lambda_j^{2-\beta'}\left\langle \A_{0}\left(\C_{KK}+\lambda_j I\right)^{-1} u, \rho_{j}^{\frac{1}{2}} f_{j}\right\rangle^{2}\normx{\left(\C_{KK}+\lambda_j I\right)^{-\half} u}^{2}\right)^{t-1}\cdot \\
       &\quad \sum_{j=1}^{n_N} j^{\frac{1-\gamma^{\prime}}{q}} \lambda_{j}^{2-\beta'} \E\left[\left\langle \A_{0}\left(\C_{KK}+\lambda_j I\right)^{-1} u, \rho_{j}^{\frac{1}{2}} f_{j}\right\rangle^{2}\right] \sup_{u\in\mathrm{supp}(P_K)} \normx{\left(\C_{KK}+\lambda_j I\right)^{-\half} u}^{2}  \\
       &\lesssim \sup_{u\in\mathrm{supp}(P_K)} \left(\sum_{j=1}^{n_{N}} j^{\frac{1-\gamma^{\prime}}{q}} \lambda_{j}^{2-\beta'-\alpha}\left\langle \A_{0}\left(\C_{KK}+\lambda_j I\right)^{-1} u, \rho_{j}^{\frac{1}{2}} f_{j}\right\rangle^{2}\right)^{t-1}\cdot  \sum_{j=1}^{n_N} j^{\frac{1-\gamma^{\prime}}{q}} \lambda_{j}^{-(\beta'+\alpha-\beta)} D_{j,2}.
    \end{aligned}
\end{equation}

For any $j \in\mathbb{Z}_{+}$ and $u \in\mathrm{supp}(P_K)$ we have
\begin{subequations}
    \label{var-3}
    \begin{align}
       &\quad \sum_{j=1}^{n_{N}} j^{\frac{1-\gamma^{\prime}}{q}} \lambda_{j}^{-(\beta'+\alpha)} \left\langle \A_0\left(\C_{KK}+\lambda_j I\right)^{-1}\lambda_j u, \rho_j^{\frac{1}{2}} f_j\right\rangle^2\nonumber \\
       &\leq \sum_{j=1}^{n_{N}} j^{\frac{1-\gamma^{\prime}}{q}} \lambda_{j}^{-(\beta'+\alpha)}\left(2\left\langle \A_0 u, \rho_j^{\frac{1}{2}}f_j\right\rangle^2 + 2\rho_j^{1-\gamma}\left\langle \mathcal{C}_{Q_{K}}^{-\frac{1-\gamma}{2}}\A_0\left(\C_{KK}+\lambda_j I\right)^{-1}\C_{KK}u, \rho_j^{\frac{1}{2}} f_j\right\rangle^2\right) 
       \label{var-3-1}\\
       &\lesssim \max_{1\leq j\leq y_N} j^{\frac{1-\gamma^{\prime}}{q}} \lambda_{j}^{-(\beta'+\alpha)} + \sum_{j=1}^{n_N} \lambda_j^{-(\beta'+\alpha)}\lambda_j^{-\max\{\alpha-\beta,0\}}  \normx{\left(\mathcal{C}_{Q_{K}}^{-\frac{1-\gamma}{2}}\A_0 \mathcal{C}_{KK}^{\frac{1-\beta}{2}}\right)^* \rho_j^{\frac{1}{2}} f_j}^2\label{var-3-2} \\
       &\lesssim \max_{1\leq j\leq y_N} j^{\frac{1-\gamma^{\prime}}{q}} \lambda_{j}^{-(\beta'+\alpha)} + \max_{1\leq j\leq y_N} \lambda_j^{-(\beta'+\alpha)-\max\{\alpha-\beta,0\}}.\label{var-3-3}
    \end{align}
\end{subequations}
where \Cref{var-3-1} uses the AM-GM inequality, \Cref{var-3-2} follows from the assumption that $||\A_0 u|| \leq A_2$ is uniformly bounded, and that 
\begin{equation}
    \notag
    \normx{\C_{KK}^{-\frac{1-\beta}{2}}\left(\C_{KK}+\lambda_j I\right)^{-1}\C_{KK}u} = \normx{\C_{KK}^{1-\frac{\alpha-\beta}{2}}\left(\C_{KK}+\lambda_j I\right)^{-1}\left(\C_{KK}^{-\frac{1-\alpha}{2}} u\right)} \lesssim \lambda_j^{-\max\{\alpha-\beta,0\}}.
\end{equation}
by \Cref{asmp-bounded-input-data}, and lastly \Cref{var-3-3} follows from $\normx{\A_0}_{\beta,\gamma} \leq B$.
\\

Plugging into \Cref{var-moment-2-1}, we deduce that
\begin{equation}
    \notag
    \begin{aligned}
       &\quad \E \normx{Y_1}^{2t} \\
       &\lesssim \sup_{u\in\mathrm{supp}(P_K)}\left( \max_{1\leq j\leq n_N} j^{\frac{1-\gamma^{\prime}}{q}} \lambda_{j}^{-(\beta'+\alpha)} + \max_{1\leq j\leq n_N} \lambda_j^{-(\beta'+\alpha)-\max\{\alpha-\beta,0\}}\right)^{t-1}\cdot \sum_{j=1}^{n_N} j^{\frac{1-\gamma^{\prime}}{q}} \lambda_{j}^{-(\beta'+\alpha-\beta)} D_{j,2} \\
       &\lesssim \sup_{u\in\mathrm{supp}(P_K)}\left( \max_{1\leq j\leq n_N} j^{\frac{1-\gamma^{\prime}}{q}} \lambda_{j}^{-(\beta'+\alpha)} + \max_{1\leq j\leq n_N} \lambda_j^{-(\beta'+\alpha)-\max\{\alpha-\beta,0\}}\right)^{t-1} \max_{1\leq j \leq n_N} j^{\frac{1-\gamma^{\prime}}{q}} \lambda_{j}^{-(\beta'+\alpha-\beta)}
    \end{aligned}
\end{equation}
where the last step follows from $\sum_{j=1}^{+\infty}D_{j,2} = \normx{\A_0}_{\beta,\gamma}^2$.

By Bernstein's inequality, there exists a constant $C_3$ such that the event
\begin{equation}
    \label{bernstein-var2}
    E_{3} = \left\{ \mathcal{V}_{2} \leq 6 a_3 C_3 \left(\frac{ j^{\frac{1-\gamma'}{q}}\lambda_{j}^{-(\beta'+\alpha-\beta)}}{N}+ \frac{\max_{1\leq j\leq n_N} \lambda_{j}^{-(\beta'+\alpha)} \left(j^{\frac{1-\gamma^{\prime}}{q}} + \lambda_j^{-\max\{\alpha-\beta.0\}} \right) }{N^2} \right)\right\}
\end{equation}
holds with probability $\geq 1-2e^{-a_3}$. 

The definition of $\lambda_N$ implies that the $\frac{1}{N^2}$ term is dominated by the $\frac{1}{N}$ term, so 
\begin{equation}
    \notag
    \mathcal{V}_2 \lesssim a_1 a_3 \frac{1}{N} \max_{1\leq j\leq y_N} j^{\frac{1-\gamma}{q}}\lambda_j^{-(\beta'+\alpha-\beta)} \lesssim N^{-\min\left\{\frac{\beta-\beta'}{\beta+p},\frac{\gamma'-\gamma}{1-\gamma}\right\}}
\end{equation}
holds under $E_1\cap E_3$. To summarize, under $E_1\cap E_2\cap E_3$ which holds with probability $\geq 1-2n_N e^{-a_1}-2e^{-a_2}-2e^{-a_3}$, we have

\begin{equation}
    \notag
    \mathcal{V} \leq 2 a_1\max\{a_2,a_3\}\left(\mathcal{V}_1+\mathcal{V}_2\right) \lesssim N^{-\min\left\{\frac{\beta-\beta'}{\beta+p},\frac{\gamma'-\gamma}{1-\gamma}\right\}}.
\end{equation}
Recall that the bias term is upper bounded in \Cref{bias-upper-bound}. This gives the final upper bound
\begin{equation}
    \notag
    ||\hat{\A}-\A_0||_{\beta',\gamma'} \lesssim N^{-\min\left\{\frac{\beta-\beta'}{2(\beta+p)},\frac{\gamma'-\gamma}{2(1-\gamma)}\right\}}.
\end{equation}

\subsubsection{The hard-learning regime}
\label{sec-hard}
In the previous sections, we focus on the case where $\alpha \leq \beta+p$ and establish an upper bound for the convergence rate via an optimal bias-variance trade-off. The opposite case, $\alpha >\beta+p$ is referred to as the hard-learning regime, for which the optimal rate is not known for several decades even in the case of $\gamma=1$ (cf. the discussion following ~\cite[Theorem 2]{fischer2020sobolev}). In the hard learning regime the $\mathcal{V}_2$ term becomes the leading terms.

In this section, we use the technique developed in previous sections to obtain an upper bound in the hard-learning regime. To do this, we need to re-define the truncation set $S_N$ as follows:
\begin{equation}
    \notag
    S_N = \left\{ (x,y)\in\mathbb{Z}^2 \left|  x^{\frac{\beta'+\alpha-\beta}{p}}y^{\frac{1-\gamma'}{q}} \leq N^{1-\min\left\{ \frac{\beta-\beta'}{\alpha},\frac{\gamma'-\gamma}{1-\gamma}\right\}} \text{ and } x \leq c_0 \left(\frac{N}{\log N}\right)^{\frac{p}{\alpha}}  \right.\right\}.
\end{equation}
The definition implies that the variance can be controlled by $N^{-\min\left\{ \frac{\beta-\beta'}{2\alpha},\frac{\gamma'-\gamma}{2(1-\gamma)}\right\}}$ and it remains to focus on the bias term.

Similar to the derivations in \Cref{sec-bias}, we have
\begin{equation}
    \notag
    || \A_0 - \T_N\left(\A_0\right)||_{\beta',\gamma'}^2 \lesssim \max_{(i,j)\notin S_N} i^{-\frac{\beta-\beta'}{p}} j^{-\frac{\gamma'-\gamma}{q}}.
\end{equation}
The maximum value of the right hand side can be achieved in either of the following two cases:
\begin{itemize}
    \item $i = \O(1)$. Then we have $j \gtrsim N^{\frac{q}{1-\gamma'}\left(1-\min\left\{ \frac{\beta-\beta'}{\alpha},\frac{\gamma'-\gamma}{1-\gamma}\right\}\right)}$ so that
    \begin{equation}
        \notag
        i^{-\frac{\beta-\beta'}{p}} j^{-\frac{\gamma'-\gamma}{q}} \lesssim N^{-\frac{\gamma'-\gamma}{1-\gamma'}\left(1-\min\left\{ \frac{\beta-\beta'}{\alpha},\frac{\gamma'-\gamma}{1-\gamma}\right\}\right)} \leq N^{-\frac{\gamma'-\gamma}{1-\gamma}}.
    \end{equation}
    \item $j = \O(1)$. In this case we must have $i\lesssim N^{\min\left\{ \frac{p}{\alpha},\frac{p}{\beta-\beta'}\frac{\gamma'-\gamma}{1-\gamma} \right\}}$, otherwise it falls into $S_N$ by definition. Hence we have
    \begin{equation}
        \notag
        i^{-\frac{\beta-\beta'}{p}} j^{-\frac{\gamma'-\gamma}{q}} \leq i^{-\frac{\beta-\beta'}{p}} \lesssim N^{-\min\left\{ \frac{\beta-\beta'}{\alpha},\frac{\gamma'-\gamma}{1-\gamma}\right\}}.
    \end{equation}
\end{itemize}
On the other hand, for the variance term we still have $\mathcal{V}_1 \lesssim \frac{1}{N}\max_{1\leq j\leq n_N}j^{\frac{1-\gamma'}{q}}i_j^{\frac{\beta'+p}{p}}$ and $\mathcal{V}_2 \leq \frac{1}{N}\max_{1\leq j\leq n_N} j^{\frac{1-\gamma'}{q}}i_j^{\frac{\beta'+\alpha-\beta}{p}}$, so that
\begin{equation}
    \notag
    \mathcal{V} \lesssim \frac{1}{N}\max_{1\leq j\leq n_N} j^{\frac{1-\gamma'}{q}}i_j^{\frac{\beta'+\alpha-\beta}{p}} \leq N^{-\min\left\{ \frac{\beta-\beta'}{\beta+p},\frac{\gamma'-\gamma}{1-\gamma}\right\}}.
\end{equation}
As a result, we can obtain the following convergence rate:
\begin{equation}
    \notag
    \normx{\hat{\A}-\A_0}_{\beta',\gamma'} \lesssim N^{-\min\left\{ \frac{\beta-\beta'}{2\alpha},\frac{\gamma'-\gamma}{2(1-\gamma)}\right\}}.
\end{equation}

\subsection{Regularization via bias contour}
\label{bias-contour-rate}

In this subsection, we analyze the convergence rate of regularization via bias contour (cf. \Cref{contours_fig}). Specifically, we consider the estimator \Cref{our-estimator} with the choice
\begin{equation}
    \label{bias-contour-coeff-app}
    \lambda_j = \max\left\{ \left(j^{-\frac{\gamma'-\gamma}{q}}N^{\min\left\{ \frac{\beta-\beta'}{\max\{\alpha,\beta+p\}},\frac{\gamma'-\gamma}{1-\gamma} \right\}}\right)^{-\frac{1}{\beta-\beta'}}, c_0\left(\frac{N}{\log N}\right)^{-\frac{1}{\alpha}}  \right\}.
\end{equation}

It now remains to plug the above $\lambda_j$ into our bounds for bias and variance derived in the previous subsections.

\textbf{Bounding the bias term.} It follows from \Cref{bias-ineq} that
\begin{equation}
    \notag
    \begin{aligned}
        \normx{\A_0-\A_{\lambda}}_{\beta,\gamma'}^2
        &\lesssim \max_{1\leq j\leq y_N} j^{-\frac{\gamma'-\gamma}{q}}\lambda_j^{\beta-\beta'} \\
        &\lesssim \max\left\{ N^{-\min\left\{ \frac{\beta-\beta'}{\max\{\alpha,\beta+p\}},\frac{\gamma'-\gamma}{1-\gamma} \right\}}, c_0 \left( \frac{N}{\log N}\right)^{-\frac{\beta-\beta'}{\alpha}} \right\} \\
        &\lesssim N^{-\min\left\{ \frac{\beta-\beta'}{\max\{\alpha,\beta+p\}},\frac{\gamma'-\gamma}{1-\gamma} \right\}}.
    \end{aligned}
\end{equation}

\textbf{Bounding the variance term} It follows from \Cref{bernstein-var1,bernstein-var2} that the variance is bounded by
\begin{equation}
    \notag
    \normx{\hat{\A}-\A_{\lambda}}_{\beta',\gamma'}^2 \lesssim \frac{1}{N}\max_{1\leq j\leq y_N} j^{\frac{1-\gamma'}{q}}\lambda_j^{-\left(\beta'+\max\{\alpha-\beta,p\}\right)}.
\end{equation}
As before, we consider the cases $\alpha\leq\beta+p$ and $\alpha>\beta+p$ separately.
\begin{itemize}
    \item If $\alpha\leq\beta+p$, then it follows that
    \begin{equation}
    \notag
        \begin{aligned}
            \normx{\hat{\A}-\A_{\lambda}}_{\beta',\gamma'}^2 &\lesssim \frac{1}{N}\max_{1\leq j\leq y_N} j^{\frac{1-\gamma'}{q}}\lambda_j^{-\left(\beta'+p\right)} \\
            &\lesssim \frac{1}{N}\max_{1\leq j\leq y_N} j^{\frac{1-\gamma'}{q}}\left(j^{-\frac{\gamma'-\gamma}{q}}N^{\min\left\{ \frac{\beta-\beta'}{\beta+p},\frac{\gamma'-\gamma}{1-\gamma} \right\}}\right)^{\frac{\beta'+p}{\beta-\beta'}} \\
            &\lesssim \frac{1}{N}\max_{1\leq j\leq y_N} j^{\frac{\gamma'-\gamma}{q}\left(\frac{1-\gamma'}{\gamma'-\gamma}-\frac{\beta'+p}{\beta-\beta'}\right)} N^{\frac{\beta'+p}{\beta-\beta'}\min\left\{ \frac{\beta-\beta'}{\beta+p},\frac{\gamma'-\gamma}{1-\gamma} \right\}} \\
            &= \frac{1}{N}\max_{j\in\{1,y_N\}} j^{\frac{\gamma'-\gamma}{q}\left(\frac{1-\gamma'}{\gamma'-\gamma}-\frac{\beta'+p}{\beta-\beta'}\right)} N^{\frac{\beta'+p}{\beta-\beta'}\min\left\{ \frac{\beta-\beta'}{\beta+p},\frac{\gamma'-\gamma}{1-\gamma} \right\}} \\
            &= N^{\min\left\{ \frac{\beta-\beta'}{\beta+p},\frac{\gamma'-\gamma}{1-\gamma} \right\}\max\left\{\frac{\beta'+p}{\beta-\beta'},\frac{1-\gamma'}{\gamma'-\gamma}\right\}-1} \\
            &= N^{-\min\left\{ \frac{\beta-\beta'}{\beta+p},\frac{\gamma'-\gamma}{1-\gamma} \right\}},
        \end{aligned}
    \end{equation}
    where we use $y_N^{\frac{\gamma'-\gamma}{q}}=N^{\min\left\{ \frac{\beta-\beta'}{\beta+p},\frac{\gamma'-\gamma}{1-\gamma} \right\}}$ by definition.
    \item If $\alpha>\beta+p$, then similarly we have
    \begin{equation}
        \notag
        \begin{aligned}
            \normx{\hat{\A}-\A_{\lambda}}_{\beta',\gamma'}^2 &\lesssim \frac{1}{N}\max_{1\leq j\leq y_N} j^{\frac{1-\gamma'}{q}}\lambda_j^{-\beta'+\alpha-\beta} \\
            &\lesssim \frac{1}{N}\max_{1\leq j\leq y_N} j^{\frac{1-\gamma'}{q}}\left(j^{-\frac{\gamma'-\gamma}{q}}N^{\min\left\{ \frac{\beta-\beta'}{\alpha},\frac{\gamma'-\gamma}{1-\gamma} \right\}}\right)^{\frac{\beta'+\alpha-\beta}{\beta-\beta'}} \\
            &= \frac{1}{N}\max_{j\in\{1,y_N\}} j^{\frac{1-\gamma'}{q}}\left(j^{-\frac{\gamma'-\gamma}{q}}N^{\min\left\{ \frac{\beta-\beta'}{\alpha},\frac{\gamma'-\gamma}{1-\gamma} \right\}}\right)^{\frac{\beta'+\alpha-\beta}{\beta-\beta'}} \\
            &\leq N^{-\min\left\{ \frac{\beta-\beta'}{\alpha},\frac{\gamma'-\gamma}{1-\gamma} \right\}}.
        \end{aligned}
    \end{equation}
\end{itemize}
Hence we deduce that 
\begin{equation}
    \notag
    \normx{\hat{\A}-\A_{\lambda}}_{\beta',\gamma'}^2 \lesssim N^{-\min\left\{ \frac{\beta-\beta'}{\alpha},\frac{\gamma'-\gamma}{1-\gamma} \right\}},
\end{equation}
as desired.

\subsection{Implication of the upper bound}
\label{subsec:implication}
In this section, we discuss the implications of our upper bounds under the $(\beta',\gamma')$-norm.

Note that
$\normx{\mathcal{C}_{Q_{L}}^{-\frac{1-\gamma'}{2}}v}_{\H_L} = \normx{v}_{\H_L^{2-\gamma'}}$
for all $v \in L_2(Q_{L})$ (if one side of the equation is $+\infty$ then so is the other), we have that
\begin{equation}
    \label{bound-exp}
    \begin{aligned}
       \E_{u\sim P_{K}} \normx{\left(\hat{\A}-\A_0\right)u}_{\H_L^{2-\gamma'}}^2 
       &= \E_{u\sim P_{K}} \normx{\mathcal{C}_{Q_{L}}^{-\frac{1-\gamma'}{2}}\left(\hat{\A}-\A_0\right)u}_{\H_L}^2 \\
       &= \tr{\mathcal{C}_{Q_{L}}^{-\frac{1-\gamma'}{2}}\left(\hat{\A}-\A_0\right) \E_{u\sim P_K} u\otimes u\left(\mathcal{C}_{Q_{L}}^{-\frac{1-\gamma'}{2}}\left(\hat{\A}-\A_0\right)\right)^*}\\
       &\lesssim  \normx{\hat{\A}-\A_0}_{\beta',\gamma'}^2,
    \end{aligned}
\end{equation}
where the last step follows from $\E_{u\sim P_K}u\otimes u = \mathcal{C}_{P_K}$. Note that the above derivations hold for any $0\leq \beta' < \beta$, so choosing $\beta' = 0$ yields the best upper bound. We can see from \Cref{bound-exp} that our analysis implies an upper bound of the expected error of the learned solution evaluated under the $\H_L^{2-\gamma'}$ norm. On the other hand, it is also possible to obtain a \textit{uniform} convergence rate when $\beta' \geq \alpha$:
\begin{equation}
    \notag
    \begin{aligned}
        \normx{\left(\hat{\A}-\A_0\right)u}_{\H_L^{2-\gamma'}} &= \normx{\mathcal{C}_{Q_{L}}^{-\frac{1-\gamma'}{2}}\left(\hat{\A}-\A_0\right)u}_{\H_L} \\&\leq  \normx{\hat{\A}-\A_0}_{\beta',\gamma'}\cdot\normx{\mathcal{C}_{P_K}^{-\frac{1-\beta'}{2}}u}_{\H_K} \lesssim \normx{\hat{\A}-\A_0}_{\beta',\gamma'}.
    \end{aligned}
\end{equation}

\section{Proofs for the multi-level operator learning algorithm}
\label{appsec:multilevel}
In this section, we analyze the convergence rate of our multi-level algorithm described in \Cref{sec-multilevel}. We define $\eta_1 = \min \left\{\frac{\beta-\beta^{\prime}}{\max \{\alpha, \beta+p\}}, \frac{\gamma^{\prime}-\gamma}{1-\gamma}\right\}$ and $\eta_2 = \max\left\{1-\frac{\beta-\beta^{\prime}}{\max \{\alpha, \beta+p\}}, \frac{1-\gamma^{\prime}}{1-\gamma}\right\}=1-\eta_1$. We first restrict ourselves to the case when $\frac{\beta-\beta^{\prime}}{\max \{\alpha, \beta+p\}}\neq \frac{\gamma^{\prime}-\gamma}{1-\gamma}$; the special case when the two terms are equal will be separately treated in \Cref{subsec:multilevel-special}. For the optimal bias and variance contours $\ell_{C_1,\mathtt{bias}}$ and $\ell_{C_2,\mathtt{var}}$ with $C_1 = N^{\eta_1}$ and $C_2 = N^{\eta_2}$, we define a sequence $\{x_n\}$ as follows:
\begin{subequations}
    \label{multilevel-update}
    \begin{align}
        x_0 &= \max\left\{ \half N^{\frac{p}{\beta'+p}\eta_2}, c_0\left(\frac{N}{\log N}\right)^{-\frac{1}{\alpha}} \right\} \label{multilevel-update-x0} \\
        y_n &= \text{the solution of } x_n^{\frac{\beta'+\max\{\alpha-\beta,p\}}{p}} y^{\frac{1-\gamma'}{q}} = N^{\eta_2},\quad n\geq 0 \label{multilevel-update-y}\\
        x_{n+1} &= \text{the solution of } x^{\frac{\beta-\beta'}{p}}y_n^{\frac{\gamma'-\gamma}{q}} = N^{\eta_1},\quad n\geq 0.\label{multilevel-update-x}
    \end{align}
\end{subequations}

We first derive an explicit recursive formula for $\{x_n\}$.

\begin{lemma}
    \label{multilevel-recursion}
    Let $u = \frac{\beta'+\max\{\alpha-\beta,p\}}{\beta-\beta'}\frac{\gamma'-\gamma}{1-\gamma'}>0$, then
    \begin{itemize}
        \item if $u > 1$, then
        \begin{equation}
            \notag
            N^{-\frac{p}{\beta+p}}x_{n+1} = \left(N^{-\frac{p}{\beta+p}}x_{n}\right)^{u}.
        \end{equation}
        \item if $u < 1$, then
        \begin{equation}
            \notag
            x_{n+1} = x_n^u.
        \end{equation}
    \end{itemize}
\end{lemma}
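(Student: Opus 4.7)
The plan is to eliminate $y_n$ from the two defining equations so as to obtain a direct recursion in $x_n$ alone. First I would solve the $y_n$-equation explicitly, writing $y_n^{(\gamma'-\gamma)/q}=\bigl(N^{\eta_2}x_n^{-(\beta'+\max\{\alpha-\beta,p\})/p}\bigr)^{(\gamma'-\gamma)/(1-\gamma')}$, and substitute this into the $x_{n+1}$-equation. After regrouping exponents, the outcome should be the single identity
\begin{equation}
x_{n+1}=N^{\frac{p}{\beta-\beta'}\bigl(\eta_1-\eta_2\cdot\frac{\gamma'-\gamma}{1-\gamma'}\bigr)}\,x_n^{u},
\end{equation}
in which the exponent of $x_n$ is exactly the parameter $u$ from the lemma statement. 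Everything that follows is evaluating the $N$-exponent in the two regimes.

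The case split in the statement should be matched with the case split in the definition of $\eta_1=\min\{\tfrac{\beta-\beta'}{\max\{\alpha,\beta+p\}},\tfrac{\gamma'-\gamma}{1-\gamma}\}$. I would establish this link via the elementary observation that $t\mapsto t/(1-t)$ is strictly increasing on $[0,1)$: applying it to $\tfrac{\gamma'-\gamma}{1-\gamma}$ produces $\tfrac{\gamma'-\gamma}{1-\gamma'}$, and applying it to $\tfrac{\beta-\beta'}{\max\{\alpha,\beta+p\}}$ produces $\tfrac{\beta-\beta'}{\beta'+\max\{\alpha-\beta,p\}}$. Consequently, whether $u<1$ or $u>1$ is decided by the corresponding ordering of $\tfrac{\gamma'-\gamma}{1-\gamma}$ and $\tfrac{\beta-\beta'}{\max\{\alpha,\beta+p\}}$, i.e.\ by which branch of the $\min$ is active: $u<1$ corresponds to $\eta_1=\tfrac{\gamma'-\gamma}{1-\gamma}$ (so $\eta_2=\tfrac{1-\gamma'}{1-\gamma}$), while $u>1$ corresponds to $\eta_1=\tfrac{\beta-\beta'}{\max\{\alpha,\beta+p\}}$.

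With this dictionary the two cases collapse to algebra. For $u<1$, substituting the values of $\eta_1$ and $\eta_2$ yields $\eta_1-\eta_2\cdot\tfrac{\gamma'-\gamma}{1-\gamma'}=\tfrac{\gamma'-\gamma}{1-\gamma}-\tfrac{1-\gamma'}{1-\gamma}\cdot\tfrac{\gamma'-\gamma}{1-\gamma'}=0$, so the $N$-factor disappears and $x_{n+1}=x_n^{u}$. For $u>1$, in the easy-learning regime $\alpha\le\beta+p$ implicit in the statement (so that $\max\{\alpha,\beta+p\}=\beta+p$ and $\max\{\alpha-\beta,p\}=p$), I would use the identity $(\beta'+p)\cdot\tfrac{\gamma'-\gamma}{1-\gamma'}=u(\beta-\beta')$, which is just the definition of $u$ rewritten, to get $\eta_1-\eta_2\cdot\tfrac{\gamma'-\gamma}{1-\gamma'}=\tfrac{(\beta-\beta')(1-u)}{\beta+p}$, and therefore $x_{n+1}=N^{p(1-u)/(\beta+p)}x_n^u$. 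Absorbing the factor $N^{-p/(\beta+p)}$ inside the $u$-th power rearranges this to the claimed $N^{-p/(\beta+p)}x_{n+1}=(N^{-p/(\beta+p)}x_n)^u$.

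The whole calculation is essentially bookkeeping and no concentration inequalities or spectral estimates enter. The only mildly subtle step is the elimination producing the unified formula for $x_{n+1}$, together with noticing that the sign of $u-1$ is precisely what decides which side of the $\min$/$\max$ defining $\eta_1,\eta_2$ is active, so that the case split in the statement is forced by the algebra rather than being ad hoc.
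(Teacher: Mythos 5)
Your proof is correct and follows essentially the same route as the paper's: eliminate $y_n$ between the two defining equations and split cases according to which branch of the minimum defining $\eta_1$ is active, which (as your monotonicity-of-$t/(1-t)$ argument makes explicit, and the paper merely asserts) is exactly what the sign of $u-1$ encodes. The only difference is cosmetic: in the $u>1$ case you specialize to $\alpha\leq\beta+p$ so as to match the statement's exponent $\frac{p}{\beta+p}$, whereas the paper's computation keeps $\max\{\alpha,\beta+p\}$ throughout and thus actually establishes the recursion with $N^{-\frac{p}{\max\{\alpha,\beta+p\}}}$ in place of $N^{-\frac{p}{\beta+p}}$.
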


\begin{proof}
    \begin{itemize}
        \item Suppose that $u>1$, then we have $\eta_1 = \frac{\beta-\beta'}{\max\{\alpha,\beta+p\}}$ and $\eta_2 = 1-\eta_1$. It follows from \Cref{multilevel-update-x,multilevel-update-y} that
        \begin{equation}
            \notag
            \begin{aligned}
                x_{n+1} &= N^{\frac{p}{\max\{\alpha,\beta+p\}}} y_n^{-\frac{\gamma'-\gamma}{q}\frac{p}{\beta-\beta'}} \\
                &= N^{\frac{p}{\max\{\alpha,\beta+p\}}} \left( N^{\eta_2}x_n^{-\frac{\beta'+\max\{\alpha-\beta,p\}}{p}}\right)^{-\frac{\gamma'-\gamma}{1-\gamma'}\frac{p}{\beta-\beta'}} \\
                &= N^{\frac{p}{\max\{\alpha,\beta+p\}}} \left(N^{-\frac{p}{\max\{\alpha,\beta+p\}}}x_{n}\right)^{u}.
            \end{aligned}
        \end{equation}
        \item Suppose that $u<1$, then we have $\eta_1 = \frac{\gamma'-\gamma}{1-\gamma}$ and $\eta_2=\frac{1-\gamma'}{1-\gamma}$, so that $\frac{\eta_1}{\eta_2} = \frac{\gamma'-\gamma}{1-\gamma'}$, and it follows from \Cref{multilevel-update-x,multilevel-update-y} that $x_n^{\frac{\beta'+\max\{\alpha-\beta,p\}}{p}\frac{\gamma'-\gamma}{1-\gamma'}} = x_{n+1}^{\frac{\beta-\beta'}{p}}$, thus $x_{n+1}=x_n^u$.
    \end{itemize}
\end{proof}

\Cref{multilevel-recursion} implies that when $u\neq 1$, the sequence $\{x_n\}$ decreases super-exponentially. Thus, there exists $L_N = \O(\log\log N)$ such that $x_n \leq 2$ for all $n\geq L_N$.

Let $\lambda_i^{(K)} = x_i^{-\frac{1}{p}}$ and $\lambda_i^{(L)}=y_i^{-\frac{1}{q}}$, then we construct the following estimator:
\begin{equation}
    \label{multilevel-estimator-app}
    \hat{\A}_{\mathtt{ml}} =  \sum_{i=0}^{L_N}\left(\sum_{y_{i-1}\leq j < y_{i}}\rho_j^{\half}f_j\otimes\rho_j^{\half}f_j\right)\hat{\C}_{YX}\left(\hat{\C}_{KK}+\lambda_i^{(K)}I\right)^{-1}
\end{equation}
where $y_{-1}:=0$.
Note that each summand in the above equation is essentially a regularized least-squares estimator and learns a rectangular region. The following theorem states that the estimator $\hat{\A}_{\mathtt{ml}}$ can achieve minimax optimal convergence rate.

\begin{theorem}
    \label{thm-upper-multilavel-app}
    Consider the estimator $\hat{\A}_{\mathtt{ml}}$ defined by \Cref{multilevel-estimator}. Suppose that \Cref{asmp-eigendecay,asmp-bounded-input-data,asmp-bounded-output-data,asmp-noise,asmp-beta-gamma-norm} hold, then there exists a universal constant $C$, such that
    \begin{equation}
        \notag
        \normx{\hat{\A}_{\mathtt{ml}}-\A_0}_{\beta',\gamma'}^2 \leq C \tau^2 \left(\frac{N}{\log N}\right)^{-\min\left\{\frac{\beta-\beta'}{\max\{\alpha,\beta+p\}},\frac{\gamma'-\gamma}{1-\gamma}\right\}} \log^2 N
    \end{equation}
    holds with probability $\geq 1-e^{-\tau}$.
\end{theorem}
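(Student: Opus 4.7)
The plan is to decompose the estimator blockwise and reduce each block to a single-level bias/variance analysis in the spirit of \Cref{thm-upper}, exploiting the recursion \eqref{multilevel-update} so that every block contributes exactly the target rate $N^{-\eta_1}$. Let $P_i:=\sum_{y_{i-1}\le j<y_i}\rho_j^{1/2}f_j\otimes\rho_j^{1/2}f_j$ (with $y_{-1}:=0$) be the projection onto the $i$-th output block and $P_\infty:=I-\sum_{i=0}^{L_N}P_i$ be the tail. Defining $\A_\lambda:=\sum_{i=0}^{L_N}P_i\C_{LK}(\C_{KK}+\lambda_i^{(K)}I)^{-1}$, I split the error into bias $\A_\lambda-\A_0$ (including the tail $P_\infty\A_0$) and variance $\hat{\A}_{\mathtt{ml}}-\A_\lambda$, just as in the single-level analysis.

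For the bias, the blockwise version of \eqref{bias-ineq} gives a contribution $\lesssim(\lambda_i^{(K)})^{\beta-\beta'}y_{i-1}^{-(\gamma'-\gamma)/q}\sum_{j\in[y_{i-1},y_i),k}a_{kj}^2$ from block $i$, plus the tail $\normx{P_\infty\A_0}_{\beta',\gamma'}^2\lesssim y_{L_N}^{-(\gamma'-\gamma)/q}B^2$. The recursion \eqref{multilevel-update-x} places $(x_i,y_{i-1})$ exactly on the bias contour $\ell_{N^{\eta_1},\mathtt{bias}}$, so $(\lambda_i^{(K)})^{\beta-\beta'}y_{i-1}^{-(\gamma'-\gamma)/q}\equiv N^{-\eta_1}$; and since $x_{L_N}=O(1)$, solving \eqref{multilevel-update-y} gives $y_{L_N}\gtrsim N^{q\eta_2/(1-\gamma')}$, whence the tail bias is $\lesssim N^{-\eta_2(\gamma'-\gamma)/(1-\gamma')}\le N^{-\eta_1}$ (using $\eta_1\le(\gamma'-\gamma)/(1-\gamma)$). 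Crucially, summing across blocks requires only $\sum_{i,j,k}a_{kj}^2\le B^2$, so no factor of $L_N$ appears.

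For the variance, I follow the vectorization \eqref{variance-rewrite} with the piecewise-constant regularization $\lambda_j:=\lambda_i^{(K)}$ for $j\in[y_{i-1},y_i)$. In the moment bound of \Cref{sec-var}, the sum $\sum_{j\in[y_{i-1},y_i)}\sigma_j^2 j^{(1-\gamma')/q}$ is dominated by $y_i^{(1-\gamma')/q}\cdot\sigma_{\mathtt{block}(i)}^2$ with $\sum_i\sigma_{\mathtt{block}(i)}^2\le\tr{V}\le\sigma^2$ by \Cref{asmp-noise}, and the variance contour \eqref{multilevel-update-y} forces $(\lambda_i^{(K)})^{-(\beta'+\max\{\alpha-\beta,p\})}y_i^{(1-\gamma')/q}\equiv N^{\eta_2}$ uniformly in $i$. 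A single Bernstein application in $\H_K^{y_{L_N}}$ equipped with the sum-of-squares norm of \eqref{variance-rewrite} then yields total variance $\lesssim\tau^2 N^{\eta_2-1}\log N=\tau^2 N^{-\eta_1}\log N$ with probability at least $1-e^{-\tau}$, and the $\mathcal{V}_2$ term is handled identically. The concentration of $\hat\C_{KK}$ via \Cref{conc-covariance} is invoked at most $O(L_N)$ times for the distinct $\lambda_i^{(K)}$, which by union bound costs at most $\log L_N$ in the confidence parameter.

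The principal obstacle is carefully redoing the higher-moment estimates in \eqref{var-moment-2-1} with piecewise-constant $\lambda_j$: one must check that replacing $\lambda_j$-dependent sums over $j$ by per-block maxima is valid, which hinges on $\sum_j a_j b_j \le (\max_j a_j)(\sum_j b_j)$ being applied block-by-block to $(a_j,b_j)=(j^{(1-\gamma')/q}(\lambda_i^{(K)})^{-(\beta'+p)},\sigma_j^2)$, and that the resulting block maxima, when summed over $i$, telescope through $\sum_i\sigma_{\mathtt{block}(i)}^2\le\sigma^2$ rather than accumulating an $L_N$ factor. The count $L_N=O(\log\log N)$ is then immediate from \Cref{multilevel-recursion}: when $u\ne 1$, the map $z\mapsto z^u$ is doubly-exponential on the renormalised sequence, so $O(\log\log N)$ iterations suffice to bring $x_n$ below $2$. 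The boundary case $u=1$, where the dynamics degenerate to a geometric one forcing $L_N=O(\log N)$, is deferred to \Cref{subsec:multilevel-special} as the main text indicates.
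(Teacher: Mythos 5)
Your proposal is correct and follows essentially the same route as the paper's own proof: the same bias--variance decomposition around the population-regularized multilevel operator, blockwise bias control via the bias contour and the recursion \eqref{multilevel-update-x}, reduction of the variance to the single-level Bernstein analysis of \Cref{sec-var} with piecewise-constant regularization pinned to the variance contour \eqref{multilevel-update-y}, and $L_N=\O(\log\log N)$ from \Cref{multilevel-recursion} with the $u=1$ case deferred. Your extra bookkeeping (explicit tail block $P_\infty\A_0$, union bound over the $O(L_N)$ covariance-concentration events, and the observation that cross-block sums telescope through $\sum a_{kj}^2\leq B^2$ and $\tr{V}\leq\sigma^2$ rather than incurring an $L_N$ factor) only makes explicit what the paper's argument uses implicitly.
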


\begin{proof}
    The proof of \Cref{thm-upper-multilavel} is similar to that of \Cref{thm-upper,thm-upper-bias}. We consider the bias-variance decomposition of the estimation error
    \begin{equation}
        \notag
        \normx{\hat{\A}_{\mathtt{ml}}-\A_0}_{\beta',\gamma'} \leq \normx{\hat{\A}_{\mathtt{ml}}-\hat{\A}_{\mathtt{ml}}^{\lambda}}_{\beta',\gamma'}+\normx{\hat{\A}_{\mathtt{ml}}^{\lambda}-\A_0}_{\beta',\gamma'}
    \end{equation}
    where 
    \begin{equation}
        \label{multilevel-regularized}
        \hat{\A}_{\mathtt{ml}}^{\lambda} = \sum_{i=0}^{L_N}\left(\sum_{y_i\leq j < y_{i+1}}\rho_j^{\half}f_j\otimes\rho_j^{\half}f_j\right) \C_{YX}\left(\C_{KK}+\lambda_i^{(K)}I\right)^{-1}.
    \end{equation}
    \textbf{Bounding the bias term.} Since $\normx{\A_0}_{\beta,\gamma}\leq B$, we can write
    \begin{equation}
        \notag
        \A_0 := \sum_{i=1}^{+\infty}\sum_{j=1}^{+\infty}a_{ij}\mu_i^{\frac{\beta}{2}}\rho_j^{1-\frac{\gamma}{2}} f_j \otimes e_i
    \end{equation}
    where the coefficient matrix $A_0 = (a_{ij})_{1\leq i,j\leq +\infty}$ satisfies $\normx{A_0}_{F}^2 \leq B^2$.
    We fix $(i,j)\in\mathbb{Z}_+^2$ and assume WLOG that $y_{m_j-1} \leq j < y_{m_j}$ for some $m\geq 0$, where $y_{L_N+1}=+\infty$. It follows from \Cref{multilevel-regularized} that
    \begin{equation}
        \notag
        \begin{aligned}
             \left\langle \rho_j^{\half}f_j, \hat{\A}_{\mathtt{ml}}^{\lambda} \mu_i^{\half}e_i\right\rangle 
            &= \sum_{k=0}^{L_N} \left\langle \left(\sum_{y_{k-1}\leq j < y_{k}}\rho_j^{\half}f_j\otimes\rho_j^{\half}f_j\right) \rho_j^{\half}f_j, \C_{YX}\left(\C_{KK}+\lambda_k^{(K)}I\right)^{-1} \mu_i^{\half} e_i\right\rangle \\
            &= \frac{\mu_i}{\mu_i+\lambda_m^{(K)}}\rho_j^{\frac{1-\gamma}{2}}\mu_i^{-\frac{1-\beta}{2}} a_{ij}.
        \end{aligned}
    \end{equation}
    Thus
    \begin{equation}
        \label{multilevel-bias-ineq}
        \begin{aligned}
            \normx{\A_0-\hat{\A}_{\mathtt{ml}}^{\lambda}}_{\beta',\gamma'}^2 &= \normx{\C_{Q_L}^{-\frac{1-\gamma}{2}}\left(\hat{\A}_{\mathtt{ml}}^{\lambda}-\A_0\right)\C_{KK}^{\frac{1-\beta'}{2}}}_{\HS}^2 \\
            &= \sum_{i,j=1}^{+\infty}\left\langle \rho_j^{\half}f_j, \C_{Q_L}^{-\frac{1-\gamma'}{2}}\left(\hat{\A}_{\mathtt{ml}}^{\lambda}-\A_0\right)\C_{KK}^{\frac{1-\beta'}{2}} \mu_i^{\half} e_i\right\rangle^2 \\
            &= \sum_{i,j=1}^{+\infty} \left(\frac{\lambda_{m_j}^{(K)}}{\mu_i+\lambda_{m_j}^{(K)}}\right)^2 \mu_i^{\beta-\beta'}\rho_j^{\gamma'-\gamma} a_{ij}^2 \\
            &= \sum_{j=1}^{+\infty} \rho_j^{\gamma'-\gamma}\left(\sum_{i=1}^{+\infty} a_{ij}^2\right)\max_{i\geq 1}\mu_i^{\beta-\beta'}\left(\frac{\lambda_{m_j}^{(K)}}{\mu_i+\lambda_{m_j}^{(K)}}\right)^2 \\
            &\lesssim \sum_{j=1}^{+\infty}\rho_j^{\gamma'-\gamma}\left(\lambda_{m_j}^{(K)}\right)^{\beta-\beta'}\left(\sum_{i=1}^{+\infty} a_{ij}^2\right) \lesssim B^2\max_{j\geq 1}\rho_j^{\gamma'-\gamma}\left(\lambda_{m_j}^{(K)}\right)^{\beta-\beta'} \\
            &\leq B^2 \max_{j\geq 1} \rho_j^{\gamma'-\gamma} x_{m_j}^{-\frac{\beta-\beta'}{p}} 
            \lesssim B^2 \max_{j\geq 1} j^{-\frac{\gamma'-\gamma}{q}} x_{m_j}^{-\frac{\beta-\beta'}{p}} \\
            &\leq B^2 y_{m_j-1}^{-\frac{\gamma'-\gamma}{q}}x_{m_j}^{-\frac{\beta-\beta'}{p}} \lesssim N^{-\eta_1}
        \end{aligned}
    \end{equation}
    where we recall that $\eta_1=\min \left\{\frac{\beta-\beta^{\prime}}{\max \{\alpha, \beta+p\}}, \frac{\gamma^{\prime}-\gamma}{1-\gamma}\right\}$ and the last step follows from \Cref{multilevel-update-x}.

    \textbf{Bounding the variance term.} The variance term can be rewritten in the following way:
    \begin{equation}
        \label{variance-rewrite-multilevel}
        \begin{aligned}
        \mathcal{V} &= \normx{ \hat{\A}_{\mathtt{ml}}-\A_{\mathtt{ml}}^{\lambda}}_{\beta',\gamma'}^2 \nonumber\\
        &= \normx{ C_{Q_L}^{-\frac{1-\gamma'}{2}}\left( \hat{\A}_{\mathtt{ml}}-\A_{\mathtt{ml}}^{\lambda}\right) C_{KK}^{\frac{1-\beta'}{2}}}_{\HS}^2\\
        &= \sum_{i,j=1}^{+\infty} \left\langle \rho_j^{\frac{1}{2}}f_j, C_{Q_L}^{-\frac{1-\gamma'}{2}}\left( \hat{\A}_{\mathtt{ml}}-\A_{\mathtt{ml}}^{\lambda}\right) C_{KK}^{\frac{1-\beta'}{2}} \mu_i^{\frac{1}{2}}e_i\right\rangle^2 \\
        &= \sum_{j=1}^{z_{N}} \rho_j^{-(1-\gamma')}\sum_{i=1}^{+\infty} \left\langle \rho_j^{\frac{1}{2}}f_j,  \left[\hat{\C}_{LK}\left( \hat{\mathcal{C}}_{KK}+\lambda_{m_j} I\right)^{-1}-\C_{LK}\left( \C_{KK}+\lambda_{m_j} I\right)^{-1}\right]\mu_i^{1-\frac{\beta'}{2}}e_i\right\rangle^2  \\
        &= \sum_{j=1}^{z_{N}} \rho_j^{-(1-\gamma')}\sum_{i=1}^{+\infty} \left\langle \underbrace{\left(\C_{KK}+\lambda_{m_j} I\right)^{-\frac{1}{2}}\left[ \hat{\C}_{KL} - \left(\hat{\mathcal{C}}_{KK}+\lambda_{m_j} I\right) \left( \mathcal{C}_{KK}+\lambda_{m_j} I\right)^{-1} \C_{KL} \right]}_{=: U_{m_j}} \rho_j^{\frac{1}{2}}f_j, \right.\\ 
        &\qquad \left.\underbrace{\left( \mathcal{C}_{KK}+\lambda_{m_j} I\right)^{\frac{1}{2}}\left( \hat{\mathcal{C}}_{KK}+\lambda_{m_j} I\right)^{-1}\left(\mathcal{C}_{KK}+\lambda_{m_j} I\right)^{\frac{1}{2}}}_{=: G_{m_j}}\frac{\mu_i^{1-\frac{\beta'}{2}}}{\sqrt{\mu_i+\lambda_j}}e_i\right\rangle^2\\
        &= \sum_{j=1}^{z_{N}} \rho_j^{-(1-\gamma')} \left\langle U_{m_j}\rho_j^{\frac{1}{2}}f_j, G_{m_j}\left(\sum_{i=1}^{+\infty} \frac{\mu_i^{2-\beta'}}{\mu_i+\lambda_{m_j}} e_i\otimes e_i\right)G_{m_j} U_{m_j}\rho_j^{\frac{1}{2}}f_j\right\rangle\nonumber\\
        &\lesssim \sum_{j=1}^{z_{N}} j^{\frac{1-\gamma'}{q}} \normx{G_{m_j}}^2 \lambda_{m_j}^{-\beta'}\normx{ U_{m_j}\rho_j^{\frac{1}{2}}f_j}^2
        \end{aligned}
    \end{equation}
    for reasons similar to \Cref{variance-rewrite}. It now remains to bound $\normx{G_{m_j}}$ and $\normx{U_{m_j}\rho_j^{\half}f_j}$ for $1\leq j\leq L_N$. Note that these quantities have already been bounded in \Cref{sec-var} with $\lambda_{m_j}$ replaced with $\lambda_j$ (there we use a different regularization for each $j$). Hence, those bounds can be directly applied here, so there exists a constant $C>0$ such that
    \begin{equation}
        \notag
        \mathcal{V} \leq C a^2 \frac{1}{N} \max_{1\leq j\leq L_N} j^{\frac{1-\gamma'}{q}}\lambda_{m_j}^{-\left(\beta'+\max\{\alpha-\beta,p\}\right)}
    \end{equation}
    with probability $\geq 1-Ne^{-a}$.
    Since $j \leq y_{m_j}$, by \Cref{multilevel-update-y} we have
    \begin{equation}
        \notag
        j^{\frac{1-\gamma'}{q}}\lambda_{m_j}^{-\left(\beta'+\max\{\alpha-\beta,p\}\right)} \lesssim y_{m_j}^{\frac{1-\gamma'}{q}} x_{m_j}^{\frac{\beta'+\max\{\alpha-\beta,p\}}{p}} = N^{\eta_2}.
    \end{equation}
    Hence
    \begin{equation}
        \notag
        \mathcal{V} \lesssim \frac{1}{N} \max_{1\leq j\leq L_N} j^{\frac{1-\gamma'}{q}}\lambda_{m_j}^{-\left(\beta'+\max\{\alpha-\beta,p\}\right)} \leq N^{\eta_2-1} = N^{\eta_1}.
    \end{equation}
    Combining the bias and variance bounds, the conclusion directly follows.
\end{proof}

\subsection{Special case: $\frac{\beta-\beta^{\prime}}{\max \{\alpha, \beta+p\}}= \frac{\gamma^{\prime}-\gamma}{1-\gamma}$}
\label{subsec:multilevel-special}

Note that \Cref{multilevel-recursion} does not cover the case $u=1$, or equivalently $\frac{\beta-\beta^{\prime}}{\max \{\alpha, \beta+p\}}= \frac{\gamma^{\prime}-\gamma}{1-\gamma}$. This case is special since the bias contour coincides with the variance contour, and we need to modify our construction of the multilevel estimator.

We define two sequences $\{x_n\},\{y_n\}$ as follows:
\begin{equation}
    \label{multilevel-special-recursion}
    \begin{aligned}
        x_0&=\max \left\{\frac{1}{2} N^{\frac{p}{\beta^{\prime}+p} \eta_2}, c_0\left(\frac{N}{\log N}\right)^{-\frac{1}{\alpha}}\right\} \\
        x_{n} &= \half x_{n-1} \\
        y_n &= \text{the solution of } x_n^{\frac{\beta-\beta'}{p}}y_n^{\frac{\gamma'-\gamma}{q}}=N^{\eta_1},
    \end{aligned}
\end{equation}
where we recall that $\eta_1=\frac{\beta-\beta'}{\max\{\alpha,\beta+p\}}=\frac{\gamma'-\gamma}{1-\gamma}$. In this case, there exists $L_N = \O(\ln N)$ such that $x_n < 1$ for all $n\geq L_N$. Let $\lambda_i^{(K)}=x_i^{-\frac{1}{p}}$, then we construct the following estimator:
\begin{equation}
    \label{multilevel-estimator-special-case}
    \begin{aligned}
        \hat{\mathcal{A}}_{\mathtt{ml} }^{\lambda}=\sum_{i=0}^{L_N}\left(\sum_{y_{i-1} \leqslant j<y_i} \rho_j^{\frac{1}{2}} f_j \otimes \rho_j^{\frac{1}{2}} f_j\right) \hat{\mathcal{C}}_{L K}\left(\hat{\mathcal{C}}_{K K}+\lambda_i^{(K)} I\right)^{-1}.
    \end{aligned}
\end{equation}

Similar to \Cref{thm-upper-multilavel-app}, we can establish the following result:

\begin{theorem}
    \label{thm-upper-multilavel-special-app}
    Consider the estimator $\hat{\A}_{\mathtt{ml}}$ defined by \Cref{multilevel-estimator-special-case}. Suppose that \Cref{asmp-eigendecay,asmp-bounded-input-data,asmp-bounded-output-data,asmp-noise,asmp-beta-gamma-norm} hold, then there exists a universal constant $C$, such that
    \begin{equation}
        \notag
        \normx{\hat{\A}_{\mathtt{ml}}-\A_0}_{\beta',\gamma'}^2 \leq C \tau^2 \left(\frac{N}{\log N}\right)^{-\min\left\{\frac{\beta-\beta'}{\max\{\alpha,\beta+p\}},\frac{\gamma'-\gamma}{1-\gamma}\right\}} \log^2 N
    \end{equation}
    holds with probability $\geq 1-e^{-\tau}$.
\end{theorem}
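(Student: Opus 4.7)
The plan is to follow the proof of Theorem \ref{thm-upper-multilavel-app} step for step, with only the adaptations forced by the sequence $\{x_n\}$ defined in (\ref{multilevel-special-recursion}). I would begin with the usual bias-variance split
\begin{equation}
    \notag
    \normx{\hat{\A}_{\mathtt{ml}}-\A_0}_{\beta',\gamma'} \leq \normx{\hat{\A}_{\mathtt{ml}}-\hat{\A}_{\mathtt{ml}}^{\lambda}}_{\beta',\gamma'} + \normx{\hat{\A}_{\mathtt{ml}}^{\lambda}-\A_0}_{\beta',\gamma'},
\end{equation}
where $\hat{\A}_{\mathtt{ml}}^{\lambda}$ is obtained from (\ref{multilevel-estimator-special-case}) by replacing $\hat{\C}_{LK}$ and $\hat{\C}_{KK}$ with their population counterparts $\C_{LK}$ and $\C_{KK}$.

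For the bias, I would repeat the coordinate-wise calculation leading to (\ref{multilevel-bias-ineq}). For $j$ in level $m_j$ (i.e.\ $y_{m_j-1}\leq j<y_{m_j}$), the contribution of spectral coordinate $(i,j)$ is controlled by $\rho_j^{\gamma'-\gamma}(\lambda_{m_j}^{(K)})^{\beta-\beta'}\lesssim y_{m_j-1}^{-(\gamma'-\gamma)/q}x_{m_j}^{-(\beta-\beta')/p}$. Using $x_{m_j}=\half\,x_{m_j-1}$ together with the bias-contour identity $x_{m_j-1}^{(\beta-\beta')/p}y_{m_j-1}^{(\gamma'-\gamma)/q}=N^{\eta_1}$ from (\ref{multilevel-special-recursion}), this bound is $\lesssim 2^{(\beta-\beta')/p} N^{-\eta_1}$ uniformly in $j$, so the total bias is $O(N^{-\eta_1})$.

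For the variance, the slab-wise representation (\ref{variance-rewrite-multilevel}) still applies and $\lambda_{m_j}^{(K)}$ now takes only $L_N=O(\ln N)$ distinct values. I would apply the single-level Bernstein argument of Section \ref{sec-var} to each slab $\{y_{i-1}\leq j<y_i\}$ separately, which yields a per-slab variance of order $C\tau^2 N^{-1}\max_{j\in[y_{i-1},y_i)}j^{(1-\gamma')/q}(\lambda_i^{(K)})^{-(\beta'+\max\{\alpha-\beta,p\})}$. Since $u=1$ in the present regime, the variance and bias contours coincide, so every corner $(x_i,y_i)$ sits at height $N^{\eta_2}$ on the common contour; consequently each slab contributes $\lesssim \tau^2 N^{-\eta_1}$, and taking the maximum across slabs preserves this rate.

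The main obstacle is that $L_N=O(\ln N)$ here rather than $O(\ln\ln N)$ as in the generic case. The union bound over levels therefore requires inflating the concentration parameter from $\tau$ to $\tau+\ln L_N = \tau+O(\ln\ln N)$ in both the covariance-approximation step (via \Cref{conc-covariance}, which controls each $G_{m_j}$ under the regularization floor $\lambda_i^{(K)}\geq c_0(N/\log N)^{-1/\alpha}$) and the Hilbert-space Bernstein step for the $U_{m_j}$. I would then verify that the resulting prefactor is at most $O(\tau^2\log^2 N)$, so that it is absorbed into the $\log^2 N$ overhead already present in the stated bound and the polynomial rate $N^{-\eta_1}$ is unaffected; this is precisely the place where the coincidence of the two contours matters, since it prevents any level from dominating the sum and keeps the log dependence additive rather than multiplicative in $L_N$.
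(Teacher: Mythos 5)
Your proposal is in substance the paper's own proof: the same bias--variance split around the population-regularized multilevel operator, the same coordinate-wise bias computation combining the halving relation $x_{m_j}=\tfrac12 x_{m_j-1}$ with the contour identity $x_n^{\frac{\beta-\beta'}{p}}y_n^{\frac{\gamma'-\gamma}{q}}=N^{\eta_1}$ from \Cref{multilevel-special-recursion}, and the same variance machinery of \Cref{variance-rewrite-multilevel} evaluated at the corners of the common contour. Two small points of bookkeeping differ from (and are handled more cleanly in) the paper. First, for the variance you say "taking the maximum across slabs preserves this rate": since the slabs are orthogonal projections in the output space, the squared $(\beta',\gamma')$-error is the \emph{sum} of the slab contributions, so a per-slab Bernstein bound of order $\tau^2N^{-\eta_1}$ yields a total of order $L_N\,\tau^2N^{-\eta_1}=O(\tau^2 N^{-\eta_1}\ln N)$ --- still absorbed by the $\log^2 N$ allowance, but not a maximum; the paper instead applies Bernstein once to the concatenated vector $(X_{j,k})_j$ and uses the trace condition $\sum_j\sigma_j^2=\tr{V}\leq\sigma^2$ of \Cref{asmp-noise} to convert the sum over output coordinates into a genuine maximum, which is why only the union bound over the $O(\ln N)$ covariance events (via \Cref{conc-covariance}) enters the log overhead. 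Second, your bias bound is claimed "uniformly in $j$", but coordinates with $j\geq y_{L_N}$ lie beyond the last level and are not learned at all, so their bias is not of the form $\rho_j^{\gamma'-\gamma}\bigl(\lambda_{m_j}^{(K)}\bigr)^{\beta-\beta'}$; one needs the separate observation that $x_{L_N}<1$ together with \Cref{multilevel-special-recursion} gives $y_{L_N}^{-\frac{\gamma'-\gamma}{q}}\lesssim N^{-\eta_1}$, which is exactly how the paper closes the tail. With these two repairs, which stay entirely within your framework, the argument is complete.
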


\begin{proof}
    The proof of \Cref{thm-upper-multilavel-special-app} is similar to that of \Cref{thm-upper,thm-upper-bias}. We consider the bias-variance decomposition
    \begin{equation}
        \notag
        \normx{\hat{\mathcal{A}}_{\mathtt{ml}}-\mathcal{A}_0}_{\beta^{\prime}, \gamma^{\prime}} \leqslant\normx{\hat{\mathcal{A}}_{m 1}-\hat{\mathcal{A}}_{m 1}^\lambda}_{\beta^{\prime}, \gamma^{\prime}}+\normx{\hat{\mathcal{A}}_{m 1}^\lambda-\mathcal{A}_0}_{\beta^{\prime}, \gamma^{\prime}}
    \end{equation}
    where
    \begin{equation}
        \label{multilevel-special-regularized}
        \begin{aligned}
            \mathcal{A}_{\mathtt{ml} }^{\lambda}=\sum_{i=0}^{L_N}\left(\sum_{y_{i-1} \leqslant j<y_i} \rho_j^{\frac{1}{2}} f_j \otimes \rho_j^{\frac{1}{2}} f_j\right) \mathcal{C}_{L K}\left(\mathcal{C}_{K K}+\lambda_i^{(K)} I\right)^{-1}.
        \end{aligned}
    \end{equation}
    as defined in \Cref{multilevel-estimator-special-case}.

    \textbf{Bounding the bias term.} Let $\mathcal{A}_0:=\sum_{i=1}^{+\infty} \sum_{j=1}^{+\infty} a_{i j} \mu_i^{\frac{\beta}{2}} \rho_j^{1-\frac{\gamma}{2}} f_j \otimes e_i$ with coefficient matrix $A_0=(a_{ij})_{i,j=1}^{+\infty}$ such that $\normx{A_0}_F^2\leq B^2$. We fix $(i,j)\in\mathbb{Z}_+^2$ and assume WLOG that $y_{m_j-1}\leq j< y_{m_j}$ for some $m_j\geq 0$, where $y_{L_N+1}=+\infty$. It follows from \Cref{multilevel-special-regularized} that
    \begin{equation}
        \notag
        \left\langle \rho_j^{\half}f_j,\A_{\mathtt{ml}}^{\lambda}\mu_i^{\half}e_i\right\rangle = \frac{\mu_i}{\mu_i+\lambda_{m_j}^{(K)}}\rho_j^{\frac{1-\gamma}{2}}\mu_i^{-\frac{1-\beta}{2}}a_{ij}.
    \end{equation}
    Thus we can proceed as in \Cref{multilevel-bias-ineq} to deduce that
    \begin{equation}
        \notag
        \begin{aligned}
            \normx{\A_0-\A_{\mathtt{ml}}^{\lambda}}_{\beta',\gamma'}^2 &\leq 
            \max_{j\geq 1}\rho_j^{\gamma'-\gamma}\left(\lambda_{m_j}^{(K)}\right)^{\beta-\beta'} \\
            &\lesssim \max\left\{\max_{1\leq j\leq L_N} j^{-\frac{\gamma'-\gamma}{q}}x_{m_j}^{-\frac{\beta-\beta'}{p}}, y_{L_N}^{-\frac{\gamma'-\gamma}{q}}\right\} \\
            &\leq \max\left\{\max_{1\leq j\leq L_N} y_{m_j-1}^{-\frac{\gamma'-\gamma}{q}}x_{m_j}^{-\frac{\beta-\beta'}{p}}, y_{L_N}^{-\frac{\gamma'-\gamma}{q}}\right\}
        \end{aligned}
    \end{equation}
    The definition \Cref{multilevel-special-recursion} implies that
    \begin{equation}
        \notag
        y_{m_j-1}^{-\frac{\gamma'-\gamma}{q}}x_{m_j}^{-\frac{\beta-\beta'}{p}} \leq 2^{\frac{\beta-\beta'}{p}} y_{m_j-1}^{-\frac{\gamma'-\gamma}{q}}x_{m_j-1}^{-\frac{\beta-\beta'}{p}} \leq 2^{\frac{\beta-\beta'}{p}} N^{-\eta_1}.
    \end{equation}
    On the other hand, since $x_{L_N}<1$, by \Cref{multilevel-special-recursion} implies that $y_{L_N}^{-\frac{\gamma'-\gamma}{q}} \lesssim N^{-\eta_1}$. Therefore, for the bias term $\normx{\A_0-\A_{\mathtt{ml}}^{\lambda}}_{\beta',\gamma'}^2 \lesssim N^{-\eta_1}$.

    \textbf{Bounding the variance term.} Repeating the arguments in \Cref{variance-rewrite-multilevel}, we can deduce that there exists a constant $C>0$ such that
    \begin{equation}
        \notag
        \mathcal{V} \leq C a^2 \frac{1}{N} \max_{1\leq j\leq L_N} j^{\frac{1-\gamma'}{q}}\lambda_{m_j}^{-\left(\beta'+\max\{\alpha-\beta,p\}\right)} \leq C a^2 \frac{1}{N} \max_{1\leq j\leq L_N} y_{m_j}^{\frac{1-\gamma'}{q}}x_{m_j}^{\frac{\left(\beta'+\max\{\alpha-\beta,p\}\right)}{p}} \lesssim N^{-\eta_1}
    \end{equation}
    with probability $\geq 1-Ne^{-a}$.

    Combining the bias and variance bounds, we arrive at the desired conclusion.
\end{proof}

The conclusion of \Cref{thm-upper-multilavel} then follows from \Cref{thm-upper-multilavel-app,thm-upper-multilavel-special-app}.

\section{Auxiliary results}
\label{appsec:conc-ineq}
\begin{lemma}
    \label{lem-norm-equiv}
    We have $||T||_{\beta,\gamma} = || \mathcal{C}_{Q_L}^{-(1-\gamma)/2}\circ T\circ \mathcal{C}_{KK}^{(1-\beta)/2}||_{\HS\left(\H_K,\H_{L}\right)}$.
\end{lemma}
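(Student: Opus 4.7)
The plan is to verify the identity by reducing both sides to the same coefficient expansion in the joint eigenbases of $\C_{KK}$ and $\C_{Q_L}$. Since both sides are Hilbert--Schmidt norms, it suffices to compute the action of each operator on an orthonormal basis and compare term by term.

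First I would unfold the canonical embeddings in coordinates. Since $\{\mu_i^{1/2} e_i\}_i$ is an ONB of $\H_K$ while $\{\mu_i^{\beta/2} e_i\}_i$ is an ONB of $\H_K^{\beta}$, the identity $\mu_i^{1/2} e_i = \mu_i^{(1-\beta)/2}(\mu_i^{\beta/2} e_i)$ shows that $I_{1,\beta,P_K}:\H_K\to\H_K^{\beta}$ acts diagonally with eigenvalues $\mu_i^{(1-\beta)/2}$. Taking the adjoint and then the Moore--Penrose pseudo-inverse, $(I_{1,\beta,P_K}^{\ast})^{\dagger}:\H_K\to\H_K^{\beta}$ sends $\mu_i^{1/2} e_i \mapsto \mu_i^{-(1-\beta)/2}(\mu_i^{\beta/2} e_i)$. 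An identical calculation on the output side shows $(I_{1,\gamma,Q_L}^{\ast})^{\dagger}:\H_L\to\H_L^{\gamma}$ sends $\rho_j^{1/2} f_j \mapsto \rho_j^{-(1-\gamma)/2}(\rho_j^{\gamma/2} f_j)$.

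Next I would write the matrix of $T$ in the ONBs of $\H_K$ and $\H_L$: expand $T(\mu_i^{1/2} e_i) = \sum_j t_{ij}\,\rho_j^{1/2} f_j$. Substituting the coordinate formulas from the previous step gives
\begin{equation}
\notag
\bigl((I_{1,\gamma,Q_L}^{\ast})^{\dagger}\circ T\circ I_{1,\beta,P_K}^{\ast}\bigr)(\mu_i^{\beta/2} e_i)
= \sum_{j}\mu_i^{(1-\beta)/2}\rho_j^{-(1-\gamma)/2}\,t_{ij}\,(\rho_j^{\gamma/2} f_j),
\end{equation}
so the squared $(\beta,\gamma)$-norm equals $\sum_{i,j}\mu_i^{1-\beta}\rho_j^{-(1-\gamma)}t_{ij}^{2}$. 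On the other side, $\C_{KK}^{(1-\beta)/2}$ multiplies the $i$-th ONB vector $\mu_i^{1/2} e_i$ by $\mu_i^{(1-\beta)/2}$ and $\C_{Q_L}^{-(1-\gamma)/2}$ multiplies $\rho_j^{1/2} f_j$ by $\rho_j^{-(1-\gamma)/2}$, so the HS norm squared of $\C_{Q_L}^{-(1-\gamma)/2}\circ T\circ \C_{KK}^{(1-\beta)/2}$ computed in these ONBs yields the identical sum $\sum_{i,j}\mu_i^{1-\beta}\rho_j^{-(1-\gamma)}t_{ij}^{2}$, matching the left-hand side.

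The main (minor) obstacle is handling the pseudo-inverses and the possibly unbounded operator $\C_{Q_L}^{-(1-\gamma)/2}$ rigorously: strictly speaking both sides may equal $+\infty$, and the lemma is to be read as an equality in $[0,+\infty]$. The coefficient-wise calculation sidesteps this entirely since it shows both series agree termwise. Equivalently and more conceptually, one may introduce the unitaries $J_{\beta}:\H_K^{\beta}\to\H_K$ defined by $\mu_i^{\beta/2} e_i \mapsto \mu_i^{1/2} e_i$ (and similarly $J_{\gamma}$), verify the factorizations $I_{1,\beta,P_K}^{\ast} = \C_{KK}^{(1-\beta)/2}\circ J_{\beta}$ and $(I_{1,\gamma,Q_L}^{\ast})^{\dagger} = J_{\gamma}^{-1}\circ\C_{Q_L}^{-(1-\gamma)/2}$ on the appropriate domains, and then invoke the invariance of the Hilbert--Schmidt norm under pre- and post-composition with unitaries to conclude immediately.
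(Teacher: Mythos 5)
Your proposal is correct and follows essentially the same route as the paper's proof: both reduce the claim to a coefficient-wise computation in the eigenbases $\{\mu_i^{1/2}e_i\}$ and $\{\rho_j^{1/2}f_j\}$, showing that each side of the identity equals the same double series (in your notation $\sum_{i,j}\mu_i^{1-\beta}\rho_j^{-(1-\gamma)}t_{ij}^2$). The unitary-factorization remark at the end is a nice conceptual repackaging but is not needed beyond what the termwise calculation already establishes.
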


\begin{proof}
    We recall from the definition that $||T||_{\beta,\gamma}=||\left(I_{1,\gamma,Q_L}\right)^{\dagger}\circ T \circ I_{1\beta,P_K}^*||_{\HS(\H_K^{\beta},\H_L^{\gamma})}$, so that
    \begin{equation}
        \notag
        \begin{aligned}
            ||T||_{\beta,\gamma}^2 
            &= ||\left(I_{1,\gamma,Q_L}\right)^{\dagger}\circ T \circ I_{1\beta,P_K}^*||_{\HS(\H_K^{\beta},\H_L^{\gamma})}^2 \\
            &=\sum_{i,j=1}^{+\infty} \left\langle \rho_j^{\frac{\gamma}{2}}f_j, \left(I_{1,\gamma,Q_L}^*\right)^{\dagger}\circ T \circ I_{1,\beta,P_K}^* \mu_i^{\frac{\beta}{2}}e_i \right\rangle_{\H_L^\gamma}^2\\
            &= \sum_{i,j=1}^{+\infty} \left\langle \rho_j^{\frac{\gamma}{2}}f_j, \left(I_{1,\gamma,Q_L}^*\right)^{\dagger}\circ T \mu_i^{1-\frac{\beta}{2}}e_i \right\rangle_{\H_L^\gamma}^2\\
            &= \sum_{i,j=1}^{+\infty} \left\langle \rho_j^{\frac{\gamma}{2}}f_j, T \mu_i^{1-\frac{\beta}{2}}e_i \right\rangle_{\H_L}^2\\
            &= \sum_{i,j=1}^{+\infty} \left\langle \rho_j^{\half}f_j,\mathcal{C}_{Q_L}^{-(1-\gamma)/2}\circ T\circ \mathcal{C}_{KK}^{(1-\beta)/2} \mu_i^{\half}e_i\right\rangle_{\H_L}^2\\
            &= ||\mathcal{C}_{Q_L}^{-(1-\gamma)/2}\circ T\circ \mathcal{C}_{KK}^{(1-\beta)/2}||_{\HS}^2
        \end{aligned}
    \end{equation}
    as desired.
\end{proof}

\begin{lemma}
\label{uniform-upper}
Under \Cref{asmp-bounded-input-data}, we have
\begin{equation}
    \notag
    || \left(C_{KK}+\lambda I\right)^{-\frac{1}{2}} u || \leq \lambda^{-\frac{\alpha}{2}}\cdot A_1 \quad P_K\text{-a.s.}
\end{equation}
\end{lemma}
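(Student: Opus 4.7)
\begin{proof-sketch}
The plan is to exploit the spectral decomposition of $\mathcal{C}_{KK}$ and reduce the claim to a pointwise scalar inequality coupled with the interpolation norm identity relating $\mathcal{H}_K^{\alpha}$ to the spectral weights $\{\mu_i\}$.

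First I would fix $u \in \mathcal{H}_K$ and expand it in the orthonormal basis $\{\mu_i^{1/2} e_i\}$ of $\mathcal{H}_K$, writing $u = \sum_i a_i \mu_i^{1/2} e_i$ with $\{a_i\} \in \ell^2$. A direct computation shows that $\{\mu_i^{1/2} e_i\}$ are also eigenvectors of $\mathcal{C}_{KK}$ with eigenvalue $\mu_i$, so that
\begin{equation}
\notag
\bigl\|(\mathcal{C}_{KK}+\lambda I)^{-1/2} u\bigr\|_{\mathcal{H}_K}^2 = \sum_{i=1}^{+\infty} \frac{a_i^2}{\mu_i + \lambda}.
\end{equation}
On the other hand, unwinding the definitions of the canonical embedding $I_{1,\alpha,P_K}$ and its adjoint, one shows that $(I_{1,\alpha,P_K}^*)^{\dagger}$ sends $u$ to the element of $\mathcal{H}_K^{\alpha}$ whose coefficients in the basis $\{\mu_i^{\alpha/2} e_i\}$ are $a_i \mu_i^{-(1-\alpha)/2}$. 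Therefore
\begin{equation}
\notag
\bigl\|(I_{1,\alpha,P_K}^*)^{\dagger} u\bigr\|_{\mathcal{H}_K^{\alpha}}^2 = \sum_{i=1}^{+\infty} a_i^2 \mu_i^{\alpha-1},
\end{equation}
and this quantity is at most $A_1^2$ $P_K$-a.s.\ by \Cref{asmp-bounded-input-data}.

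Next I would apply the elementary scalar inequality $\mu_i^{1-\alpha} \lambda^{\alpha} \leq \mu_i + \lambda$, valid for $\alpha \in (0,1)$ and $\mu_i, \lambda \geq 0$ (since the left side is dominated by $\max(\mu_i, \lambda)$, hence by $\mu_i + \lambda$). Rearranging yields the termwise bound
\begin{equation}
\notag
\frac{1}{\mu_i + \lambda} \leq \lambda^{-\alpha} \mu_i^{\alpha - 1}.
\end{equation}
Plugging this into the expansion gives $\|(\mathcal{C}_{KK}+\lambda I)^{-1/2} u\|^2 \leq \lambda^{-\alpha} \sum_i a_i^2 \mu_i^{\alpha - 1} \leq \lambda^{-\alpha} A_1^2$, and taking the square root completes the proof.

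There is no real obstacle: the only non-mechanical step is identifying $\sum_i a_i^2 \mu_i^{\alpha-1}$ with $\|(I_{1,\alpha,P_K}^*)^{\dagger} u\|_{\mathcal{H}_K^{\alpha}}^2$, which requires carefully chasing the definition of the adjoint and pseudoinverse of the canonical embedding. Once that identification is in place, the scalar interpolation inequality $\mu_i^{1-\alpha}\lambda^{\alpha} \leq \mu_i + \lambda$ does all the work.
\end{proof-sketch}
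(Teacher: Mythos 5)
Your proposal is correct and is essentially the paper's own argument: the paper likewise identifies \Cref{asmp-bounded-input-data} with the bound $\normx{\C_{KK}^{-\frac{1-\alpha}{2}}u}_{\H_K}\leq A_1$ (which is exactly your $\sum_i a_i^2\mu_i^{\alpha-1}\leq A_1^2$) and then bounds the connecting operator norm by $\lambda^{-\alpha/2}$ via the same scalar estimate $\mu^{1-\alpha}\lambda^{\alpha}\leq \mu+\lambda$. Your coordinatewise expansion in the eigenbasis is just the spectral spelling-out of that operator-norm factorization, so no substantive difference.
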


\begin{proof}
By \Cref{asmp-bounded-input-data} we have $|| \C_{KK}^{-\frac{1-\alpha}{2}} u ||_{\H_K} \leq A_1$, so that
\begin{equation}
    \notag
    || \left(\C_{KK}+\lambda I\right)^{-\frac{1}{2}} u || \leq || \left(\C_{KK}+\lambda I\right)^{-\frac{\alpha}{2}}||\cdot || \C_{KK}^{-\frac{1-\alpha}{2}} u || \leq \lambda^{-\frac{\alpha}{2}}\cdot A_1
\end{equation}
as desired.
\end{proof}

\subsection{Concentration inequalities}
\begin{theorem}
\label{bernstein-matrix}
~\cite[Theorem 27]{fischer2020sobolev} 
Let $(\Omega, \mathcal{B}, P)$ be a probability space, $\H$ be a separable Hilbert space and $X: \Omega \rightarrow \mathrm{HS}(H;H)$ be a random variable with self-adjoint values. Furthermore, assume that $||X||_{F} \leq B, P-a.s.$ and $V$ be a positive semi-definite matrix with $\mathbb{E}_{P}\left(X^{2}\right) \preccurlyeq V$, i.e. $V-\mathbb{E}_{P}\left(X^{2}\right)$ is positive semi-definite. Then, for $g(V):=\log \left(2 e \operatorname{tr}(V)||V||^{-1}\right), \tau \geq 1$, and $n \geq 1$, the following concentration inequality is satisfied
$$
P^{n}\left(\left(\omega_{1}, \ldots, \omega_{n}\right) \in \Omega^{n}:||\frac{1}{n} \sum_{i=1}^{n} X\left(\omega_{i}\right)-\mathbb{E}_{P} X(\omega)|| \geq \frac{4 \tau B g(V)}{3 n}+\sqrt{\frac{2 \tau||V|| g(V)}{n}}\right) \leq 2 e^{-\tau} .
$$
\end{theorem}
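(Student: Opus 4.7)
The plan is to derive this as a corollary of the intrinsic-dimension matrix Bernstein machinery (Tropp, Minsker), adapted to operators on the separable Hilbert space $\H$. The ambient dimension of $\H$ may be infinite, but the right-hand side only makes sense when $V$ is trace class; the effective dimension of the problem is then captured by the \emph{intrinsic dimension} $\operatorname{tr}(V)/\|V\|$, and this is exactly what the factor $g(V)=\log(2e\operatorname{tr}(V)\|V\|^{-1})$ encodes.

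First I would reduce to a manageable setting. If $\operatorname{tr}(V)=+\infty$ then $g(V)=+\infty$ and the bound is vacuous, so assume $V$ is trace class. Let $Y_i=X(\omega_i)-\E_P X$; then $\|Y_i\|_{\HS}\leq 2B$, $\E Y_i^2\preceq V$ (after replacing $V$ by $V$ absorbing an $\E X(\E X)^\ast$ term, using $\E X^2\succeq(\E X)^2$), and the $Y_i$ are i.i.d.\ self-adjoint and centered. A truncation/spectral-projection argument reduces the sum $\bar S_n=\frac{1}{n}\sum_i Y_i$ to finite-dimensional blocks $P_d \bar S_n P_d$ on the top $d$ eigenspaces of $V$; one lets $d\to\infty$ at the end and passes the failure probability through monotone/Fatou.

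Second I would execute the Laplace-transform / Lieb / Golden--Thompson estimate: for $\theta>0$,
\begin{equation}
\notag
P\!\left(\|\bar S_n\|\geq t\right) \leq e^{-n\theta t}\,\E\tr\exp\!\bigl(n\theta \bar S_n\bigr) \leq e^{-n\theta t}\,\tr\exp\!\bigl(n\log \E e^{\theta Y_1}\bigr).
\end{equation}
The standard scalar Bernstein bookkeeping gives $\log\E e^{\theta Y_1}\preceq \frac{\theta^2/2}{1-\theta B/3}\,V$ for $0<\theta<3/B$. To get the \emph{intrinsic} rather than ambient dimension, I rewrite $\tr\exp(h(\theta)V)=\tr\bigl(V\cdot V^{-1}\exp(h(\theta)V)\bigr)$ on $\operatorname{range}(V)$ and use $\tr(AB)\leq\|B\|\tr(A)$ to peel off a factor $\tr(V)$ in front of an exponential in $\|V\|$. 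This produces
\begin{equation}
\notag
P\!\left(\|\bar S_n\|\geq t\right)\leq \frac{\operatorname{tr}(V)}{\|V\|}\cdot 2\exp\!\left(-\frac{nt^2/2}{\|V\|+Bt/3}\right),
\end{equation}
after optimizing in $\theta$, with the factor $2$ absorbing the usual symmetrization over $\pm\bar S_n$ needed for the operator norm.

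Finally I would invert the tail. Setting the right-hand side equal to $2e^{-\tau}$ amounts to solving $\frac{nt^2/2}{\|V\|+Bt/3}=\tau+\log(\operatorname{tr}(V)/\|V\|)$; writing the logarithm as part of $g(V)=\log(2e\operatorname{tr}(V)\|V\|^{-1})$ bundles the constants cleanly, and the standard sub-exponential inversion $a x^2\leq b+cx\Rightarrow x\leq\sqrt{b/a}+c/a$ yields
\begin{equation}
\notag
t\leq\sqrt{\frac{2\tau\|V\|g(V)}{n}}+\frac{4\tau B g(V)}{3n},
\end{equation}
which is exactly the stated bound. The genuinely delicate step is the intrinsic-dimension peeling in the second paragraph: in infinite dimensions one must be careful that the operator-monotone manipulations used to extract $\tr(V)/\|V\|$ respect the trace-class structure and that the constants come out with $2e$ rather than a larger absolute constant. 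This is the one place where the proof really differs from the textbook finite-dimensional matrix Bernstein argument, and is the substantive ingredient borrowed from Minsker's intrinsic-dimension refinement.
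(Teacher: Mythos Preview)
The paper does not prove this theorem; it is quoted as an auxiliary result from \cite[Theorem~27]{fischer2020sobolev}, who in turn obtain it from Minsker's infinite-dimensional intrinsic-dimension Bernstein inequality. Your sketch correctly identifies the architecture that underlies that result---Laplace transform plus Lieb's concavity to subadditivize the log-MGF, the scalar Bernstein moment bound $\log\E e^{\theta Y}\preceq \frac{\theta^2/2}{1-\theta L/3}\,\E Y^2$, and the intrinsic-dimension refinement replacing the ambient dimension by $\tr(V)/\|V\|$---and you rightly flag the last of these as the substantive step.

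Two technical cautions about your sketch, though neither is a wrong-approach gap. First, the peeling you write, $\tr\exp(h(\theta)V)=\tr\bigl(V\cdot V^{-1}\exp(h(\theta)V)\bigr)$ followed by $\tr(AB)\leq\|B\|\tr A$, does not work as stated: $\|V^{-1}\exp(h(\theta)V)\|$ blows up on the small eigenvalues of $V$. Minsker's actual device is to replace the crude Markov majorant $\mathbf{1}\{\lambda\geq t\}\leq e^{-\theta t}e^{\theta\lambda}$ by one vanishing at $0$, e.g.\ $e^{-\theta t}(e^{\theta\lambda}-1)_+$, so that the trace picks up $\tr\bigl(e^{h(\theta)V}-I\bigr)$, which \emph{is} bounded by $\tr(V)\cdot\sup_{\lambda>0}\lambda^{-1}(e^{h(\theta)\lambda}-1)$; this is what produces the $2e$ inside $g(V)$. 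Second, after centering you have $\|Y_i\|\leq 2B$, not $B$, so the naive route gives $8\tau Bg(V)/(3n)$ in the linear term; recovering the stated constant $B$ requires feeding the moment bound with $\E X^2\preceq V$ and $\|X\|\leq B$ directly rather than passing through the centered variable, which is how the Fischer--Steinwart formulation arranges the estimate.
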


\begin{theorem}
\label{bernstein}
~\cite[Theorem 26]{fischer2020sobolev} 
Let $(\Omega, \mathcal{B}, P)$ be a probability space, $H$ be a separable Hilbert space, and $\xi: \Omega \rightarrow H$ be a random variable with
$$
\mathbb{E}_{P}||\xi||_{H}^{m} \leq \frac{1}{2} m ! \sigma^{2} L^{m-2}
$$
for all $m \geq 2$. Then, for $\tau \geq 1$ and $n \geq 1$, the following concentration inequality is satisfied
$$
P^{n}\left(\left(\omega_{1}, \ldots, \omega_{n}\right) \in \Omega^{n}:||\frac{1}{n} \sum_{i=1}^{n} \xi\left(\omega_{i}\right)-\mathbb{E}_{P} \xi||_{H}^{2} \geq 32 \frac{\tau^{2}}{n}\left(\sigma^{2}+\frac{L^{2}}{n}\right)\right) \leq 2 e^{-\tau}
$$
\end{theorem}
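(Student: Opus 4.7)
The plan is to reproduce the classical Pinelis--Yurinskii Hilbert-space Bernstein inequality (the form used by Caponnetto--De Vito) in three stages: (i) reduce to a zero-mean summand by centering $\xi$; (ii) establish a subgamma tail bound for the norm of the centered empirical mean in $H$; and (iii) invert that tail to extract the explicit constant $32\tau^2(\sigma^2 + L^2/n)/n$.

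First, I would center: set $\zeta_\omega = \xi(\omega) - \E_P \xi$, giving i.i.d.\ zero-mean $H$-valued random variables. The moment hypothesis propagates with only universal-constant loss, because $\|\zeta_\omega\|_H^m \leq 2^{m-1}(\|\xi(\omega)\|_H^m + \|\E_P\xi\|_H^m)$ and $\|\E_P\xi\|_H \leq (\E_P \|\xi\|_H^2)^{1/2} \leq \sigma$ by Jensen, so $\E_P\|\zeta_\omega\|_H^m$ still satisfies a Bernstein-type bound of the form $\tfrac12 m!\, \tilde\sigma^2\, \tilde L^{m-2}$ with $\tilde\sigma \asymp \sigma$ and $\tilde L \asymp L$.

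Second, and this is where the Hilbert-space geometry enters, I would establish the tail
\begin{equation*}
P^n\!\left(\left\|\tfrac{1}{n}\sum_{i=1}^n \zeta_{\omega_i}\right\|_H \geq t\right) \leq 2\exp\!\left(-\frac{n t^2}{2(\sigma^2 + L t)}\right).
\end{equation*}
The main obstacle is precisely this step: the norm $\|\sum \zeta_i\|_H$ is not a sum of independent scalars, so the scalar Chernoff argument does not transfer directly. The fix, due to Pinelis, exploits $2$-smoothness of $H$ (concretely, the parallelogram identity). I would work with the surrogate $\phi_\lambda(x) = \cosh(\lambda \|x\|_H)$ and show by a martingale computation that $\E[\phi_\lambda(S_k)\mid \mathcal{F}_{k-1}] \leq \phi_\lambda(S_{k-1}) \cdot \E e^{\lambda \|\zeta_k\|_H}$, where $S_k = \sum_{i\leq k}\zeta_i$; the parallelogram identity is exactly what makes the cross term drop out in Hilbert space (any $2$-smooth Banach space would also work, but the identity gives a clean constant). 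Iterating over the $n$ summands, applying the scalar exponential-moment bound $\E e^{\lambda\|\zeta_i\|_H} \leq \exp\!\bigl(\tfrac{\lambda^2\sigma^2/2}{1-L\lambda}\bigr)$ derivable from the Bernstein moment condition by a Taylor expansion, and optimizing in $\lambda$, produces the displayed subgamma tail.

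Third, I would invert the tail: setting the right-hand side equal to $2e^{-\tau}$ leads to $n t^2 = 2\tau(\sigma^2 + L t)$, a quadratic in $t$ whose positive root satisfies $t \leq \tfrac{2\tau L}{n} + \sigma\sqrt{\tfrac{2\tau}{n}}$ via $\sqrt{a+b}\leq\sqrt{a}+\sqrt{b}$. Squaring through $(a+b)^2 \leq 2a^2 + 2b^2$, using $\tau\geq 1$ to absorb remaining constants, and bundling the two error terms gives
\begin{equation*}
t^2 \leq 32\,\tfrac{\tau^2}{n}\bigl(\sigma^2 + L^2/n\bigr),
\end{equation*}
which is exactly the claimed bound; the $2e^{-\tau}$ probability is inherited directly from stage (ii), completing the proof.
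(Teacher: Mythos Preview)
The paper does not prove this theorem at all: it is quoted verbatim as an auxiliary result from \cite[Theorem 26]{fischer2020sobolev} and used as a black box in the variance analysis of \Cref{sec-var}. So there is no ``paper's own proof'' to compare against.

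Your sketch is the standard Pinelis route and is essentially the argument underlying the cited result. A couple of small points worth tightening if you actually carry it out: (i) after centering, the moment constants $\tilde\sigma,\tilde L$ pick up factors of $2$, and you need to track these through the inversion step to land on the clean constant $32$ rather than something larger; (ii) the Pinelis $\cosh$-martingale inequality you invoke requires a bit more than the parallelogram identity in the form stated --- the precise step is that for zero-mean $\zeta$ one has $\E\cosh(\lambda\|x+\zeta\|)\le \cosh(\lambda\|x\|)\,\E\cosh(\lambda\|\zeta\|)$, which follows from the $2$-smoothness bound $\|x+\zeta\|^2+\|x-\zeta\|^2\le 2\|x\|^2+2\|\zeta\|^2$ together with a symmetrization; and (iii) the exponential moment bound you write, $\E e^{\lambda\|\zeta\|}\le\exp\bigl(\tfrac{\lambda^2\sigma^2/2}{1-L\lambda}\bigr)$, is not quite right as stated since $\|\zeta\|$ is nonnegative and $\E\|\zeta\|$ need not vanish --- one should instead bound $\E\cosh(\lambda\|\zeta\|)$ directly from the even-moment hypothesis. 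None of these are fatal, but they are the places where a full write-up would need care.
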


The following theorem shows that the regularized covariance $\C_{KK}+\lambda I$ can be estimated with small error when $\lambda$ is above a certain threshold. Although it is well-known \citep{fischer2020sobolev,talwai2022sobolev}, we still recall it below for completeness.

\begin{theorem}
    \label{conc-covariance}
    Recall that $\C_{KK}=\E_{P_K}u\otimes u$ and $\hat{\C}_{KK}=\frac{1}{N}\sum_{i=1}^N u_i\otimes u_i$ where $u_i\iidsim P_K$. Suppose that \Cref{asmp-bounded-input-data} holds and $N \gtrsim A_1^2\tau g_{\lambda}\lambda^{-\alpha}$, where $g_{\lambda}=\log\left(2e\mathcal{N}_{P_K}(\lambda)\frac{||\C_{KK}||+\lambda}{||\C_{KK}||}\right)$ and $\mathcal{N}_{P_K}(\lambda)=\mathrm{tr}\left( (\C_{KK}+\lambda I)^{-1}\C_{KK}\right)$ is the \textit{effective dimension}, then with probability at least $1-e^{-\tau}$, we have
    \begin{equation}
        \label{conc-covariance-ineq}
        ||\left(\C_{KK}+\lambda I\right)^{-\half}\left(\C_{KK}-\hat{\C}_{KK}\right)\left(\C_{KK}+\lambda I\right)^{-\half}|| \lesssim \sqrt{\frac{A_1^2\tau g_{\lambda}}{N\lambda^{\alpha}}} \leq 0.1.
    \end{equation}
\end{theorem}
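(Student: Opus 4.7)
\begin{proof-sketch}
The plan is to apply the operator Bernstein inequality (Theorem~\ref{bernstein-matrix}) to the whitened, centered summands
\[
X_i \;=\; (\C_{KK}+\lambda I)^{-\half}\bigl(u_i\otimes u_i - \C_{KK}\bigr)(\C_{KK}+\lambda I)^{-\half},\qquad 1\le i\le N,
\]
which are i.i.d., self-adjoint Hilbert--Schmidt operators with $\E X_i = 0$. The empirical average is exactly the quantity whose norm we must control, so it suffices to supply the two ingredients Bernstein requires: a uniform almost-sure norm bound $B$ on each summand, and a trace-class operator $V$ dominating $\E X_i^2$.

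For the almost-sure bound, write $Z_i := (\C_{KK}+\lambda I)^{-\half} u_i$ and note that $X_i + (\C_{KK}+\lambda I)^{-1}\C_{KK} = Z_i \otimes Z_i$, so
\[
\|X_i\| \;\le\; \|Z_i\otimes Z_i\| + \|(\C_{KK}+\lambda I)^{-1}\C_{KK}\| \;\le\; \|Z_i\|^2 + 1.
\]
By Lemma~\ref{uniform-upper}, $\|Z_i\|^2 \le A_1^2 \lambda^{-\alpha}$ almost surely under $P_K$. Hence we may take $B \lesssim A_1^2\lambda^{-\alpha}$ (the $+1$ is absorbed since $\lambda^{-\alpha}\ge 1$ in the regime of interest, and otherwise the statement is trivial). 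For the variance proxy, observe the standard trick
\[
\E X_i^2 \;\preceq\; \E\bigl[(Z_i\otimes Z_i)^2\bigr] \;=\; \E\bigl[\|Z_i\|^2\, Z_i\otimes Z_i\bigr] \;\preceq\; A_1^2\lambda^{-\alpha}\,\E\,Z_i\otimes Z_i \;=\; A_1^2\lambda^{-\alpha}\,(\C_{KK}+\lambda I)^{-1}\C_{KK},
\]
using $Z_i\otimes Z_i \succeq 0$ and the uniform bound on $\|Z_i\|^2$. Thus the choice $V := A_1^2\lambda^{-\alpha}(\C_{KK}+\lambda I)^{-1}\C_{KK}$ works, giving $\|V\| \lesssim A_1^2\lambda^{-\alpha}$ and $\tr(V) = A_1^2\lambda^{-\alpha}\,\mathcal{N}_{P_K}(\lambda)$. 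The log factor in Theorem~\ref{bernstein-matrix} then becomes $g(V) = \log(2e\,\tr(V)/\|V\|)$, which up to absolute constants equals the $g_\lambda$ defined in the statement (after folding in the $\|\C_{KK}\|/(\|\C_{KK}\|+\lambda)$ factor that arises from bounding $\|V\|$ slightly more carefully by $A_1^2\lambda^{-\alpha}\|\C_{KK}\|/(\|\C_{KK}\|+\lambda)$).

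Plugging $B$, $\|V\|$, and $\tr(V)$ into Theorem~\ref{bernstein-matrix} yields, with probability at least $1-2e^{-\tau}$,
\[
\Bigl\|\tfrac{1}{N}\sum_{i=1}^N X_i\Bigr\| \;\lesssim\; \frac{\tau A_1^2\lambda^{-\alpha} g_\lambda}{N} + \sqrt{\frac{\tau A_1^2\lambda^{-\alpha} g_\lambda}{N}}.
\]
Under the sample-size condition $N \gtrsim A_1^2\tau g_\lambda \lambda^{-\alpha}$, the linear-in-$1/N$ term is dominated by the square-root term, and the square-root term is itself at most a small absolute constant (tunable to $0.1$ by choosing the hidden constant in $\gtrsim$ large enough). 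This gives both inequalities in \eqref{conc-covariance-ineq}. The main technical step is the variance domination $\E X_i^2 \preceq A_1^2\lambda^{-\alpha}(\C_{KK}+\lambda I)^{-1}\C_{KK}$, because this is what produces the effective dimension $\mathcal{N}_{P_K}(\lambda)$ in the log factor rather than an ambient dimension; everything else is bookkeeping around the operator Bernstein bound.
\end{proof-sketch}
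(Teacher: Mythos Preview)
Your proposal is correct and follows essentially the same approach as the paper's proof: both apply the operator Bernstein inequality (Theorem~\ref{bernstein-matrix}) to the whitened rank-one summands $(\C_{KK}+\lambda I)^{-\half}(u_i\otimes u_i)(\C_{KK}+\lambda I)^{-\half}$, bound each summand in norm by $A_1^2\lambda^{-\alpha}$ via Lemma~\ref{uniform-upper}, and take $V \propto \lambda^{-\alpha}(\C_{KK}+\lambda I)^{-1}\C_{KK}$ so that $\tr(V)$ produces the effective dimension $\mathcal{N}_{P_K}(\lambda)$ inside $g_\lambda$. If anything, your derivation of the variance domination $\E X_i^2 \preceq A_1^2\lambda^{-\alpha}(\C_{KK}+\lambda I)^{-1}\C_{KK}$ is spelled out more explicitly than in the paper.
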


\begin{proof}
    Let $X(u) = \left(\C_{KK}+\lambda I\right)^{-\half} u\otimes u \left(\C_{KK}+\lambda I\right)^{-\half}$ where $u\in\H_K$, then the LHS of \Cref{conc-covariance-ineq} can be expressed as $||\frac{1}{N}\sum_{i=1}^N X(u_i)-\E_{u\sim P_K} X(u)||$. We hope to apply \Cref{bernstein-matrix} and start with verifying the assumptions.

    Since $\E_{P_K}X = \left(\C_{KK}+\lambda I\right)^{-\half} \C_{KK} \left(\C_{KK}+\lambda I\right)^{-\half} $ and $||X(u)|| = ||X(u)||_F = ||\left(\C_{KK}+\lambda I\right)^{-\half} u||^2 \leq A_1^2||\left(\C_{KK}+\lambda I\right)^{-\frac{1-\alpha}{2}}||^2 \lesssim A_1^2\lambda^{-\alpha}$, so that there exists $V = \O\left(\lambda^{-\alpha} \left(\C_{KK}+\lambda I\right)^{-\half} \C_{KK} \left(\C_{KK}+\lambda I\right)^{-\half}\right)$ such that $\E_{P_K}X^2  \preccurlyeq V$. It's easy to see that $||V||\lesssim\lambda^{-\alpha}$ and $\mathrm{tr}(V)\lesssim \mathcal{N}_{P_K}(\lambda)$. The conclusion then follows from \Cref{bernstein-matrix} with $B = \O(\lambda^{-\alpha})$ and $g(V)=g_{\lambda}$.
\end{proof}

\begin{corollary}
    \label{conc-covariance-cor}
    Under the notations and assumptions of \Cref{conc-covariance}, there exists a constant $C_1>0$ with probability $\geq 1-e^{-\tau}$ we have
    \begin{equation}
        \begin{aligned}
            ||\left(\C_{KK}+\lambda I\right)^{\half}\left(\hat{\C}_{KK}+\lambda I\right)^{-1}\left(\C_{KK}+\lambda I\right)^{\half}|| \leq C_1 .
        \end{aligned}
    \end{equation}
\end{corollary}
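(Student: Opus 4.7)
The plan is to derive this corollary as an immediate consequence of Theorem~\ref{conc-covariance} through a standard operator-perturbation argument. First I would introduce the self-adjoint operator $T := (\C_{KK}+\lambda I)^{-\half}(\C_{KK}-\hat{\C}_{KK})(\C_{KK}+\lambda I)^{-\half}$ and observe that Theorem~\ref{conc-covariance} gives $\normx{T}\leq 0.1$ with probability at least $1-e^{-\tau}$ under the stated lower bound on $N$.

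Next, I would invoke the elementary sandwich identity
\begin{equation*}
(\C_{KK}+\lambda I)^{-\half}(\hat{\C}_{KK}+\lambda I)(\C_{KK}+\lambda I)^{-\half} = I - T,
\end{equation*}
which is immediate from $(\hat{\C}_{KK}+\lambda I) = (\C_{KK}+\lambda I) - (\C_{KK}-\hat{\C}_{KK})$. Since $T$ is self-adjoint with $\normx{T}\leq 0.1$, the spectral theorem yields the operator inequalities $0.9\, I \preccurlyeq I-T \preccurlyeq 1.1\, I$; in particular $I-T$ is boundedly invertible with $\normx{(I-T)^{-1}} \leq 10/9$, by a Neumann-series argument.

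Finally, inverting both sides of the sandwich identity yields
\begin{equation*}
(\C_{KK}+\lambda I)^{\half}(\hat{\C}_{KK}+\lambda I)^{-1}(\C_{KK}+\lambda I)^{\half} = (I-T)^{-1},
\end{equation*}
whose operator norm is therefore bounded by $10/9$, establishing the corollary with, say, $C_1 = 10/9$. The argument is essentially mechanical: no real obstacle arises since every step is either an algebraic manipulation or a direct application of Theorem~\ref{conc-covariance}. The only point of care is ensuring that the sandwich identity and the conclusion hold on the same high-probability event supplied by that theorem, which is automatic because both statements are deterministic consequences of the realized sample $\{u_i\}_{i=1}^N$ once $\normx{T}\leq 0.1$ has been secured.
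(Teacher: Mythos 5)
Your proposal is correct and follows essentially the same route as the paper: the paper's proof likewise rewrites $\left(\C_{KK}+\lambda I\right)^{\half}\left(\hat{\C}_{KK}+\lambda I\right)^{-1}\left(\C_{KK}+\lambda I\right)^{\half}$ as $\left(I-T\right)^{-1}$ with $T=(\C_{KK}+\lambda I)^{-\half}(\C_{KK}-\hat{\C}_{KK})(\C_{KK}+\lambda I)^{-\half}$ and bounds it via \Cref{conc-covariance} and a Neumann-series argument (the paper states the constant as $2$ rather than your sharper $10/9$, which is immaterial).
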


\begin{proof}
   By \Cref{conc-covariance} we have
    \begin{equation}
        \notag
        \begin{aligned}
            &\quad ||\left(\C_{KK}+\lambda I\right)^{\half}\left(\hat{\C}_{KK}-\C_{KK}\right)^{-1}\left(\C_{KK}+\lambda I\right)^{\half}|| \\
            &= || \left(I - (\C_{KK}+\lambda I)^{-\half} (\C_{KK}-\hat{\C}_{KK}) (\C_{KK}+\lambda I)^{-\half}\right)^{-1}|| \\
            &\leq 2
        \end{aligned}
    \end{equation}
    with probability $\geq 1-e^{-\lambda}$, as desired.
\end{proof}

\end{document}